\ifdefined\XeTeXversion\else\ifdefined\pdfoutput\pdfoutput=1\fi\fi
\documentclass{article}
\PassOptionsToPackage{sort&compress}{natbib}
\usepackage[preprint]{tmlr}
\usepackage[utf8]{inputenc}
\usepackage[T1]{fontenc}
\usepackage{amsmath,amssymb,amsthm}
\usepackage{bm}
\usepackage{graphicx}
\usepackage{enumitem}
\usepackage{algorithm,algpseudocode}
\usepackage{tikz}
\usepackage{xcolor}
\usepackage{booktabs,threeparttable,tabularx}
\usepackage{microtype}
\setcitestyle{numbers,square,comma}
\usepackage[hidelinks]{hyperref}
\newcommand{\edt}[1]{#1}
\newcommand{\Halmos}{\ifmmode\quad\square\else\unskip\nobreak\hfill\ensuremath{\square}\fi}
\newtheorem{theorem}{Theorem}
\newtheorem{lemma}{Lemma}
\newtheorem{proposition}{Proposition}
\newtheorem{corollary}{Corollary}
\newtheorem{assumption}{Assumption}
\theoremstyle{definition}
\newtheorem{definition}{Definition}
\newcommand{\keywords}[1]{\par\noindent\textbf{Keywords: }#1\par}
\title{\raggedright Patient-Centered Treatment Planning for Chronic\linebreak Multimorbidity: A Hierarchical Reinforcement Learning Framework for Preference Modeling}
\author{%
Nafiseh Payani, Soham Das, G. Anthony Wilson, and Anahita Khojandi\thanks{%
Nafiseh Payani and Soham Das are with the Department of Industrial and Systems Engineering at the University of Tennessee, Knoxville, TN, USA (Email: \texttt{npayani@vols.utk.edu}; \texttt{sdas43@tennessee.edu}).
G. Anthony Wilson is with the Department of Family Medicine, University of Tennessee Health Science Center College of Medicine, Knoxville, at the University of Tennessee Medical Center, Knoxville, TN, USA (Email: \texttt{gwilson@utmck.edu}).
Anahita Khojandi is with the Department of Industrial Engineering at the University of Arkansas, Fayetteville, AR, USA (Email: \texttt{akhojandi@uark.edu}).%
}
}
\date{}
\begin{document}
\raggedbottom
\maketitle
\begin{abstract}
Patient preference, defined as a patient's demonstrated willingness and capacity to adhere to clinical recommendations, is a primary
determinant of therapeutic effect yet remains structurally absent
from existing computational treatment planning models. %
We address this gap by presenting patient-centered %
factored-action hierarchical option-critic (FAHOC), a   hierarchical reinforcement learning (HRL) framework that jointly learns high-level options corresponding to %
therapeutic strategies and factored intra-option
policies that decompose the joint action space into disease- and intervention- specific subcomponents,
while imposing %
a cooperation-aware action masking mechanism. 
This enables  structured exploration, improved credit assignment across hierarchy levels, and more interpretable decision pathways%
, while enforcing patients' preferences%
. 
Formal guarantees establish that cooperative patients achieve %
higher optimal expected health outcomes than non-cooperative patients, and that the factored Q-function approximation error is provably bounded%
. The framework is evaluated  using longitudinal data collected from approximately 50,000  %
comorbid hypertension and type~2 diabetes mellitus patients 
from five hospitals in the Southeast U.S. 
FAHOC achieves%
a quality-adjusted life year expectancy equivalent improvement of $0.669$ (vs $-0.133$ observed clinician practice%
), correctly identifies %
cooperative patients in $95.9\%$ of cases %
and never violates a patient's %
preference in held-out test%
, demonstrating that HRL with explicit preference constraints can support preference-consistent, clinically safe decision-making in multimorbidity management.

\end{abstract}
\keywords{Patient Preference \textperiodcentered\ Lifestyle Intervention \textperiodcentered\ Chronic Multimorbidity \textperiodcentered\ Hierarchical Reinforcement Learning \textperiodcentered\ Factored Action \textperiodcentered\ Option-Critic}

\section{Introduction}
\label{sec:intro}

A recommendation that a patient will not follow is not a treatment
plan; it is a missed opportunity. Yet across chronic disease
management, patient preference, defined as a patient's demonstrated
willingness and capacity to adhere to clinical recommendations,
remains structurally absent from the computational models designed to
support \edt{treatment planning}. A systematic review and meta-analysis of 178 studies found
that the pooled prevalence of medication non-adherence among people
living with multimorbidity was 42.6\%, with rates ranging from 7.0\%
to 83.5\% across conditions and measurement methods
\citep{foley2021prevalence}. A recommendation that a patient does not
execute consumes clinical resources, may displace more feasible
interventions, risks poor long-term biomarker control, and is
associated with increased mortality \citep{zolnierek2009physician}.
Formalizing patient preference as a structural constraint within a
sequential decision-making framework for treatment planning is the
central problem addressed in this study.

Chronic multimorbidity %
affects 37\% of adults globally, exceeding
50\% among adults aged 60 and older \citep{chowdhury2023global,nicholson2019primarycare}. Each additional condition expands treatment plans, introduces drug-drug and drug-disease interaction risks, and generates tension between various single guidelines \citep{pop2025pillars}. Hypertension (HTN)  and type 2 diabetes mellitus (T2DM) together form one of the most prevalent multimorbidities,  with 12\% and 48\%  of U.S.\ adults living with diabetes and HTN, respectively. Approximately 82\% of T2DM patients are hypertensive, of whom 66\% are overweight or obese \citep{iglay2016prevalence}. The coexistence of these conditions doubles all-cause mortality risk. The dyad prevalence in the U.S.\ adult population has doubled from 6\% to 12\% between 1999 and 2018 \citep{yuan2025associations}. Managing  multimorbidities requires simultaneously addressing multiple coexisting conditions while accommodating substantial heterogeneity in patient characteristics, including age, kidney function, obesity, medication tolerance, and willingness to engage with lifestyle modification strategies.

Standard guidelines are generally written for single conditions, with independent thresholds that ignore therapeutic trade-offs when multiple targets must be managed simultaneously \citep{care202511, whelton2018guideline}. Clinicians compensate through informal, hierarchical reasoning over high-dimensional patient states, and adjusting one or more of multimorbid patients' medications based on patient response to prior regimens, a process that resists standardization and contributes to inconsistent application of  guidelines \citep{norman2017clinical, patel2000clinical, wang2023barriers, ADA_SoC_Revisions_2025, ada2025cvd}. Non-adherence rates of 42.6\% in multimorbid populations \citep{foley2021prevalence} further demonstrate that population-level prescriptions overlook patient-specific feasibility; neither guidelines nor clinical reasoning adequately incorporate patient preferences to address  their needs.
Artificial intelligence (AI) methods, and reinforcement learning (RL) in particular, have been proposed as a principled basis for clinical decision support in chronic disease management, offering the capacity to learn individualized, long-horizon treatment policies from observational data. However, existing approaches reproduce, and in some cases amplify, the same structural limitations that affect guideline-based care. First, models trained on electronic health records (EHR) data optimize biomarker trajectories under the implicit assumption that every recommended treatment will be executed, producing policies that recommend %
pharmacological interventions to patients with documented histories of non-engagement. Second, single-condition models address comorbidities (e.g., T2DM and HTN) in isolation \citep{oh2022precision}, ignoring the coupling between treatments that is the defining clinical challenge of the comorbid case. Third, models addressing comorbidity typically employ monolithic policies over a joint action space \citep{zheng2021personalized} and fail to reflect the hierarchical structure of clinical reasoning, namely the distinction between selecting a therapeutic strategy and executing pharmacological
adjustments within it \citep{patel2000clinical, norman2017clinical}.
Fourth, most approaches rely on standard algorithms susceptible to overestimation bias under the distributional shift inherent in offline learning %
\citep{kumar2020conservative}.
Three main contributions in this study addresses these limitations. First, instead of incorporating patient preference as a soft reward signal, we enforce 
it as a structural feasibility constraint. As patient preferences are not consistently and completely recorded in EHR and clinical notes, we infer latent patient willingness and capacity to engage in lifestyle modifications using longitudinal BMI trajectories, then apply this inference with cooperation-aware action masking during pre-processing and training. As a result, the policy cannot recommend interventions the patient has demonstrated unwilling or unable  to follow. Cooperation value dominance theorem~\ref{alg:cooperation} shows attainable value can be higher for cooperative patients. Second, we introduce the Factored-Action Hierarchical Option-Critic (FAHOC) architecture. Its organization reflects how clinicians reason and decide how to modify interventions. High-level options represent documented therapeutic strategies, single- or multi-target management in this study. Factored intra-option policies then split the joint action space into disease- and intervention-specific subcomponents which provides the slow-acting interventions such as lifestyle modifications with independent learning signal and prevent it being faded out by fast-acting adjustments such as pharmaceutical therapies. Error bounds in Section~\ref{alg:factored} formally provide measurable limits on Q-function approximation that the design choice adds to the estimates. Third, in offline setting, we train and evaluate FAHOC through triangulation using off-policy evaluation, reward independent guideline-concordance surface, and stratification. We apply these methods to a multi-hospital EHR cohort with approximately 50,000 multi-morbid patients.

Figure \ref{fig:HRL_healthcare_protocol} provides a high-level overview of our proposed model.
First, based on the objective of the model and possible interventions, we develop patient preference and clinician recommended action inference algorithms to complete the EHR where required records may not be captured. Next, we formulate the semi-Markov decision process (SMDP) (specifying state, action, reward, transitions, and options), which is then used to guide the selection of dataset and cohort. \edt{We adopt an SMDP rather than a standard MDP because clinical management operates on two time scales: a therapeutic strategy (e.g., a single- vs.\ multi-target focus), once adopted, typically remains in effect across several successive encounters, whereas concrete medication adjustments are made visit by visit within it. An SMDP captures exactly this structure by allowing temporally extended actions (options) whose durations are random and state-dependent; a standard MDP would instead force the strategy choice to be re-made from scratch at every visit, discarding the temporal commitment that characterizes documented clinical behavior.
} Subsequently, data extraction with temporal regulation is applied to aggregate encounters of varying purposes into standardized time intervals prior to entry into the leakage-free preprocessing pipeline. %

Training stage, iterates over hyperparameter, monitor accuracy, high-level loss, low-level loss, termination loss, and termination probability for pathological dynamics (noisy, non-decreasing, exploding, or early-plateaued) and validate these metrics on the validation set to guide configuration acceptance. After accepting a model, we evaluate it on the held-out patient set with offline policy evaluation (OPE), agreement with clinician action, sample patient trajectories, and the $\Delta$-concordance surface. The evaluation report then is reviewed by a field expert or clinician. Nodes marked with field-expert icon can prompt expert review and input, with reference to the field literature and guidelines, along with consideration of specific limitations of accessible EHR. 

\begin{figure}
  \centering
  \includegraphics[width=1\linewidth]{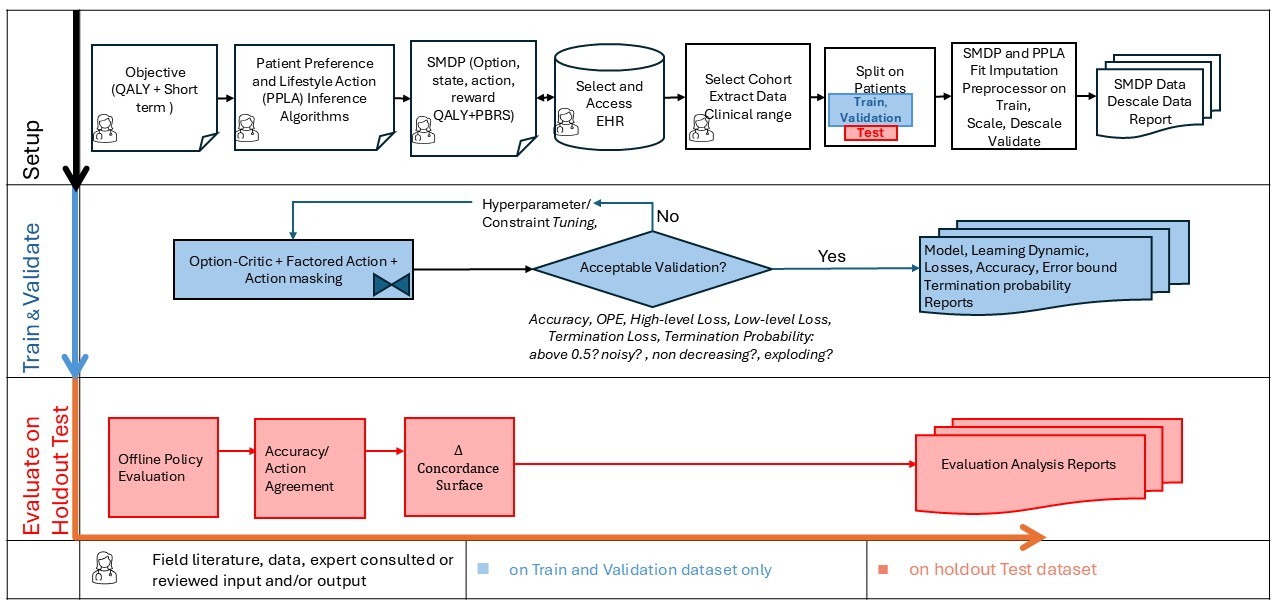}
  \caption{The proposed patient--preference--centered factored hierarchical offline RL framework}
  \label{fig:HRL_healthcare_protocol}
\end{figure}
\section{Literature Review}
\label{sec:Lit_review}

\textbf{Patient preference and  adherence.} 
 A review by \citet{Vermeire2001} identified patient
attitudes, beliefs, and prior engagement behavior as the strongest
predictors of future adherence. A randomized trial further
demonstrated that structured lifestyle intervention produces
significantly greater weight loss, BMI reduction, and improvements in
glycemic and systolic blood pressure control among adherent patients
over four years, which underscores the clinical value of recommending
such programs only to patients willing and able to engage
\citep{look2010long}.

Despite broad recognition that patient preferences and capacity are
central to real-world clinical decision-making, existing RL-based
systems almost universally assume perfect alignment with patient
preference and adherence. \citet{Macri2023} established that
patient autonomy, treatment burden, and matching personal health goals
must be considered alongside clinical efficacy when AI systems
generate recommendations, while \citet{Asad2025} concluded that
integrating patient values into AI-driven tools is essential for
ethical implementation. Most recently, \citet{templin2026participatory}
introduced an RL framework incorporating divergent stakeholder
preferences by learning a mixing weight between community-facing and
provider-facing prediction signals, demonstrating improved
acceptability and real-world alignment of recommendations. Integrating
patient preference thus remains an open and largely unaddressed
challenge in clinical RL.

\noindent \textbf{Multimorbidity Treatment Planning.}   
Machine learning approaches to multimorbidity have largely focused on disease clustering, 
risk stratification, and prediction of new condition onset, rather than sequential treatment 
planning \citep{anthonimuthu2025application, hassaine2020untangling}. Supervised learning models 
applied to EHR data have been used to identify multimorbidity patterns \citep{uddin2023comorbidity}, 
and develop multimorbidity frailty indices 
\citep{peng2026comparisons}, yet these models treat the problem as a static prediction task rather 
than a dynamic decision-making problem \citep{yao2025machine}. Unsupervised clustering methods have identified clinically meaningful multimorbidity subgroups 
\citep{prados2014multimorbidity, marengoni2025multimorbidity} but stop short of generating treatment recommendations \citep{beaney2024identifying}.

The extension to sequential treatment optimization under multimorbidity remains sparse. \citet{zheng2021personalized} developed a personalized RL agent for diabetes management in the presence of comorbid conditions.  \citet{mason2014optimizing} formulated an MDP to maximize quality-adjusted life years (QALYs) for cardiovascular risk reduction in T2DM patients. Their  methods treat every possible combination as a distinct action over a joint action space. Although powerful, such a flat policy  does not scale to the combinatorial complexity of multiple concurrent treatment streams. \citet{Basu2025} applied RL to multidisciplinary care coordination for patients with complex comorbidities and demonstrated reductions in acute care events relative to standard practice. However, the approach neither formalized patient preferences nor leveraged a hierarchical decision framework to manage the complexity of treatment planning. A multi-agent RL framework for chronic disease risk prediction and treatment personalization was proposed \citep{ahmad2026personalized}, structuring individual agents per condition, yet this approach does not model the pharmacological coupling between conditions or the behavioral feasibility of recommended interventions. In these studies, a shared limitation is the absence of any mechanism that formalizes patient preference as a constraint on the treatment planning problem.

Among all multimorbid dyads, HTN-T2DM presents a particularly tractable and consequential modeling target: its biomarkers are objectively measurable and routinely recorded in EHRs, the tension between glycemic and blood pressure targets under a shared pharmacological budget is well-documented clinically
\citep{care202511, whelton2018guideline}, and the dyad's prevalence
and mortality burden establish clear clinical stakes 
\citep{yuan2025associations, long2011comorbidities}. No existing computational framework, to our knowledge, addresses this dyad under a hierarchical policy structure with patient preferences formalized as a structural constraint.

\noindent \textbf{Offline RL and Hierarchical Reinforcement Learning.}
 Since direct experimentation on patients is ethically infeasible, learning must occur on retrospective data \citep{Levine2020}. This offline RL setting introduces a challenge: standard Q-learning may assign inflated Q-values to out-of-distribution actions, not represented in historical data, as the agent cannot discover that such actions lead to poor outcomes \citep{fujimoto2019off, kumar2020conservative}, a problem especially acute in clinical datasets where rare treatment combinations receive high estimates. %
Conservative Q-Learning (CQL) addresses this by adding a regularization term that minimizes Q-values across all actions while maximizing them under the observed data distribution 
\citep{kumar2020conservative}. The double deep Q-network (DDQN) architecture further mitigates maximization bias by decoupling action selection from evaluation via separate online and target networks \citep{VanHasselt2016}, a correction particularly important in
sparse-reward clinical environments.

A limitation of RL agents in clinical settings is the conflation of all decision-making into a single monolithic policy over a large combinatorial action space \citep{sutton1999options, barto2003recent}.
Experienced clinicians naturally reason hierarchically by selecting a therapeutic strategy at a high level (e.g., prioritizing glycemic control when A1C is elevated) and executing specific pharmacological adjustments within that strategy at a lower level \citep{norman2017clinical, patel2000clinical}. Hierarchical reinforcement learning (HRL) formalizes this structure by decomposing the policy into a high-level controller that selects temporally extended sub-policies and a low-level controller for primitive actions \citep{Hutsebaut2022}. \citet{sutton1999options} extended the standard RL action space via temporal abstraction, enabling the agent to commit to a strategy for multiple time steps rather than re-evaluating every action independently. \citet{barto2003recent} demonstrated that hierarchical decomposition improves both sample efficiency and exploration in large state-action spaces, a property especially valuable in EHR datasets where certain treatment combinations are sparsely represented.%

The option-critic architecture addressed a key limitation of earlier HRL: reliance on manually designed sub-goal structures \citep{bacon2017option}. By simultaneously learning intra-option policies, termination functions, and the high-level policy via policy gradient methods, the option-critic allows clinically meaningful options to be learned directly from data. In our study, the two learned options correspond to a Single-Target mode concentrating on the most out-of-control dimension and a Multi-Target mode jointly optimizing glycemic and blood pressure control, consistent with documented clinical behavior in comorbid patients \citep{Petrie2018}.

Empirical validation of hierarchical policy in healthcare is provided by \citet{Zhong2022}, whose two-level HRL agent for automated clinical dialogue outperformed DQN baselines on both diagnostic accuracy and symptom query efficiency.
\citet{tan2024advancing} hierarchical multi-agent RL architecture with organ-specific agents and inter-agent communication, showeing that decomposed organ-level policies can coordinate to optimize global patient outcomes. However, existing HRL approaches treat the action space within each option as monolithic, foregoing the additional sample efficiency and interpretability gains available when the joint action space can be decomposed into clinically distinct sub-dimensions, a gap that factored action representations %
can fill.

\citet{tang2022leveraging} addressed the combinatorial action space
challenge directly by proposing linear Q-function decomposition for
factored action spaces in offline healthcare RL. Rather than treating
every treatment combination as a distinct action, factored
decomposition assigns independent Q-heads to each treatment sub-dimension and computes the joint value as a weighted sum, improving sample efficiency in sparsely explored regions without compromising policy optimality, an advantage directly applicable to
the three-dimensional action space used in this study. %
\section{Model formulation}\label{sec:methodology}
\label{subsec:mdp}
We develop a holistic treatment-planning model for patients with HTN alone, T2DM alone, or the co-occurrence of both conditions. Capturing this structure requires a framework that can represent temporally extended, treatment-regimen-specific strategies operating across multiple encounters, rather than modeling clinical management as a single-layer (i.e., traditional flat) sequence of visit-level decisions. 
We therefore formulate the sequential treatment planning  problem as a semi-Markov decision process (SMDP), defined by the tuple
$(\mathcal{S},\mathcal{A},\mathcal{O},P,r,\gamma)$, 
where $\mathcal{S}$ is the state space,
$\mathcal{A}$ is the factored action space,
$\mathcal{O}$ is a finite set of options (temporally extended actions),
$P:\mathcal{S}\times\mathcal{A}\to\Delta(\mathcal{S})$ is the transition kernel,
where $\Delta(\mathcal{S})$ denotes the set of probability distributions over
$\mathcal{S}$,
$r:\mathcal{S}\times\mathcal{A}\to\mathbb{R}$ is the reward function, and $\gamma\in(0,1)$ is the discount factor. The sub-tuple $(\mathcal{S},\mathcal{A},P,r,\gamma)$ alone specifies only the primitive, visit-level decision process and is formally identical to a standard MDP; it is the option set $\mathcal{O}$, together with the option-induced multi-visit reward and transition models in \ref{eq:option_reward}--\ref{eq:option_kernel} below, that gives the process its semi-Markov structure.

\textbf{State Representation ($\mathcal{S}$).}
\label{subsec:state}
Effective treatment decisions depend on an array of
biomarkers. A comprehensive approach from a clinician should ideally simultaneously consider
the patient's metabolic trajectory, kidney function, body
composition, prior medication burden, and preferences and adherence likelihood. To reflect this, based on expert feedback, we  selected an 18-tuple feature vector to represent the patient state $s\in\mathcal{S}$:

\noindent $s_t = (SBP_t, A1C_t, BMI_t, eGFR_t, Age_t, b(s)_t, I^{\mathrm{T2DM}}_{t-1}, I^{\mathrm{HTN}}_{t-1}, c, \textit{no-visit}, G., R., E.) $. The state features are summarized and described in Table \ref{tab:state_features}.

\begin{table}[h]
\centering
\setlength{\abovecaptionskip}{2pt}
\setlength{\belowcaptionskip}{0pt}
\caption{State features in encounter $t$ ($|\mathcal{S}|=18$)}
\label{tab:state_features}
\small
\renewcommand{\arraystretch}{1.1}
\setlength{\tabcolsep}{5pt}
\begin{tabularx}{\columnwidth}{llX}
\toprule
\textbf{Type} & \textbf{Features} & \textbf{Description} \\
\midrule
Continuous & $SBP_t$; $A1C_t$; $BMI_t$; $eGFR_t$; $Age_t$
  & Systolic blood pressure (mmHg); glycated hemoglobin (\%);
    body mass index (kg/m$^2$); kidney function (mL/min/1.73\,m$^2$);
    age in years (adjusted based on day of the visit) \\
Ordinal & $b(s)_t$; $I^{\mathrm{T2DM}}_{t-1}$; $I^{\mathrm{HTN}}_{t-1}$
  & BMI category (0: Normal; 1: Overweight; 2: Obese); T2DM medication intensity at $t{-}1$;
    HTN medication intensity at $t{-}1$ \\
Binary & $c$; no-visit flag at visit $t$
  & BMI cooperation type ($c=0$ : non-cooperative; $c=1$: cooperative), gap-visit indicator flags missed encounters \\
OHE & G., R., E.
  & Gender; Race; Ethnicity with 2, 2, and 3 binary indicators respectively,  time independent features \\
\bottomrule
\end{tabularx}
\end{table}
The medication intensity features $I^{\mathrm{T2DM}}_{t-1}$
and $I^{\mathrm{HTN}}_{t-1}$ encode the number of active drug classes at the previous visit ($0$=none, $1$=single-class, $2$=multi-class) using medications presented in %
~\ref{appendix:dataprep},
capturing the clinician's prior prescribing context that the patient is taking at the time of encounter $t$ rather than the current regimen adjustment, consistent with the doctor-at-visit-$t$ decision semantics.

A patient is labeled cooperative ($c = 1$) if their BMI trajectory exhibits a negative slope estimated via least-squares linear regression ($\geq 3$ encounters), a net decrease between the first and last recorded value (two encounters), or a mean BMI already within the normal range ($< 25\,$) with no increase exceeding $1\,\text{kg/m}^2$; otherwise $c = 0$. This label is computed after the train-only imputation step to prevent data leakage.~\ref{appendix:cooperation} provides pseudocode of patient preference (cooperation in BMI reduction). %
The no-visit flag indicates that the patient completely missed that visit at time $t$.  
 \textbf{Factored actions ($\mathcal{A}$).}
In a standard MDP, the agent selects a single action from a monolithic action set at each decision step. Here, the clinician simultaneously makes three semi-independent sub-decisions at each visit: (i) a T2DM medication intensity
adjustment $a_{\mathrm{T2DM},t} \in \{-1, 0, +1\}$, (ii) an HTN medication intensity
adjustment $a_{\mathrm{HTN},t} \in \{-1, 0, +1\}$, and 
(iii) a BMI intervention decision $a_{\mathrm{BMI},t} \in \{0, 1\}$.
Since BMI intervention decisions are not explicitly recorded in most
EHRs, we develop an inference procedure that recovers the per-transition BMI action from
longitudinal BMI trajectories (Algorithms~\ref{alg:cooperation}, and
 \ref{alg:bmi_action}). In the action inference, $a_{\mathrm{BMI},t} = 1$ is assigned only to cooperative patients ($c=1$) who present as overweight or obese at visit $t$, ensuring consistency between the inferred action and the cooperation label across all training transitions.\\
A factored action is the tuple
$\mathbf{a} = \bigl(a_{\mathrm{T2DM}},\; a_{\mathrm{HTN}},\; a_{\mathrm{BMI}}\bigr)
\;\in\; \mathcal{A} = \{-1,0,1\}^{2} \times \{0,1\},$
yielding $|\mathcal{A}| = 18$ joint actions. Factored actions exploit partially separable structure of the three intervention domains: each component admits a
lower-dimensional Q-function, reducing the function-approximation burden and
permitting per-component credit assignment. 
\textbf{Options and clinical strategies ($\mathcal{O}$).}
An option $\omega \in \mathcal{O}$ is a temporally extended action defined by a triple
$(\mathcal{I}_{\omega},\, \pi_{\omega},\, \beta_{\omega})$: an initiation set
$\mathcal{I}_{\omega} \subseteq \mathcal{S}$, an intra-option policy
$\pi_{\omega} : \mathcal{S} \to \mathcal{A}$ that selects primitive factored actions while the option is active, and a termination function
$\beta_{\omega} : \mathcal{S} \to [0,1]$ that stochastically decides when the option
ends~\citep{sutton1999options,bacon2017option}.
In our empirical setting $\pi_{\omega}$ is not represented explicitly but is obtained greedily from the factored intra-option value function,
  $\pi_{\omega}(s) \;=\; \arg\max_{a \in \mathcal{A}^{c}(s)} Q(s,\omega,a)$,
  \label{eq:induced_policy}
where $\mathcal{A}^{c}(s)$ is the clinically admissible action set at $s$. 
 We define $|\mathcal{O}| = 2$ options reflecting two clinically meaningful treatment regimes:
\begin{itemize}
  \item $\omega_{0}$ (Single-Target): medications for at most one condition are actively adjusted at any visit; %
  \item $\omega_{1}$ (Multi-Target): both T2DM and HTN are simultaneously under treatment management.
\end{itemize}
The high-level policy selects which option to pursue across multiple encounters; the low-level (intra-option) policy then selects factored actions visit-by-visit within the active option. Options thus encode what treatment strategy is in effect, whereas factored actions encode what specific medication adjustment is made at a given visit. 
Each option commits the agent to a sub-policy for multiple time steps, extending the underlying MDP to an SMDP~\citep{sutton1999options}.
\edt{Assuming option $\omega$ is initiated in state $s$ at encounter $t$ and terminates after duration $k\geq 1$, where $k$ is jointly determined by the termination function $\beta_{\omega}$ and the primitive kernel $P$ along the realized trajectory. Following \citet{sutton1999options}, the option-level reward and (discounted) transition models are}
\begin{align}
  \edt{r(s,\omega)} &:=
  \edt{\mathbb{E}\!\left[\textstyle\sum_{j=1}^{k}\gamma^{\,j-1} r_{t+j}
  \;\Big|\; s_{t}=s,\ \omega\right]}
  \label{eq:option_reward}\\
  \edt{p(s'\mid s,\omega)} &:=
  \edt{\sum_{k=1}^{\infty}\gamma^{\,k}\,
  \Pr\!\left(s_{t+k}=s',\,k \;\Big|\; s_{t}=s,\ \omega\right)}
  \label{eq:option_kernel}
\end{align}

\edt{\noindent Therefore the high-level process with $(\mathcal{S},\mathcal{O},p,r,\gamma)$ is a discrete-time SMDP whose decision epochs are separated by a random number of encounters. Note that $p(\cdot\mid s,\omega)$ is a $\gamma$-discounted kernel, in contrast to the one-step kernel $P$ of the primitive process. The model degrades to standard MDP when $\beta_{\omega}(\cdot)\equiv 1$ with $\mathcal{O}=\mathcal{A}$, in which every option terminates after one visit. The semi-Markov character of our model is driven from the random, state-dependent durations $k$, which are learned through $\beta_{\omega}$ (subject to the minimum two-transition commitment) rather than fixed exogenously. The high-level critic $Q_{\Omega}$ of Section~\ref{sec:SolutionAlgorithm} is trained on the one-step, intra-option form of the Bellman equation associated with \eqref{eq:option_reward}--\eqref{eq:option_kernel}, $Q_{\Omega}(s,\omega)=r(s,\omega)+\sum_{s'}p(s'\mid s,\omega)\max_{\omega'}Q_{\Omega}(s',\omega')$, whose sample-based TD targets are given in~\ref{appendix:traininglossequations}.}

Options are assigned per transition from the observed clinical state (not from demonstrated actions) to avoid data leakage:
\begin{equation}
  \omega = \omega_1 \;\text{if}\;
  \begin{cases}
    I^{\mathrm{T2DM}}_{t-1} > 0 \;\wedge\; I^{\mathrm{HTN}}_{t-1} > 0,
      & t > 1, \\
    A1C_t> 7.2 \;\wedge\;SBP_t> 135,
      & t = 1,
  \end{cases}
  \label{eq:option_rule}
\end{equation}
and $\omega=\omega_0$ otherwise.
An assigned option persists for at least two consecutive transitions before re-evaluation, mirroring clinical inertia in regime switching.

Equation~\eqref{eq:option_rule} is a deterministic, state-based partition and therefore defines the initiation sets of the two options:
  $\mathcal{I}_{\omega_1}
  = \bigl\{\, s_t \in \mathcal{S} \;:\;
      (I^{\mathrm{T2DM}}_{t-1} > 0 \wedge I^{\mathrm{HTN}}_{t-1} > 0)
      \;\vee\;
      (t{=}1 \wedge A1C_t > 7.2 \wedge SBP_t > 135) \,\bigr\},
  \qquad$
  $\mathcal{I}_{\omega_0} = \mathcal{S} \setminus \mathcal{I}_{\omega_1}$.
  \label{eq:initiation_sets}
Because the partition is exhaustive and disjoint, exactly one option is initiable in every state. Initiation sets are enforced during option assignment in preprocessing.%

\textbf{Transition Kernel ($P$).}
The transition kernel
$P:\mathcal{S}\times\mathcal{A}\to\Delta(\mathcal{S})$
(where $\Delta(\mathcal{S})$ denotes the set of probability distributions over $\mathcal{S}$) encodes the stochastic dynamics of a patient's state from one visit to the next as a function of the factored action taken.
Concretely, $P(s'\mid s,\mathbf{a})$, with $s'$ the next state, captures how biomarkers such as A1C, SBP, and BMI, together with medication intensity and patient characteristics, evolve following a treatment decision $\mathbf{a}=(a_{\mathrm{T2DM}}, a_{\mathrm{HTN}}, a_{\mathrm{BMI}})$ at state $s$.
Because direct interaction with the environment is infeasible in clinical settings, $P$ is never queried analytically; instead, it is implicitly approximated through observed patient trajectories in the EHR dataset, making the problem one of offline (batch) RL.
The transition kernel is defined over primitive factored actions rather than options because it describes single-step, visit-level dynamics; the option framework operates at a coarser temporal scale, composing sequences of such transitions into clinically coherent treatment regimes.

\textbf{Reward Function ($r$).}
\label{subsec:reward}
\looseness-1 Managing chronic multimorbidity requires balancing immediate clinical risks with long-term health outcomes. To reflect this trade-off, our reward function integrates both objectives into a single optimization signal using state features $SBP_t$, $A1C_t$, and $Age_t$.  The reward function
  $r_t = \Delta Q_t + \Psi_t + p_t' - p_t,$
  \label{eq:reward_total}
comprising (1) QALY gain ($\Delta Q_t$) \citep{payani2026guideline}, (2) potential-based reward shaping (PBRS) ($\Psi_t$), and (3) improvement bonus and worsening penalty ($+ p'_t - p_t$). The total $r$ is then clipped to $[-1,1]$ to bound the temporal difference (TD) loss, consistent with QALY expected gain.~\ref{appendix:rewardfunction} provide detailed formulation of the reward function.

\section{Structural Properties}
\label{sec:StructuralProperty}
In this section, we establish the structural properties of %
FAHOC framework. %
These properties provide theoretical justification for the FAHOC architecture by establishing bounds on the gap between the learned and optimal treatment policies and by characterizing differences in attainable value between cooperative and non-cooperative patients.
Proofs are provided in~\ref{appendix:theorem_1_2_3_4}.

\subsection{Factorization Error Analysis}
\label{subsec:struct_factorization}

We first introduce the assumptions  underlying  the analysis.

\begin{assumption}[Bounded rewards]
\label{ass:1}

Consider an SMDP  
$(\mathcal{S},\mathcal{A},\mathcal{O},P,r,\gamma)$, where $\mathcal{S}$ is the state space,
$\mathcal{A}$ is the factored action space,
$\mathcal{O}$ is a finite set of options,
$P$ is the transition kernel,
$r$ is the reward function,
and $\gamma\in(0,1)$ is the discount factor. We assume that the reward function is uniformly bounded, i.e.,  $|r(s,a)|\leq R_{\max}$
for all $(s,a)\in\mathcal{S}\times\mathcal{A}$.
\end{assumption}

Then we have the following lemma. 
\begin{lemma}
[Bounded optimal Q-Function]
\label{lem:1}  

Under Assumption 1, the optimal Q-function   is uniformly bounded. Specifically,
the optimal Q-function satisfies $\|Q^{*}\|_{\infty}\leq V^{\max}$,
where
\begin{equation}
  V^{\max}
  \;:=\;
  \frac{R_{\max}}{1-\gamma}.
  \label{eq:Vmax}
\end{equation}

\end{lemma}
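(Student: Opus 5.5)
The plan is to argue directly from the discounted-return representation of $Q^{*}$, and then to confirm the bound through the Bellman optimality operator so that it also covers the intra-option and option-level critics. For any (possibly history-dependent, option-based) policy $\pi$, the action value is
\[
Q^{\pi}(s,a)=\mathbb{E}^{\pi}\Bigl[\textstyle\sum_{t=0}^{\infty}\gamma^{t} r(s_t,a_t)\,\Big|\,s_0=s,a_0=a\Bigr].
\]
Assumption~\ref{ass:1} gives $|r(s_t,a_t)|\le R_{\max}$ pathwise. The series is therefore dominated by the summable sequence $\gamma^{t}R_{\max}$, so expectation and summation can be exchanged (dominated convergence, or Fubini). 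This gives
\[
|Q^{\pi}(s,a)|\le\textstyle\sum_{t}\gamma^{t}R_{\max}=R_{\max}/(1-\gamma)=V^{\max}.
\]
Since $Q^{*}(s,a)=\sup_{\pi}Q^{\pi}(s,a)$ and the bound is uniform in $\pi$ and $(s,a)$, it passes to the supremum and yields $\|Q^{*}\|_\infty\le V^{\max}$ as in \eqref{eq:Vmax}.

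As a cross-check, I would also run the fixed-point argument. The operator
\[
(\mathcal{T}Q)(s,a)=r(s,a)+\gamma\,\mathbb{E}_{s'\sim P(\cdot\mid s,a)}\max_{a'}Q(s',a')
\]
is a $\gamma$-contraction in $\|\cdot\|_\infty$ on bounded functions, and $Q^{*}$ is its unique fixed point. The triangle inequality gives $\|\mathcal{T}Q\|_\infty\le R_{\max}+\gamma\|Q\|_\infty$. So $\mathcal{T}$ maps the closed ball $\{\|Q\|_\infty\le V^{\max}\}$ into itself, because $R_{\max}+\gamma V^{\max}=V^{\max}$. By Banach's theorem the fixed point lies in that ball. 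Equivalently, applying the bound at the fixed point gives $\|Q^{*}\|_\infty(1-\gamma)\le R_{\max}$.

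The only step needing care is the SMDP/option layer, where $Q_{\Omega}(s,\omega)$ uses the option reward \eqref{eq:option_reward} and the discounted kernel \eqref{eq:option_kernel}. Here I would note two bounds: $|r(s,\omega)|\le R_{\max}\,\mathbb{E}\bigl[\sum_{j=1}^{k}\gamma^{j-1}\bigr]=R_{\max}\,\mathbb{E}\bigl[(1-\gamma^{k})/(1-\gamma)\bigr]$, and the total mass $\sum_{s'}p(s'\mid s,\omega)=\mathbb{E}[\gamma^{k}]$. The same ball-invariance computation then gives
\[
\|Q_\Omega\|_\infty\le R_{\max}\frac{1-\mathbb{E}\gamma^{k}}{1-\gamma}+\mathbb{E}[\gamma^{k}]\,V^{\max}=V^{\max}.
\]
So the bound is unchanged whatever the random durations $k$ are. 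Handling this cleanly, rather than the elementary geometric sum, is where I expect the main (minor) obstacle to lie.
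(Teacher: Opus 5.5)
Your proposal is correct, and its main argument is the same as the paper's: bound $|Q^{\pi}(s,a)|$ pathwise by the geometric series $\sum_t\gamma^t R_{\max}=R_{\max}/(1-\gamma)$, then pass to the supremum over policies and over $(s,a)$; your dominated-convergence remark just makes explicit an exchange of sum and expectation that the paper performs silently. The Bellman ball-invariance cross-check and the option-level computation using $\mathbb{E}[\gamma^{k}]$ are both correct extras that the paper does not include.
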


Lemma \ref{lem:1}  follows immediately under Assumption 1, and establishes that, under uniformly bounded rewards and a discounted infinite-horizon objective, all optimal action-values remain uniformly bounded by the geometric sum of the maximum absolute immediate reward. 
\begin{assumption}[Factored action space]
\label{ass:2}
The action space  is assumed to factor across $K$ domains:
$\mathcal{A}=\mathcal{A}^{1}\times\cdots\times\mathcal{A}^{K}$,
  $a=(a^{1},\ldots,a^{K})$.
In this work,  we consider $K=3$ with
$\mathcal{A}^{1}=\mathcal{A}_{T2DM}=\{-1,0,1\}$,
$\mathcal{A}^{2}=\mathcal{A}_{HTN}=\{-1,0,1\}$,
$\mathcal{A}^{3}=\mathcal{A}_{BMI}=\{0,1\}$.
For each option $\omega$, let $\mathcal{A}_\omega\subseteq\mathcal{A}$
denote the corresponding active sub-space and
$\bar{a}_\omega:=(\bar{a}^{1}_\omega,\ldots,\bar{a}^{K}_\omega)$
denote the componentwise mean action.
We further define the maximal componentwise deviation within  $\mathcal{A}_\omega$ 
as $\Delta_\omega:=\sup_{a\in\mathcal{A}_\omega}
\max_{k \in \{1,\dots,K\}}|a^{k}-\bar{a}^{k}_\omega|$.
\end{assumption}

We further impose two regularity conditions. 
Assumption \ref{ass:3} imposes a
$C^{2}$ extension  of the reward in the action variable, while
Assumption \ref{ass:4} assumes a smooth interpolation of
the transition kernel across action domains.
These conditions are satisfied when treatment dosages correspond to observed levels of an underlying continuous physiological process, as is the case for T2DM, HTN, and BMI interventions.

\begin{assumption}[Bounded reward cross-interaction]
\label{ass:3}
The reward $r(s,a)$ admits a $C^{2}$ extension in $a$ over
a neighborhood of $\mathcal{A}$.
For each pair of distinct domains $k\neq\ell$, define:
\begin{equation}
  \Gamma_{k\ell}
  \;:=\;
  \sup_{s\in\mathcal{S},\,a\in\mathcal{A}}
  \left|
    \frac{\partial^{2} r(s,a)}{\partial a^{k}\,\partial a^{\ell}}
  \right|,
  \qquad
  \Gamma_{\mathrm{all}}
  \;:=\;
  \sum_{k<\ell}\Gamma_{k\ell}.
  \label{eq:Gamma_kl}
\end{equation}
\end{assumption}

\begin{assumption}[Smooth transition extension]
\label{ass:4}
There exists an open neighbourhood $\mathcal{U}\supseteq\mathcal{A}$
in $\mathbb{R}^{K}$ and a family of probability kernels
$\tilde{P}(\cdot|s,\cdot):\mathcal{U}\to\Delta(\mathcal{S})$
such that:
\begin{enumerate}[label=(\roman*),itemsep=2pt]
  \item $\tilde{P}(\cdot|s,a)=P(\cdot|s,a)$ for all
        $a\in\mathcal{A}$ \textnormal{(consistency on discrete points)};
  \item for every bounded measurable $f:\mathcal{S}\to\mathbb{R}$,
        the map $a\mapsto\mathbb{E}_{s'\sim\tilde{P}(\cdot|s,a)}[f(s')]$
        is $C^{2}$ on $\mathcal{U}$ uniformly in $s$
        \textnormal{(smooth interpolation)}.
\end{enumerate}
For each pair of distinct domains $k\neq\ell$, define the
cross-action transition interaction coefficient:

\begin{equation}
    \Phi_{k\ell}
    \;:=\;
    \sup_{\substack{s\in\mathcal{S},\,a\in\mathcal{U}\\
          f:\mathcal{S}\to[-1,1]}}
    \left|
      \frac{\partial^{2}}{\partial a^{k}\,\partial a^{\ell}}
      \mathbb{E}_{s'\sim\tilde{P}(\cdot|s,a)}\bigl[f(s')\bigr]
    \right|,\qquad
    \Phi_{\mathrm{all}
    }\;:=\;
    \sum_{k<\ell}\Phi_{k\ell}.
  \label{eq:Phi_kl}
\end{equation}
\end{assumption}
In the factored SMDP $\mathcal{M}$
 under Assumptions \ref{ass:1}-- \ref{ass:2}, we define three Bellman operators on
$\ell^{\infty}(\mathcal{S}\times\mathcal{A})$ as follows, where throughout
$\mathbb{E}_{s'}[\cdot]$ abbreviates $\mathbb{E}_{s'\sim P(\cdot\mid s,a)}[\cdot]$:

\begin{enumerate}[leftmargin=*,itemsep=2pt,parsep=0pt]

\item True operator (uses full reward and unconstrained Q):

\begin{equation}
  (T\,Q)(s,a) := r(s,a)+\gamma\,\mathbb{E}_{s'}\Bigl[\max_{a'\in\mathcal{A}}Q(s',a')\Bigr].
  \label{eq:T_true}
\end{equation}
  
\item Factored-reward operator (replaces $r$ with $r_{\mathrm{add}}$, still unconstrained Q):

{
  \begin{align}
    r_{\mathrm{add}}(s,a)
    &\;:=\;
    \sum_{k=1}^{K}r_{k}(s,a^{k}),
    \label{eq:r_add}\\
    (T_{\mathrm{fac}}\,Q)(s,a)
    &\;:=\;
    r_{\mathrm{add}}(s,a)+\gamma\,\mathbb{E}_{s'}
    \Bigl[\max_{a'\in\mathcal{A}}Q(s',a')\Bigr].
    \label{eq:T_fac}
  \end{align}
}

\item Architectural operator (uses $r_{\mathrm{add}}$ and
  restricts Q to the additive function class
  $\mathcal{F}:=\bigl\{\sum_{k}w_{k}f_{k}(s,a^{k})\bigr\}$):

  \begin{equation}
  (T_{\mathcal{F}}\,Q)(s,a) := \Pi_{\mathcal{F}}\Bigl[r_{\mathrm{add}}(s,a)+\gamma\,\mathbb{E}_{s'}\bigl[\max_{a'\in\mathcal{A}}Q(s',a')\bigr]\Bigr],
  \label{eq:T_arch}
\end{equation}
  \noindent where $\Pi_{\mathcal{F}}$ is the best-approximation
  projection onto $\mathcal{F}$ in $\|\cdot\|_{\infty}$.

\end{enumerate}

\noindent Note that all three operators are $\gamma$-contractions on
$(\ell^{\infty}(\mathcal{S}\times\mathcal{A}),\|\cdot\|_{\infty})$
by the Banach fixed-point theorem.
Denote their unique fixed points by:
\begin{enumerate}[leftmargin=*,itemsep=1pt]
  \item $Q^{*}$:  True optimal Q-function (fixed point of $T$),
  \item $Q^{*}_{\mathrm{fac}}$:  Factored-reward optimal
    Q-function (fixed point of $T_{\mathrm{fac}}$),
  \item $\hat{Q}_{F}^{\,\omega}$:  Trained factored-architecture
    Q-network (approximate fixed point of $T_{\mathcal{F}}$
    within option $\omega$).
\end{enumerate}

The interaction residual is
$\eta(s,a):=r(s,a)-r_{\mathrm{add}}(s,a)$,
with $\sup_{(s,a)}|\eta(s,a)|\leq\varepsilon_{r}\geq 0$.
The total approximation error decomposes as:
\begin{equation}
    \bigl\|Q^{*}-\hat{Q}_{F}^{\,\omega}\bigr\|_{\infty}
  \;\leq\;
   \bigl\|Q^{*}-Q^{*}_{\mathrm{fac}}\bigr\|_{\infty}%
  \;+\;
  \bigl\|Q^{*}_{\mathrm{fac}}-\hat{Q}_{F}^{\,\omega}\bigr\|_{\infty}, %
  \label{eq:error_decomp}
\end{equation}
which separates the error into two sources: ($i$) the mismatch between the true optimal Q-value and its factored-reward optimal Q-value approximation, and ($ii$) the additional error induced by restricting the value function to the architecture  $\mathcal{F}$.

We now bound the two terms in~\eqref{eq:error_decomp}. Theorem \ref{thm:1} controls the suboptimality error as a result of reward-factorization via the interaction residual, Theorem \ref{thm:2} refines this using reward smoothness (Hessian-based bounds), and Theorem \ref{thm:3} bounds the additional approximation error induced by the additive function class $\mathcal{F}$.
\begin{theorem}[Global Factorization Error Bound]
\label{thm:1}
Under Assumptions \ref{ass:1}-- \ref{ass:2}, if\/
$\sup_{(s,a)}|\eta(s,a)|\leq\varepsilon_{r}$, then:
\begin{equation}
  \bigl\|Q^{*}-Q^{*}_{\mathrm{fac}}\bigr\|_{\infty}
  \;\leq\;
  \frac{\varepsilon_{r}}{1-\gamma}.
  \label{eq:thm1}
\end{equation}
For each option $\omega$ with initiation set
$\mathcal{I}_{\omega}\subseteq\mathcal{S}$ and action subspace
$\mathcal{A}_{\omega}\subseteq\mathcal{A}$, define
$\varepsilon_{r}^{\omega}
:=\sup_{s\in\mathcal{I}_{\omega},\,a\in\mathcal{A}_{\omega}}
|\eta(s,a)|\leq\varepsilon_{r}$.
Then the option-conditioned Q-functions satisfy:
\begin{equation}
  \sup_{s\in\mathcal{I}_{\omega},\,a\in\mathcal{A}_{\omega}}
  \bigl|Q^{*,\omega}(s,a)-Q^{*,\omega}_{\mathrm{fac}}(s,a)\bigr|
  \;\leq\;
  \frac{\varepsilon_{r}^{\omega}}{1-\gamma}.
  \label{eq:thm1_omega}
\end{equation}
\end{theorem}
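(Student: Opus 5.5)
The plan is the standard simulation-lemma argument for two Bellman optimality operators that share the same transition kernel and max-backup and differ only in the reward. Let $T$ and $T_{\mathrm{fac}}$ be the operators in \eqref{eq:T_true} and \eqref{eq:T_fac}, with fixed points $Q^{*}$ and $Q^{*}_{\mathrm{fac}}$. These fixed points exist and are unique because both operators are $\gamma$-contractions on $\ell^{\infty}(\mathcal{S}\times\mathcal{A})$. Boundedness comes from Assumption~\ref{ass:1} and Lemma~\ref{lem:1}. For $r_{\mathrm{add}}$, we use $|r_{\mathrm{add}}|\le R_{\max}+\varepsilon_r$, which is finite.

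The first step is the triangle-inequality decomposition
\[
\|Q^{*}-Q^{*}_{\mathrm{fac}}\|_{\infty}=\|TQ^{*}-T_{\mathrm{fac}}Q^{*}_{\mathrm{fac}}\|_{\infty}
\le \|TQ^{*}-T_{\mathrm{fac}}Q^{*}\|_{\infty}+\|T_{\mathrm{fac}}Q^{*}-T_{\mathrm{fac}}Q^{*}_{\mathrm{fac}}\|_{\infty}.
\]
The first term equals $\sup_{(s,a)}|r(s,a)-r_{\mathrm{add}}(s,a)|=\sup|\eta|\le\varepsilon_r$, since the discounted backup terms cancel exactly. For the second term we need the contraction property of $T_{\mathrm{fac}}$. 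This follows from $|\max_{a'}Q(s',a')-\max_{a'}Q'(s',a')|\le\|Q-Q'\|_\infty$ together with monotonicity of the expectation, which gives a bound of $\gamma\|Q^{*}-Q^{*}_{\mathrm{fac}}\|_\infty$. Rearranging yields $(1-\gamma)\|Q^{*}-Q^{*}_{\mathrm{fac}}\|_\infty\le\varepsilon_r$, which is \eqref{eq:thm1}.

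For the option-conditioned bound \eqref{eq:thm1_omega}, I would rerun the same argument on the restricted domain $\mathcal{I}_\omega\times\mathcal{A}_\omega$. Here $Q^{*,\omega}$ and $Q^{*}_{\mathrm{fac}}{}^{,\omega}$ are the fixed points of the true and factored operators for the option-restricted problem: the backup maximizes over $a'\in\mathcal{A}_\omega$, and the state space is the portion of $\mathcal{S}$ on which the option is evaluated. On that domain the reward discrepancy is at most $\varepsilon_r^{\omega}$, and the same two-term split then gives $\varepsilon_r^{\omega}/(1-\gamma)$. The inequality $\varepsilon_r^{\omega}\le\varepsilon_r$ holds trivially because the supremum is taken over a subset.

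The main obstacle is making the restriction well-defined. Successor states $s'\sim P(\cdot\mid s,a)$ from $s\in\mathcal{I}_\omega$ need not stay in $\mathcal{I}_\omega$, so the sup over $\mathcal{I}_\omega$ alone does not close the recursion. My first approach would be to interpret $Q^{*,\omega}$ as the value on the option's reachable set $\mathcal{S}_\omega\supseteq\mathcal{I}_\omega$, closed under $P$ for actions in $\mathcal{A}_\omega$, with the residual bound $\varepsilon_r^{\omega}$ assumed on that set. Alternatively, I would set continuation beyond termination to a common terminal value that both operators share identically, so it cancels in the difference. Under either reading, the contraction argument goes through verbatim with the supremum over the closed domain, and restricting the supremum to $\mathcal{I}_\omega\times\mathcal{A}_\omega$ only decreases the left-hand side.
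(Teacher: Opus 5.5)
Your proof of \eqref{eq:thm1} is the paper's proof. You use the same split $Q^{*}-Q^{*}_{\mathrm{fac}}=(TQ^{*}-T_{\mathrm{fac}}Q^{*})+(T_{\mathrm{fac}}Q^{*}-T_{\mathrm{fac}}Q^{*}_{\mathrm{fac}})$, bound the reward-mismatch term (which equals $\eta$) by $\varepsilon_{r}$ and the contraction term by $\gamma\|Q^{*}-Q^{*}_{\mathrm{fac}}\|_{\infty}$, then rearrange; for \eqref{eq:thm1_omega} the paper simply reruns this on $\mathcal{I}_{\omega}\times\mathcal{A}_{\omega}$ and asserts that all steps carry through verbatim.

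The closure problem you flag is real, and the paper does not address it. If $Q^{*,\omega}$ were just the restriction of $Q^{*}$, a deterministic exit from $\mathcal{I}_{\omega}$ into an absorbing state with $\eta\equiv\varepsilon_{r}$ gives a gap of $\gamma\varepsilon_{r}/(1-\gamma)$ on $\mathcal{I}_{\omega}$ even when $\varepsilon_{r}^{\omega}=0$. Of your two repairs, only the shared-terminal-value one gives the constant $\varepsilon_{r}^{\omega}$ exactly as defined, and only if exits from $\mathcal{I}_{\omega}$ (not just $\beta_{\omega}$-terminations) are routed to that shared value. The reachable-set reading proves the bound with $\sup_{\mathcal{S}_{\omega}\times\mathcal{A}_{\omega}}|\eta|$ in place of $\varepsilon_{r}^{\omega}$.
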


Theorem \ref{thm:1} establishes that the reward-level
factorization error is controlled by the interaction residual
$\varepsilon_{r}:=\sup_{(s,a)}|r(s,a)-r_{\mathrm{add}}(s,a)|$%
The option-restricted bound~\eqref{eq:thm1_omega} is never
looser than the global bound, and is strictly tighter whenever
cross-domain treatment interactions are smaller within an
option's active subspace than in the full action space.

\begin{corollary}[Option-restricted bound]
\label{cor:fac:option}
For each option $\omega\in\{\omega_{0},\omega_{1}\}$ with
initiation set $\mathcal{I}_{\omega}\subseteq\mathcal{S}$ and
action subspace $\mathcal{A}_{\omega}\subseteq\mathcal{A}$,
define
$\varepsilon_{r}^{\omega}
 :=\sup_{s\in\mathcal{I}_{\omega},\,a\in\mathcal{A}_{\omega}}
 |\eta(s,a)|$.
Since $\mathcal{I}_{\omega}\times\mathcal{A}_{\omega}
\subseteq\mathcal{S}\times\mathcal{A}$, we have
$\varepsilon_{r}^{\omega}\leq\varepsilon_{r}$, and the
option-conditioned Q-functions satisfy

\begin{equation}
  \sup_{s\in\mathcal{I}_{\omega},\,a\in\mathcal{A}_{\omega}}
  |Q^{*,\omega}-Q^{*,\omega}_{\mathrm{fac}}|
  \leq
  \frac{\varepsilon_{r}^{\omega}}{1-\gamma}
  \leq
  \frac{\varepsilon_{r}}{1-\gamma}.
  \label{app:fac:option_bound}
\end{equation}

\end{corollary}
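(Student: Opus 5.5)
The corollary has two parts. The first inequality is exactly the option-conditioned bound \eqref{eq:thm1_omega} of Theorem~\ref{thm:1}, so it can be invoked directly. The second inequality is a monotonicity-of-supremum fact. I would still spell out how the option-level bound itself arises, because that is where the only real subtlety lies.

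\textbf{Step 1 (monotonicity).} Since $\mathcal{I}_{\omega}\times\mathcal{A}_{\omega}\subseteq\mathcal{S}\times\mathcal{A}$, the supremum of $|\eta|$ over the smaller set is at most the supremum over the larger one. Hence $\varepsilon_r^{\omega}\le\varepsilon_r$. Dividing by $1-\gamma>0$ preserves the inequality, which gives the right-hand inequality in \eqref{app:fac:option_bound}.

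\textbf{Step 2 (option-conditioned bound).} I would define $T^{\omega}$ and $T^{\omega}_{\mathrm{fac}}$ as the restrictions of \eqref{eq:T_true} and \eqref{eq:T_fac} to functions on $\mathcal{I}_{\omega}\times\mathcal{A}_{\omega}$. In these restricted operators, the maximization runs over $\mathcal{A}_{\omega}$, and the continuation is handled by the option's termination and high-level value. Both operators share the same continuation term. Their difference at a common argument $Q$ is therefore exactly $\eta(s,a)$, which is bounded by $\varepsilon_r^{\omega}$ on this domain. Each operator is a $\gamma$-contraction in the sup norm, because the max over actions is nonexpansive and expectation is nonexpansive. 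The standard fixed-point perturbation argument then gives
\[
\|Q^{*,\omega}-Q^{*,\omega}_{\mathrm{fac}}\|\le\|T^{\omega}Q^{*,\omega}-T^{\omega}_{\mathrm{fac}}Q^{*,\omega}\|+\|T^{\omega}_{\mathrm{fac}}Q^{*,\omega}-T^{\omega}_{\mathrm{fac}}Q^{*,\omega}_{\mathrm{fac}}\|\le\varepsilon_r^{\omega}+\gamma\|Q^{*,\omega}-Q^{*,\omega}_{\mathrm{fac}}\|,
\]
and rearranging yields the bound $\varepsilon_r^{\omega}/(1-\gamma)$.

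\textbf{Main obstacle.} The difficulty is making sure the restricted operator only ever evaluates $\eta$ on $\mathcal{I}_{\omega}\times\mathcal{A}_{\omega}$. Successor states $s'$ may leave $\mathcal{I}_{\omega}$, and outside that set the residual could be as large as $\varepsilon_r$. I would handle this as Theorem~\ref{thm:1} does: the option-conditioned Q-functions are defined on the option's domain, and the continuation after termination is a value common to both operators. That common value therefore cancels in the difference. If it does not cancel exactly, I would fall back on the global bound of Theorem~\ref{thm:1} for the continuation part.
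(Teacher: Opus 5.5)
Your Step~1 is fine. Your Step~2 is the paper's own argument: its proof reruns the proof of Theorem~\ref{thm:1} on $\mathcal{I}_{\omega}\times\mathcal{A}_{\omega}$ with $\varepsilon_r^{\omega}$ in place of $\varepsilon_r$ and asserts that every step carries over verbatim. The obstacle you flag, however, is a genuine gap, and neither your fix nor the paper closes it.

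The contraction term $\gamma\,\mathbb{E}_{s'}\bigl[|V^{*}(s')-V^{*}_{\mathrm{fac}}(s')|\bigr]$ involves successor states $s'$ and maximizing actions $a'$ outside $\mathcal{I}_{\omega}\times\mathcal{A}_{\omega}$. Here $\mathcal{I}_{\omega_0}$ and $\mathcal{I}_{\omega_1}$ partition $\mathcal{S}$ and patients move between them, and an active option keeps running outside its initiation set. So restricting the supremum only gives $\sup_{\mathcal{I}_{\omega}\times\mathcal{A}_{\omega}}|Q^{*}-Q^{*}_{\mathrm{fac}}|\le\varepsilon_r^{\omega}+\gamma\|Q^{*}-Q^{*}_{\mathrm{fac}}\|_{\infty}$. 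The norm on the right is global, so you cannot absorb it into the left-hand side.

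Your claim that the post-termination continuation is \emph{common to both operators} fails for the natural definitions. After termination the true problem continues with $V^{*}$ and the factored one with $V^{*}_{\mathrm{fac}}$, and these can differ by as much as $\varepsilon_r/(1-\gamma)$. Your fallback to the global bound gives only $\varepsilon_r^{\omega}+\gamma\varepsilon_r/(1-\gamma)$, not $\varepsilon_r^{\omega}/(1-\gamma)$.

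Without an extra hypothesis the middle inequality is actually false. Take $\mathcal{I}_{\omega}=\{s_1\}$, and let every action at $s_1$ lead deterministically to an absorbing state $s_2\notin\mathcal{I}_{\omega}$. Let $\eta\equiv 0$ at $s_1$ and $\eta\equiv\varepsilon>0$ at $s_2$. Then $Q^{*}-Q^{*}_{\mathrm{fac}}\equiv\varepsilon/(1-\gamma)$ at $s_2$, hence $\equiv\gamma\varepsilon/(1-\gamma)>0$ on $\{s_1\}\times\mathcal{A}_{\omega}$. Meanwhile $\varepsilon_r^{\omega}/(1-\gamma)=0$, and your fallback bound is attained exactly.

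To make Step~2 rigorous you must add one of two hypotheses.
\begin{itemize}
\item Assume invariance: $P(\mathcal{I}_{\omega}\mid s,a)=1$ for all $(s,a)\in\mathcal{I}_{\omega}\times\mathcal{A}_{\omega}$, with the bootstrap maximum taken over $\mathcal{A}_{\omega}$. Then $T^{\omega}$ and $T^{\omega}_{\mathrm{fac}}$ are genuine $\gamma$-contractive self-maps of $\ell^{\infty}(\mathcal{I}_{\omega}\times\mathcal{A}_{\omega})$ differing exactly by $\eta$. If $\mathcal{A}_{\omega}=\mathcal{A}$, their fixed points are just the restrictions of $Q^{*}$ and $Q^{*}_{\mathrm{fac}}$.
\item Alternatively, define $Q^{*,\omega}$ and $Q^{*,\omega}_{\mathrm{fac}}$ as fixed points of operators that pay one and the same fixed exogenous value $W(s')$ whenever the option terminates or $s'$ leaves $\mathcal{I}_{\omega}$. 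Then your perturbation argument goes through. However, these objects are no longer restrictions of $Q^{*}$ and $Q^{*}_{\mathrm{fac}}$, so the corollary no longer plugs into the decomposition~\eqref{eq:error_decomp}.
\end{itemize}

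The paper's one-line proof silently assumes something of this kind, so you inherited the hole rather than created it. A complete proof still has to state the hypothesis explicitly.
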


The factor $1/(1-\gamma)$ in~\eqref{eq:thm1} cannot be
improved in general, as shown in~\ref{appendix:theorem_1_2_3_4}.

 While Theorem \ref{thm:1} applies to any bounded residual,
Assumption \ref{ass:3} enables a closed-form upper bound
on $\varepsilon_{r}$ through the reward Hessian or Taylor Bound.
Specifically, the interaction residual vanishes quadratically
in the action dispersion $\Delta_{\omega}$ within each option.

 Theorem \ref{thm:2} provides a tighter, computable bound on
the first term via the reward Hessian.
\begin{theorem}[Reward Hessian bound or Taylor bound on reward interaction residual]
\label{thm:2}
Under Assumptions \ref{ass:1}-- \ref{ass:3}, define 
$\Gamma_{\mathrm{all}}:=\sum_{k<\ell}\Gamma_{k\ell}$, where $\Gamma_{\mathrm{all}}$
is the sum of cross-domain second derivatives of $r$ for ordered pair of domains $(k,\ell)$ with $k<\ell$
(see~\eqref{eq:Gamma_kl}), and
$\Delta_{\omega}:=\sup_{a\in\mathcal{A}_{\omega}}
\max_{k}|a^{k}-\bar{a}^{k}_{\omega}|$
is the componentwise action radius of option $\omega$. 
Then the interaction
residual satisfies:
\begin{equation}
  \varepsilon_{r}^{\omega}
  \;\leq\;
  \Gamma_{\mathrm{all}}\cdot\Delta_{\omega}^{2},
  \label{eq:thm2}
\end{equation}
and consequently:
\begin{equation}
  \bigl\|Q^{*,\omega}-Q^{*,\omega}_{\mathrm{fac}}\bigr\|_{\infty}
  \;\leq\;
  \frac{\Gamma_{\mathrm{all}}\cdot\Delta_{\omega}^{2}}{1-\gamma}.
  \label{eq:thm2_Q}
\end{equation}
\end{theorem}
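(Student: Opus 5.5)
The plan is to bound the interaction residual pointwise by a Taylor/integral-remainder argument about the componentwise mean action $\bar a_\omega$. The $Q$-level inequality \eqref{eq:thm2_Q} then follows at once by substituting \eqref{eq:thm2} into the option-restricted bound \eqref{eq:thm1_omega} of Theorem~\ref{thm:1} (equivalently Corollary~\ref{cor:fac:option}). The main work is \eqref{eq:thm2}. It requires fixing the additive components $r_k$ concretely, since $\varepsilon_r^\omega$ depends on the choice of $r_{\mathrm{add}}$; the bound is obtained for a particular admissible choice, which then upper-bounds the best one. Fix $s\in\mathcal{I}_\omega$ and set $\delta_k:=a^k-\bar a^k_\omega$, so that $|\delta_k|\le\Delta_\omega$ for $a\in\mathcal{A}_\omega$. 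I will take the ``anchored'' (cut-HDMR) decomposition
\[
r_k(s,a^k):=r\bigl(s,\bar a_\omega+\delta_k e_k\bigr)-\tfrac{K-1}{K}\,r(s,\bar a_\omega),
\]
so that $\eta(s,a)=r(s,a)-\sum_{k}r(s,\bar a_\omega+\delta_k e_k)+(K-1)\,r(s,\bar a_\omega)$.

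Next, telescope along a coordinate path. Let $z_k:=\bar a_\omega+\sum_{l\le k}\delta_l e_l$, so $z_0=\bar a_\omega$ and $z_K=a$. Then
\[
\eta(s,a)=\sum_{k=1}^{K}\Bigl[\bigl(r(s,z_k)-r(s,z_{k-1})\bigr)-\bigl(r(s,\bar a_\omega+\delta_k e_k)-r(s,\bar a_\omega)\bigr)\Bigr].
\]
By the fundamental theorem of calculus in coordinate $k$, the $k$-th bracket equals $\int_0^{\delta_k}\bigl[\partial_k r(s,z_{k-1}+te_k)-\partial_k r(s,\bar a_\omega+te_k)\bigr]\,dt$. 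The two evaluation points differ only in coordinates $l<k$, by $\delta_l$. Applying the fundamental theorem of calculus once more in those coordinates bounds the integrand by $\sum_{l<k}\sup|\partial_l\partial_k r|\,|\delta_l|\le\sum_{l<k}\Gamma_{kl}|\delta_l|$. Hence $|\eta(s,a)|\le\sum_{k}\sum_{l<k}\Gamma_{kl}|\delta_k||\delta_l|\le\Gamma_{\mathrm{all}}\Delta_\omega^2$. Taking the supremum over $s\in\mathcal{I}_\omega$ and $a\in\mathcal{A}_\omega$ gives \eqref{eq:thm2}. This route needs only mixed partials, never the diagonal ones, which is why the constant involves only $\Gamma_{\mathrm{all}}$. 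A naive second-order Taylor expansion would leave diagonal Hessian terms evaluated at an intermediate point; those terms are not additive, and that is the reason to avoid it.

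I expect the main obstacle to be a domain/regularity subtlety rather than the algebra. All intermediate points $z_{k-1}+te_k$ and $\bar a_\omega+te_k$ lie in the axis-aligned box spanned by $\bar a_\omega$ and $a$, and this box is not contained in the discrete set $\mathcal{A}$. As literally written, however, $\Gamma_{k\ell}$ in \eqref{eq:Gamma_kl} is a supremum over $a\in\mathcal{A}$ only. I would handle this by interpreting the supremum over the neighborhood on which the $C^2$ extension of Assumption~\ref{ass:3} exists, and noting that this neighborhood can be taken to contain the box $\prod_k[\min\mathcal{A}^k,\max\mathcal{A}^k]$. This is the natural reading, and it parallels how $\Phi_{k\ell}$ is defined over $\mathcal{U}$ in Assumption~\ref{ass:4}. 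I would state this reading explicitly at the start of the proof. A second point to state is that $\bar a_\omega$ lies in that box, which holds because it is a componentwise mean of points of $\mathcal{A}_\omega$.
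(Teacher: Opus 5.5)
Your proof is correct and takes a genuinely different route from the paper's.

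The paper expands $r(s,\cdot)$ about $\bar a_\omega$ using the integral Taylor remainder. It keeps the constant, linear and diagonal quadratic terms evaluated at $\bar a_\omega$, and leaves only the cross terms inside $2\int_0^1(1-t)\sum_{k<\ell}\partial_{k\ell}r(s,\bar a_\omega+t\delta)\,\delta^k\delta^\ell\,dt$ (see \eqref{pf2:taylor}). It then asserts that $r_{\mathrm{add}}$, whose components are never specified, reproduces all the non-cross terms. Hence $\eta$ equals that integral, and $2\int_0^1(1-t)\,dt=1$ gives the constant.

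That expansion is not an identity. In the integral form of Taylor's theorem the diagonal terms are $\int_0^1(1-t)\,\partial_{kk}r(s,\bar a_\omega+t\delta)\,(\delta^k)^2\,dt$, evaluated along the segment rather than at $\bar a_\omega$. For example, take $r(s,a)=(a^1)^4$ with $\bar a^1_\omega=0$: the displayed right-hand side vanishes, while the left-hand side equals $1$ at $a^1=\pm1$. Once placed correctly, the diagonal terms generally depend on all coordinates of $\delta$. No additive $r_{\mathrm{add}}$ can absorb them, and controlling them on that route would need third derivatives. This is exactly the obstruction you anticipated.

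Your argument avoids it through three pieces: the anchored decomposition, the coordinate-path telescoping, and the double use of the fundamental theorem of calculus. It needs only mixed partials. It also fixes which $r_{\mathrm{add}}$ the bound refers to, which matters because $\varepsilon_r^\omega$ depends on that choice. The paper's Taylor template buys only brevity and a parallel with Step~1 of the proof of Theorem~\ref{thm:3}.

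Your regularity caveat applies to the paper as well, since it also evaluates the Hessian at points $\bar a_\omega+t\delta\notin\mathcal{A}$. However, state it as a mild strengthening of Assumption~\ref{ass:3}, not a consequence of it: a neighbourhood of the finite set $\mathcal{A}$ need not contain the box.

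Finally, your $r_k$ depends on $\omega$ through the anchor $\bar a_\omega$. Because $\mathcal{I}_{\omega_0}$ and $\mathcal{I}_{\omega_1}$ partition $\mathcal{S}$, you can anchor at $\bar a_{\omega(s)}$, where $\omega(s)$ is the unique option initiable at $s$. This gives a single state-dependent $r_{\mathrm{add}}$ that serves both options. Your passage to \eqref{eq:thm2_Q} via Corollary~\ref{cor:fac:option} is the same as the paper's.
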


 Note that in our  reward function (Section~\ref{sec:methodology}; \ref{appendix:rewardfunction}), the rewards depend on
clinical outcomes rather than directly on the action vector.
Consequently, $\Gamma_{k\ell}=0$ for all $k,\ell$,
$\varepsilon_{r}=0$ exactly, and the reward-level factorization is error-free.
In a general multimorbidity setting where drug interaction penalties enter the reward directly, $\Gamma_{\mathrm{all}}>0$ and Theorem \ref{thm:2} quantifies the resulting error.%

Finally, Theorem \ref{thm:3} bounds the second source of error, arising from restricting Q to the additive architecture $\mathcal{F}$.
This architectural residual captures cross-domain interactions in the transition dynamics that the
additive structure cannot represent.

\begin{theorem}[Architectural factorization error bound]
\label{thm:3}
Under Assumptions \ref{ass:1}-- \ref{ass:2} and \ref{ass:4},
the error of the factored Q-network $\hat{Q}_{F}^{\,\omega}$
relative to $Q^{*,\omega}_{\mathrm{fac}}$ satisfies:
\begin{equation}
  \bigl\|Q^{*,\omega}_{\mathrm{fac}}-\hat{Q}_{F}^{\,\omega}
  \bigr\|_{\infty}
  \;\leq\;
  \frac{\gamma\cdot V^{\max}\cdot\Phi_{\mathrm{all}}
        \cdot\Delta_{\omega}^{2}}{2(1-\gamma)}.
  \label{eq:thm3}
\end{equation}
\end{theorem}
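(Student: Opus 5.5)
We follow the standard approximate-value-iteration template. The quantity $\|Q^{*,\omega}_{\mathrm{fac}}-\hat{Q}_{F}^{\,\omega}\|_{\infty}$ is controlled by the projection error that $\Pi_{\mathcal{F}}$ incurs when applied to $T_{\mathrm{fac}}$-images, amplified by $1/(1-\gamma)$. Concretely, we treat $\hat{Q}_{F}^{\,\omega}$ as the fixed point of $T_{\mathcal{F}}=\Pi_{\mathcal{F}}T_{\mathrm{fac}}$ restricted to $\mathcal{I}_\omega\times\mathcal{A}_\omega$. Writing $Q^{*}_{\mathrm{fac}}=T_{\mathrm{fac}}Q^{*}_{\mathrm{fac}}$, we decompose
\begin{equation*}
  \bigl\|Q^{*}_{\mathrm{fac}}-\hat{Q}_{F}\bigr\|_{\infty}
  \le \bigl\|T_{\mathrm{fac}}Q^{*}_{\mathrm{fac}}-\Pi_{\mathcal{F}}T_{\mathrm{fac}}Q^{*}_{\mathrm{fac}}\bigr\|_{\infty}
  +\bigl\|\Pi_{\mathcal{F}}T_{\mathrm{fac}}Q^{*}_{\mathrm{fac}}-\Pi_{\mathcal{F}}T_{\mathrm{fac}}\hat{Q}_{F}\bigr\|_{\infty}.
\end{equation*}
For the second term we take the paper's stated contraction property of $T_{\mathcal{F}}$ as given (a best-approximation projection in $\|\cdot\|_\infty$ is not automatically non-expansive, so we rely on the paper's claim rather than prove it). Then the second term is at most $\gamma\|Q^{*}_{\mathrm{fac}}-\hat{Q}_F\|_\infty$. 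Rearranging leaves
\begin{equation*}
  \|Q^{*}_{\mathrm{fac}}-\hat{Q}_F\|_\infty\le \frac{\delta}{1-\gamma},\qquad
  \delta:=\inf_{f\in\mathcal{F}}\|T_{\mathrm{fac}}Q^{*}_{\mathrm{fac}}-f\|_{\infty}.
\end{equation*}

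The core step is showing $\delta\le \tfrac{1}{2}\gamma V^{\max}\Phi_{\mathrm{all}}\Delta_\omega^2$. Since $r_{\mathrm{add}}$ is already additive in the action components, it lies in $\mathcal{F}$ and contributes nothing. It therefore suffices to approximate
\begin{equation*}
  g(s,a):=\gamma\,\mathbb{E}_{s'\sim\tilde P(\cdot|s,a)}[V(s')],\qquad V(s')=\max_{a'}Q^{*}_{\mathrm{fac}}(s',a'),
\end{equation*}
by an additive function of $a$. By Lemma~\ref{lem:1}, applied to the factored-reward problem with its own reward bound, we have $|V|\le V^{\max}$. Hence $f:=V/V^{\max}$ takes values in $[-1,1]$, and Assumption~\ref{ass:4} gives
\begin{equation*}
  \bigl|\partial_{a^k}\partial_{a^\ell}\,\mathbb{E}_{\tilde P(\cdot|s,a)}[V]\bigr|\le V^{\max}\Phi_{k\ell}\quad\text{on }\mathcal{U}.
\end{equation*}

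We then Taylor-expand $h(a):=\mathbb{E}_{\tilde P(\cdot|s,a)}[V]$ to second order about the componentwise mean $\bar a_\omega$ for fixed $s$. The zeroth-order, first-order and pure diagonal second-order terms $\partial_{kk}h\,(a^k-\bar a^k)^2/2$ are each functions of a single coordinate, so they belong to the additive class. The remainder is therefore bounded by the mixed terms
\begin{equation*}
  \sum_{k<\ell}|\partial_{k\ell}h(\xi)|\,|a^k-\bar a^k_\omega|\,|a^\ell-\bar a^\ell_\omega|\le V^{\max}\Phi_{\mathrm{all}}\Delta_\omega^2 .
\end{equation*}

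The main obstacle is that the diagonal Hessian entries are evaluated at the intermediate point $\xi$, which depends on all of $a$. To fix this we would use an integral-form remainder, or equivalently subtract the exact one-dimensional slices $h(\bar a^{-k},a^k)$ (an ANOVA-type decomposition), so that only mixed partials remain. Whether the constant comes out as $1/2$ depends on how the remainder is organized. The Lagrange form over the symmetric off-diagonal part gives $\tfrac12\sum_{k\neq\ell}=\sum_{k<\ell}$. Getting the claimed $\tfrac12$ then requires also taking the best sup-norm approximation, which can center the residual and halve its range: choose the additive part so that the residual $\rho$ satisfies $\sup\rho=-\inf\rho$, giving $\delta\le\tfrac12(\sup\rho-\inf\rho)$. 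Dividing $\delta$ by $1-\gamma$ then yields \eqref{eq:thm3}.

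Finally, all suprema are taken over $s\in\mathcal{I}_\omega$ and $a\in\mathcal{A}_\omega$. This makes the bound option-conditioned, exactly as in Corollary~\ref{cor:fac:option}.
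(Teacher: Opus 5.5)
\textbf{The gap is the extra factor $\tfrac12$; the rest of your argument works.}

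Your slice (anchored ANOVA) decomposition is the right tool. Take $f(s,a)=\sum_k g(s,\bar a^{-k}_\omega,a^k)-(K-1)\,g(s,\bar a_\omega)\in\mathcal F$. For $K=3$, telescoping through $g(s,\bar a^{1}_\omega,a^{2},a^{3})$ absorbs the three-way term into two mixed differences. The residual then satisfies $|\rho|\le\gamma V^{\max}\sum_{k<\ell}\Phi_{k\ell}|\delta^k||\delta^\ell|\le M:=\gamma V^{\max}\Phi_{\mathrm{all}}\Delta_\omega^2$. The Taylor $\tfrac12$ is already used up in turning $\tfrac12\sum_{k\neq\ell}$ into $\sum_{k<\ell}$.

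Re-centering by a constant only gives $\delta\le\tfrac12(\sup\rho-\inf\rho)$. That improves on $M$ only if $\rho$ is essentially one-signed. But $\rho$ is built from the products $\delta^k\delta^\ell$, which change sign across $\mathcal A_\omega$, so you just recover $\delta\le M$.

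No other additive fit does better. Take $\mathcal S=\{s_0,s_+,s_-\}$ with $s_\pm$ absorbing, $r(s_\pm,\cdot)=\pm R_{\max}$ and $r(s_0,\cdot)=0$. From $s_0$, let the chain move to $s_\pm$ with probability $\tfrac12\pm c\,a^{1}a^{2}$, where $0<c\le\tfrac14$, the BMI component is inert, $s_0\in\mathcal I_\omega$ and $\mathcal A_\omega=\mathcal A$. Then:
\begin{itemize}
\item $\varepsilon_r=0$, $\Delta_\omega=1$, $\Phi_{\mathrm{all}}=\Phi_{12}=2c$, and $Q^{*}_{\mathrm{fac}}(s_0,a)=\gamma V^{\max}\Phi_{\mathrm{all}}\,a^{1}a^{2}$.
\item The alternating sum over the four corners $a^{1},a^{2}\in\{\pm1\}$ vanishes for every additive function, but equals $4\gamma V^{\max}\Phi_{\mathrm{all}}$ for this $Q^{*}_{\mathrm{fac}}$.
\item Hence every element of $\mathcal F$ is at sup-distance at least $\gamma V^{\max}\Phi_{\mathrm{all}}\Delta_\omega^2$ from $Q^{*}_{\mathrm{fac}}=T_{\mathrm{fac}}Q^{*}_{\mathrm{fac}}$.
\end{itemize}
So $\delta\le\tfrac12\gamma V^{\max}\Phi_{\mathrm{all}}\Delta_\omega^2$ is false. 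For $\gamma<\tfrac12$, the theorem's inequality itself fails for every $\hat Q_F^{\,\omega}\in\mathcal F$. What your argument actually proves, granting the contraction of $T_{\mathcal F}$, is the bound with $1-\gamma$ in place of $2(1-\gamma)$.

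\textbf{Comparison with the paper's proof.} The paper ends at the same place. Its Steps 2--3 yield $(1-\gamma)\|Q^{*,\omega}_{\mathrm{fac}}-\hat Q_F^{\,\omega}\|_\infty\le\gamma V^{\max}\Phi_{\mathrm{all}}\Delta_\omega^2$. The final $\tfrac12$ comes from invoking $\int_0^1(1-t)\,dt=\tfrac12$ a second time, after it has already cancelled the $2$ in front of the cross-term of the remainder. Your suspicion about the constant is justified, but you should not claim the halving.

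Beyond that, the two routes differ in useful ways:
\begin{itemize}
\item The paper splits $Q^{*}_{\mathrm{fac}}-\hat Q_F=(T_{\mathrm{fac}}Q^{*}_{\mathrm{fac}}-T_{\mathrm{fac}}\hat Q_F)+(T_{\mathrm{fac}}\hat Q_F-\hat Q_F)$. It therefore needs only the contraction of $T_{\mathrm{fac}}$ (via the max-inequality) and the (approximate) fixed-point property of $\hat Q_F^{\,\omega}$. It never needs the sup-norm non-expansiveness of $\Pi_{\mathcal F}$ that you had to borrow.
\item Adopting that split would remove your weakest dependency. The cost is that the residual is then controlled through $\|\hat Q_F^{\,\omega}\|_\infty$, which the paper merely asserts is at most $V^{\max}$. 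You instead control it through $V^{*}_{\mathrm{fac}}$, which Lemma~\ref{lem:1} does bound here because $r_{\mathrm{add}}=r$.
\item Your slice decomposition treats the diagonal second-order terms correctly. The paper's expansion keeps them at $\bar a_\omega$ and asserts they cancel, which is not an exact identity.
\item Both arguments tacitly need $\mathcal U$ to contain the box spanned by $\mathcal A_\omega$, since $\bar a_\omega$ is generally not a grid point.
\end{itemize}
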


Combining the reward interaction residuals and architectural error bounds, we obtain:%
\begin{corollary}[Total error decomposition]\label{cor:thm3_total}
Combining Theorems \ref{thm:2}-- \ref{thm:3},
the total error satisfies:
\begin{equation}
  \bigl\|Q^{*,\omega}-\hat{Q}_{F}^{\,\omega}\bigr\|_{\infty}
  \;\leq\;
  \frac{\Gamma_{\mathrm{all}}\cdot\Delta_{\omega}^{2}}{1-\gamma}
  \;+\;
  \frac{\gamma\cdot V^{\max}\cdot\Phi_{\mathrm{all}}
        \cdot\Delta_{\omega}^{2}}{2(1-\gamma)}.
  \label{eq:thm3_total}
\end{equation}
\end{corollary}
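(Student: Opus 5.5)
The corollary follows from the triangle-inequality decomposition~\eqref{eq:error_decomp}, applied option by option, with each piece bounded by an earlier result. Fix an option $\omega$ and work on the domain $\mathcal{I}_{\omega}\times\mathcal{A}_{\omega}$ on which the option-conditioned quantities $Q^{*,\omega}$, $Q^{*,\omega}_{\mathrm{fac}}$ and $\hat{Q}_{F}^{\,\omega}$ are defined. Insert and subtract $Q^{*,\omega}_{\mathrm{fac}}$ to get
\begin{equation*}
\bigl\|Q^{*,\omega}-\hat{Q}_{F}^{\,\omega}\bigr\|_{\infty}\le\bigl\|Q^{*,\omega}-Q^{*,\omega}_{\mathrm{fac}}\bigr\|_{\infty}+\bigl\|Q^{*,\omega}_{\mathrm{fac}}-\hat{Q}_{F}^{\,\omega}\bigr\|_{\infty}.
\end{equation*}
This is exactly~\eqref{eq:error_decomp} restricted to option $\omega$, and the restriction is harmless because the sup norm over a subset is bounded by the sup norm over any superset on which both functions are defined.

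\textbf{First term.} I would bound it by Theorem~\ref{thm:2}. Assumption~\ref{ass:3} gives $\varepsilon_{r}^{\omega}\le\Gamma_{\mathrm{all}}\Delta_{\omega}^{2}$. Feeding this into the option-restricted bound~\eqref{eq:thm1_omega} of Theorem~\ref{thm:1} (equivalently Corollary~\ref{cor:fac:option}) yields~\eqref{eq:thm2_Q}, i.e.\ the first term is at most $\Gamma_{\mathrm{all}}\Delta_{\omega}^{2}/(1-\gamma)$.

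\textbf{Second term.} I would bound it by Theorem~\ref{thm:3} under Assumption~\ref{ass:4}, which gives $\gamma V^{\max}\Phi_{\mathrm{all}}\Delta_{\omega}^{2}/(2(1-\gamma))$. Adding the two bounds gives~\eqref{eq:thm3_total}.

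\textbf{Main obstacle.} The only real risk is a bookkeeping mismatch in which norm each theorem controls, so I would check it first. Theorem~\ref{thm:2} is stated with $\|\cdot\|_{\infty}$ on option-conditioned functions, whereas Theorem~\ref{thm:1} gives a supremum over $\mathcal{I}_{\omega}\times\mathcal{A}_{\omega}$. I would read all norms in the corollary as suprema over the same domain, $\mathcal{I}_{\omega}\times\mathcal{A}_{\omega}$. If Theorem~\ref{thm:3} is instead stated on a larger domain, it still dominates the supremum on this subset, so the triangle inequality applies pointwise there and then passes to the supremum.

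The hypotheses are simply the union of those of Theorems~\ref{thm:2} and~\ref{thm:3}, namely Assumptions~\ref{ass:1}--\ref{ass:4}. One can also note that $\Delta_{\omega}$ enters both bounds through the same definition in Assumption~\ref{ass:2}, which is why the result factors as $\Delta_{\omega}^{2}\bigl(\Gamma_{\mathrm{all}}+\gamma V^{\max}\Phi_{\mathrm{all}}/2\bigr)/(1-\gamma)$.
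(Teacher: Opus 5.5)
Your proposal is correct and matches the paper's proof: apply the triangle inequality to the decomposition \eqref{eq:error_decomp} for option $\omega$, bound the first term by \eqref{eq:thm2_Q} from Theorem~\ref{thm:2} and the second by \eqref{eq:thm3} from Theorem~\ref{thm:3}, then add the two bounds. Reading every norm as a supremum over $\mathcal{I}_{\omega}\times\mathcal{A}_{\omega}$ makes explicit a point the paper leaves implicit, but the argument is the same.
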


 The two terms in~\eqref{eq:thm3_total} correspond to complementary sources of approximation error.
The first term captures  the mismatch between the true and factored reward functions; under the clinical reward of this paper this term vanishes.
The second term reflects the inability of the additive architecture to represent cross-domain value interactions induced by the transition dynamics, and is small when T2DM and HTN treatments  exhibit limited cross-domain coupling, a
condition that is clinically plausible and supported empirically %
(see~\ref{appendix:numerical_bound}).

\subsection{Cooperation Value Dominance}
\label{subsec:struct_cooperation}
As detailed in Section \ref{sec:methodology}, we operationalize patient preferences as a binary cooperation indicator
$c\in\{0,1\}$, inferred from each patient's longitudinal BMI trajectory
(using imputed clinical values) as a proxy for patient's preference and behavioral willingness and capability to follow lifestyle or other non-pharmacological recommendations for engagement in BMI reduction. Therefore, we then define the BMI-action feasibility set as
{%
\begin{equation}
  \mathcal{A}_{\mathrm{BMI}}(s,c)
  :=\begin{cases}
    \{0,1\}, & c=1\;\text{and}\;b(s)>0,\\
    \{0\},   & \text{otherwise.}
  \end{cases}
  \label{eq:BMI_gate}
\end{equation}
}%

Consequently, the full feasible action sets for cooperative and non-cooperative eligible (high-BMI) patients ($b(s)>0$)  are given by  $\mathcal{A}_{1}(s):=\{-1,0,1\}^{2}\times\mathcal{A}^{\mathrm{BMI}}(s,1)$
and
$\mathcal{A}_{0}(s):=\{-1,0,1\}^{2}\times\mathcal{A}^{\mathrm{BMI}}(s,0)$,
\label{eq:feasible_coop}
yielding  18 feasible  actions  for the cooperative ones ($c=1$), and 9 actions for non-cooperative ones ($a_{\mathrm{BMI}}=0$). This   implies $\mathcal{A}_{0}(s) \subseteq \mathcal{A}_{1}(s)$ for all $s$. %

This motivates the  question of whether allowing BMI
co-treatment, when the patient is eligible, 
improves the optimal policy value.
We formalize this by comparing two cooperation regimes.
Let $V^{*}(s;\,c)$ denote the optimal expected discounted
cumulative reward when the feasible action set at every
visited state is $\mathcal{A}_{c}(\cdot)$, i.e.,
  $V^{*}(s;\,c)
  \;:=\;
  \sup_{\pi\;:\;\pi(s')\in\mathcal{A}^{c}(s')\;\forall s'}
  \mathbb{E}_{\pi}\!\left[
    \sum_{t=0}^{\infty}\gamma^{t}r_{t}\;\Big|\;s_{0}=s
  \right]$.
  \label{app:coop:Vstar}
Theorem \ref{thm:coopdominance}  establishes that cooperation is value-improving and under identical states  differing only in cooperation status, the value function   of cooperative patients  dominates that of non-cooperative patients. 

{%
\begin{theorem}[Cooperation value dominance]
\label{thm:coopdominance}
For all $s\in\mathcal{S}$,
\begin{equation}
  V^{*}(s;\,1) \;\geq\; V^{*}(s;\,0).
  \label{app:coop:dominance}
\end{equation}
\end{theorem}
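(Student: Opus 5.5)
The plan is a feasible-set inclusion argument: enlarging the admissible action set at every state can only raise the supremum defining $V^{*}(s;c)$. The key structural fact, noted just before the statement, is $\mathcal{A}_{0}(s)\subseteq\mathcal{A}_{1}(s)$ for all $s$. This follows from \eqref{eq:BMI_gate}, since $\mathcal{A}_{\mathrm{BMI}}(s,0)=\{0\}\subseteq\mathcal{A}_{\mathrm{BMI}}(s,1)$ whether $b(s)>0$ (giving $\{0\}\subseteq\{0,1\}$) or $b(s)=0$ (giving $\{0\}\subseteq\{0\}$), and the pharmacological components $\{-1,0,1\}^{2}$ are identical in both regimes. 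I would state this as a first step, checking both cases of the gate explicitly so that the inclusion holds at every state, not only at eligible high-BMI states.

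Next, I would fix the class of admissible policies and show the policy classes are nested. Let $\Pi_{c}$ denote the set of (possibly history-dependent, randomized) policies with $\pi(s')\in\mathcal{A}_{c}(s')$ at every visited $s'$. Since the transition kernel $P$ and reward $r$ are common to both regimes, with $c$ only restricting actions, the inclusion of feasible sets gives $\Pi_{0}\subseteq\Pi_{1}$. For each $\pi\in\Pi_{0}$, the value $\mathbb{E}_{\pi}[\sum_t\gamma^t r_t\mid s_0=s]$ is the same number whether $\pi$ is viewed in regime $0$ or regime $1$. This quantity is well defined and bounded by $V^{\max}$ by Assumption \ref{ass:1} and Lemma \ref{lem:1}, so the suprema are finite. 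Taking the supremum over the larger set then yields $V^{*}(s;1)=\sup_{\Pi_1}\geq\sup_{\Pi_0}=V^{*}(s;0)$.

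As an alternative, Bellman-operator route, I would define the constrained operators
\[
(T_{c}V)(s)=\max_{a\in\mathcal{A}_{c}(s)}\bigl\{r(s,a)+\gamma\,\mathbb{E}_{s'}[V(s')]\bigr\}.
\]
Each $T_{c}$ is a monotone $\gamma$-contraction with fixed point $V^{*}(\cdot;c)$. The inclusion of feasible sets gives $T_{0}V\leq T_{1}V$ pointwise. Starting from $V^{*}(\cdot;0)=T_{0}V^{*}(\cdot;0)\leq T_{1}V^{*}(\cdot;0)$ and iterating $T_{1}$ with monotonicity gives $V^{*}(\cdot;0)\leq T_{1}^{n}V^{*}(\cdot;0)\to V^{*}(\cdot;1)$. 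This version also extends directly to the option level, replacing $r,P$ by \eqref{eq:option_reward}--\eqref{eq:option_kernel} with intra-option policies constrained to $\mathcal{A}_{c}$.

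The main subtlety is definitional rather than technical. The cooperation label $c$ is a patient-level attribute that appears as a state coordinate, so the comparison must hold the dynamics fixed and let $c$ enter only through the feasibility gate. Equivalently, $V^{*}(s;c)$ is computed on one common MDP, as the displayed definition does, rather than comparing states whose $c$-coordinates change the transition law. I would state this interpretation explicitly before the argument. I would also remark that the inequality can be strict only at states from which an eligible state with $b(s)>0$ is reachable and $a_{\mathrm{BMI}}=1$ is strictly advantageous there.
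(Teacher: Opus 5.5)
Your proposal is correct and follows the same route as the paper. The paper's proof also observes $\mathcal{A}_{0}(s')\subseteq\mathcal{A}_{1}(s')$ for every $s'$, deduces $\Pi^{0}\subseteq\Pi^{1}$, and concludes $V^{*}(s;1)=\sup_{\pi\in\Pi^{1}}V^{\pi}(s)\geq\sup_{\pi\in\Pi^{0}}V^{\pi}(s)=V^{*}(s;0)$; your explicit two-case check of the BMI gate and the monotone Bellman-operator variant are both sound but not needed.
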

}%

Theorem \ref{thm:coopdominance}
shows that enabling BMI co-treatment is weakly beneficial, and strictly improves value whenever `BMI-beneficial states' are visited with positive probability.
\begin{definition}[BMI-beneficial state]
\label{def:coop:bmi_val}
A state $s^{*}\in\mathcal{S}$ is BMI-beneficial if $b(s^{*})>0$ (the patient has elevated BMI), and
   there exist actions $a^{+},a^{-}\in\mathcal{A}_{1}(s^{*})$
    that  differ only in the BMI component 
    ($a^{+}_{\mathrm{BMI}}=1$, $a^{-}_{\mathrm{BMI}}=0$) such that
      $\mathbb{E}_{s'\sim P(\cdot|s^{*},\,a^{+})}\!
        \bigl[V^{*}(s';\,1)\bigr]
      \;>\;
      \mathbb{E}_{s'\sim P(\cdot|s^{*},\,a^{-})}\!
        \bigl[V^{*}(s';\,1)\bigr].$
      \label{app:coop:bmi_val_cond}
\end{definition}

\begin{proposition}[Strict superiority]
\label{prop:coop:strict}
Suppose there exists a BMI-beneficial state
$s^{*}\in\mathcal{S}$
(Definition \ref{def:coop:bmi_val})
that is reachable from some $s_{0}\in\mathcal{S}$ with
 positive probability under the optimal policy
$\pi^{0,*}$ associated with $c=0$.
Then
  $V^{*}(s_{0};\,1) \;>\; V^{*}(s_{0};\,0).$
  \label{app:coop:strict}
\end{proposition}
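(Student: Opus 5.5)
The plan is a policy-improvement argument built on Theorem \ref{thm:coopdominance}. Let $\pi^{0,*}$ be the optimal $c=0$ policy; I take it to be a stationary deterministic optimal policy, which exists since $\mathcal{A}$ is finite and rewards are bounded (Assumption \ref{ass:1}). Write $Q^{*}(s,a;1):=r(s,a)+\gamma\,\mathbb{E}_{s'\sim P(\cdot|s,a)}[V^{*}(s';1)]$ for the $c=1$ optimal action-value. The Bellman optimality equation for the $c=1$ regime gives
\begin{equation*}
  V^{*}(s;1)=\max_{a\in\mathcal{A}_{1}(s)}Q^{*}(s,a;1)\;\geq\;Q^{*}(s,a;1)\qquad\text{for all } a\in\mathcal{A}_{1}(s).
\end{equation*}
Since $\mathcal{A}_{0}(s)\subseteq\mathcal{A}_{1}(s)$, this holds in particular for $a=\pi^{0,*}(s)$.

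\textbf{Step 1: strict gain at $s^{*}$.} Because $b(s^{*})>0$, the feasible set for $c=0$ at $s^{*}$ contains only actions with $a_{\mathrm{BMI}}=0$. So $a^{0}:=\pi^{0,*}(s^{*})$ has BMI component $0$. Let $a^{0,+}$ be $a^{0}$ with the BMI component switched to $1$; it lies in $\mathcal{A}_{1}(s^{*})$. I will argue that $Q^{*}(s^{*},a^{0,+};1)>Q^{*}(s^{*},a^{0};1)$, which then gives
\begin{equation*}
  V^{*}(s^{*};1)\;>\;Q^{*}(s^{*},a^{0};1)\;\geq\; r(s^{*},a^{0})+\gamma\,\mathbb{E}[V^{*}(s';0)]\;=\;V^{*}(s^{*};0).
\end{equation*}
The second inequality uses Theorem \ref{thm:coopdominance} pointwise inside the expectation.

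For the strict inequality in $Q^{*}(\cdot;1)$ I need two facts.
\begin{itemize}
  \item The rewards must coincide, $r(s^{*},a^{0,+})=r(s^{*},a^{0})$. The paper notes (after Theorem \ref{thm:2}) that the reward depends on clinical outcomes, not directly on the action vector, so I use this.
  \item The expected continuation value must be strictly larger under $a^{0,+}$. Definition \ref{def:coop:bmi_val} only guarantees this for some pair $(a^{+},a^{-})$, not necessarily for the pair $(a^{0,+},a^{0})$.
\end{itemize}
\textbf{This gap is the main obstacle.} I will first try the weaker route. Since $a^{+}\in\mathcal{A}_{1}(s^{*})$, we have
\begin{equation*}
  V^{*}(s^{*};1)\;\geq\;Q^{*}(s^{*},a^{+};1)\;>\;Q^{*}(s^{*},a^{-};1).
\end{equation*}
But this must then be compared with $V^{*}(s^{*};0)=\max_{a\in\mathcal{A}_{0}(s^{*})}[\cdot]$, which requires $a^{-}$ to be the $c=0$ maximiser. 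I therefore expect to read Definition \ref{def:coop:bmi_val} as applying to the pair in which $a^{-}=\pi^{0,*}(s^{*})$, or equivalently to assume the BMI benefit holds uniformly across the pharmacological components. I will state this interpretation explicitly in the proof.

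\textbf{Step 2: propagating the gap to $s_{0}$.} Set $D(s):=V^{*}(s;1)-V^{*}(s;0)\ge 0$. Unroll the $c=0$ Bellman equation along $\pi^{0,*}$ and bound $V^{*}(s;1)$ below by $Q^{*}(s,\pi^{0,*}(s);1)$ at every step. This gives
\begin{equation*}
  D(s_{0})\;\geq\;\gamma^{t}\,\mathbb{E}_{\pi^{0,*}}\!\left[D(s_{t})\right]\qquad\text{for every } t.
\end{equation*}
Choose $t$ such that $\Pr_{\pi^{0,*}}(s_{t}=s^{*}\mid s_{0})>0$; this is the reachability hypothesis. Then, using $D\ge 0$ everywhere,
\begin{equation*}
  D(s_{0})\;\geq\;\gamma^{t}\,\Pr(s_{t}=s^{*})\,D(s^{*})\;>\;0.
\end{equation*}
For general (non-discrete) state spaces, I would replace the point probability with the positive-probability event on which Step 1 applies, and use bounded measurability (Lemma \ref{lem:1} keeps all values finite).
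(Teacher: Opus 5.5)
Your proof is correct under the strengthened reading you state, and it takes a different route from the paper. The paper argues constructively. It modifies $\pi^{0,*}$ only at $s^{*}$, playing $a^{+}$ there, to obtain $\tilde\pi\in\Pi^{1}$. It then invokes a performance-difference identity $V^{\tilde\pi}(s_{0})=V^{\pi^{0,*}}(s_{0})+p^{*}\gamma^{t}\Xi(s^{*})+O(\gamma^{t+1})$, where $\Xi(s^{*})$ is the $V^{*}(\cdot;1)$-continuation gap of Definition~\ref{def:coop:bmi_val}.

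You never evaluate a modified policy. You compare the two optimal value functions directly, using three ingredients:
\begin{itemize}
\item the inequality $D(s)\ge\gamma\,\mathbb{E}_{s'\sim P(\cdot\mid s,\pi^{0,*}(s))}[D(s')]$;
\item $D\ge 0$ from Theorem~\ref{thm:coopdominance};
\item strictness at $s^{*}$ from the $c=1$ Bellman equation.
\end{itemize}

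The paper's route buys an explicit, clinically readable improving policy (``add BMI co-treatment at that encounter''). Yours buys exactness, and it is the one that actually uses the definition as written. A one-state switch of $\pi^{0,*}$ has advantage $r(s^{*},a^{+})+\gamma\,\mathbb{E}_{s'\sim P(\cdot\mid s^{*},a^{+})}[V^{*}(s';0)]-V^{*}(s^{*};0)$. Its continuation is scored with $V^{*}(\cdot;0)$, not $V^{*}(\cdot;1)$, and the paper does not control the sign of the $O(\gamma^{t+1})$ remainder. In fact, suppose $a^{+}$ leads to a high-BMI state whose $c=1$ value relies on continuing BMI co-treatment there. Then $\tilde\pi$, which reverts to $\pi^{0,*}$ at that state, can do strictly worse than $\pi^{0,*}$ even though the definition holds with $a^{-}=\pi^{0,*}(s^{*})$. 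The construction is repaired only by following the $c=1$ optimal policy after reaching $s^{*}$, which is your inequality in policy form.

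You are also right that extra hypotheses are needed. The paper's proof uses them tacitly: it compares $a^{+}$ with $a^{-}$ as if $a^{-}=\pi^{0,*}(s^{*})$ and as if the immediate rewards agreed. The literal definition does not suffice. Suppose that at $s^{*}$ every action with $a_{\mathrm{T2DM}}=1$ reaches, regardless of its BMI component, a normal-BMI state that beats everything else, while $(0,0,1)$ beats $(0,0,0)$. Then $s^{*}$ is BMI-beneficial, yet $V^{*}(s^{*};1)=V^{*}(s^{*};0)$.

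On the reward, do not lean on the remark after Theorem~\ref{thm:2}. The clinical reward is a function of $(s_{t},s_{t+1})$, so $r(s,a)$ depends on $a$ through $P$. The clean hypothesis is $Q^{*}(s^{*},a^{0,+};1)>Q^{*}(s^{*},a^{0};1)$ with $a^{0}=\pi^{0,*}(s^{*})$, which is exactly what your Step 1 uses.
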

In the clinical setting of this study, BMI-beneficial states correspond
to encounters satisfying three criteria simultaneously: the patient is
cooperative ($c=1$), overweight or obese
(BMI$\,\geq 25$\,kg/m$^{2}$), and clinically uncontrolled
(A1C$\,{>}\,7.0$\,\% or SBP$\,{>}\,130$\,mmHg), so that the
improvement bonus and QALY gain together ensure
condition~\eqref{app:coop:bmi_val_cond} holds.
Auditing all three data splits directly confirms that 190,904
transitions (32.1\%) and 18,761 distinct patients (39.7\%) satisfy
all three criteria, with stable proportions across train, validation,
and test splits (32.1\%, 32.7\%, 31.5\%).
The mean reward conditional on this subset (0.028--0.033) is strictly
positive versus $-$0.015 overall, providing direct empirical evidence
that strict superiority (Proposition \ref{prop:coop:strict}) holds for
a large, structurally stable subpopulation rather than a marginal edge
case.
See~\ref{appendix:strict_superiority_cooporative_patients}for full criterion-level counts.

\section{Solution Algorithm}
\label{sec:SolutionAlgorithm}

Non-pharmaceutical interventions such as BMI reduction operate on
a longer time horizon than pharmacological adjustments; without explicit
structural support, a standard Q-head will suppress their contribution
under the stronger immediate signal from medication escalation.
Our age-stratified QALY reward encodes long-run quality-of-life impact,
while the factored Q-value decomposition ensures the BMI component
receives an independent gradient signal weighted at
$\lambda_{\mathrm{BMI}}=0.2$, separate from the T2DM and HTN
components ($\lambda=0.4$ each).

Clinicians managing multimorbid patients reason hierarchically: they
first assess whether current medications across all active conditions
require adjustment, or whether emerging comorbidities and patient
response to medications warrant new treatment, and then finalize
specific treatment decisions subject to patient preferences and
safety constraints.
To address this multi-level combinatorial action space, we develop the
FAHOC framework,
which integrates the option-critic architecture of
\citet{bacon2017option} for high-level clinical strategy selection
with the factored action representation of \citet{tang2022leveraging},
and equips both with action masking to learn patient-preference-compliant
treatment policies.
Offline stability is ensured through conservative Q-learning
regularization~\citep{kumar2020conservative} on the intra-option heads
and a multi-term termination loss that augments the option-critic
gradient with entropy and anchor regularizers calibrated to the
static clinical data setting.

Figure~\ref{fig:HRIDiagram} presents an overview of the proposed model.
The network comprises four components: a shared encoder $\phi$ that
maps the patient state to a latent representation shared across all
heads; a high-level critic that scores two clinical strategies
($\omega_0$: Single-Target, $\omega_1$: Multi-Target) and selects
the greedy option; per-option factored Q-heads that decompose
treatment decisions into three disease-specific components; and
per-option termination heads that govern whether the current clinical
strategy should persist or be re-evaluated at the next visit.
All Linear layers are initialized with Kaiming uniform
weights and biases; full architectural details and layer dimensions
are given in~\ref{appendix:FactoredOption-CriticNetwork}. The intra-option policy is defined by the weighted factored
composition:

\begin{equation}
  Q(s,\omega,a) = 0.4\,Q^{\omega}_{\mathrm{T2DM}}(s,a_{\mathrm{T2DM}})+0.4\,Q^{\omega}_{\mathrm{HTN}}(s,a_{\mathrm{HTN}})+0.2\,Q^{\omega}_{\mathrm{BMI}}(s,a_{\mathrm{BMI}}).
  \label{eq:factored_q}
\end{equation}

\noindent where weights $\lambda_i\in\{0.4,0.4,0.2\}$ reflect the relative
clinical priority of glycaemic and blood-pressure control over
BMI management, ensuring the BMI head receives an independent
gradient signal rather than being suppressed by stronger immediate
medication signals.

\begin{figure}[h]
    \centering
    \includegraphics[width=0.7\linewidth]{%
      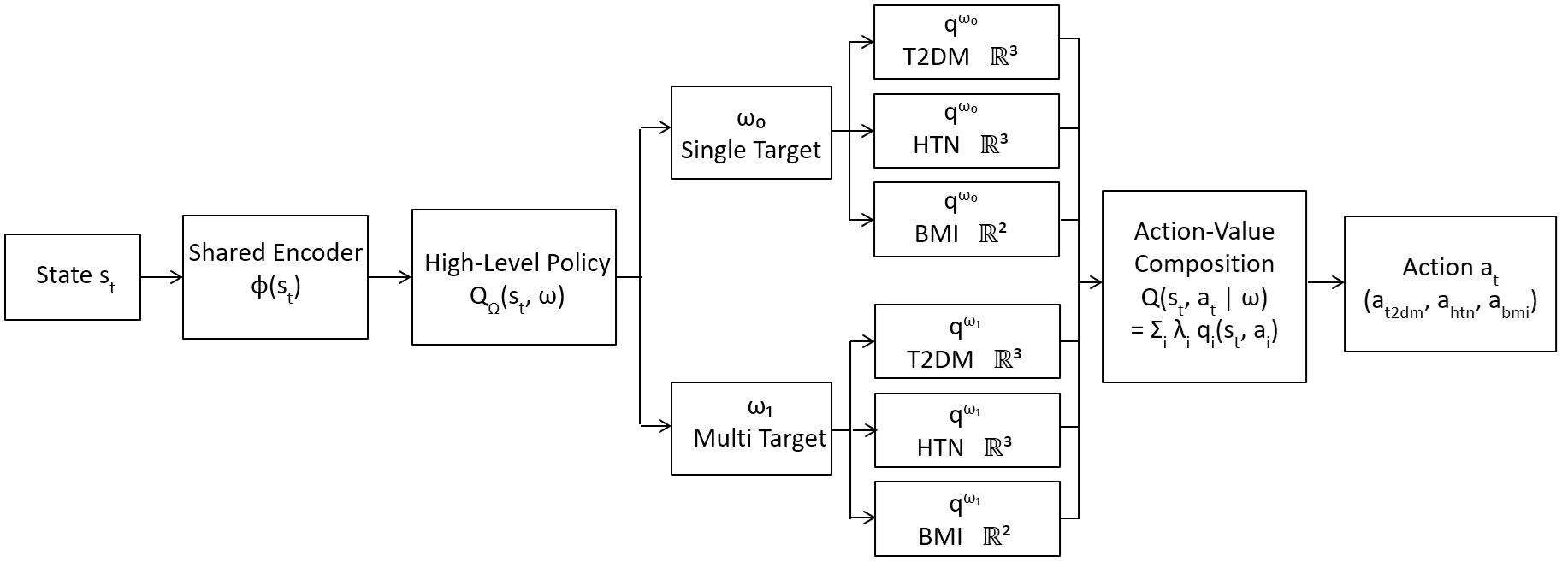}
    \caption{Overview of the FAHOC model. The shared encoder
    $\phi(s_t)$ maps the patient state to a latent representation;
    the high-level critic $Q_\Omega(s_t,\omega)$ induces a greedy
    policy $\omega^*=\arg\max_\omega Q_\Omega(s_t,\omega)$ over two
    clinical options ($\omega_0$: Single-Target,
    $\omega_1$: Multi-Target); per-option factored Q-heads produce
    the weighted composition $Q(s_t,a_t\mid\omega)$
    (Eq.~\eqref{eq:factored_q}) over three treatment dimensions
    (T2DM $\in\mathbb{R}^3$, HTN $\in\mathbb{R}^3$,
    BMI $\in\mathbb{R}^2$); and action masking enforces
    patient-preference and clinical safety constraints.
    Per-option termination heads $\beta_\omega(s_t)\in(0,1)$
    govern option switching via the option-critic gradient
    (see~\ref{appendix:FactoredOption-CriticNetwork}).}
    \label{fig:HRIDiagram}
\end{figure}

At each training step the agent observes a stored patient transition
comprising the current state, the assigned clinical option, the
clinician's treatment action, the received reward, the next state,
and an episode termination indicator.
The low-level intra-option Q-heads are trained using a Double DQN
target that incorporates option-utility: the continuation value
if the current option persists, blended with the best available
option value if the option terminates, weighted by the
target-network termination probability at the next state.
A conservative Q-learning penalty suppresses overestimation of
out-of-distribution actions not seen in the offline dataset.
The high-level option critic is trained on a separate target
that weights the intra-option value at the current state against
the best option value at the next state, again via the
target-network termination probability.
The termination heads are updated in a separate backward pass
that does not affect the shared encoder, using an extended
option-critic gradient augmented with an entropy penalty and
a quadratic anchor toward a target termination probability of
$\bar\beta=0.30$.
The encoder is updated exactly once per step through a joint
backward pass over the low- and high-level losses only.
All TD targets are clipped and all gradient norms are clipped
to prevent instability under offline distributional shift.
Full TD target definitions, loss equations, and training
pseudocode are provided in~\ref{appendix:solutionalgorithms}.

\begin{algorithm}
\footnotesize
\caption{FAHOC Training (Overview)}
\label{alg:overview}
\begin{algorithmic}[1]

\Require Offline buffer $\mathcal{B}$,
         network parameters $\theta$,
         discount $\gamma{=}0.97$,
         soft-update rate $\tau{=}0.001$

\State \textbf{Initialize} online network $\theta$ with Kaiming
       uniform; copy to target network $\theta^{-}$
       \hfill\Comment{\ref{appendix:FactoredOption-CriticNetwork}}

\State \textbf{Fill} $\mathcal{B}$ using stratified initial
       priorities to up-weight active treatment transitions
       \hfill\Comment{Alg.\ref{alg:buffer}}

\For{each training step}

  \State Sample a mini-batch with importance-sampling weights
         from the prioritized replay buffer

\State Evaluate factored Q-values for all actions via
       Eq.~\eqref{eq:factored_q}; set inadmissible
       entries to $-\infty$ via the action mask
       \hfill\Comment{Algs.~\ref{alg:factored}--\ref{alg:mask}}

  \State Compute low- and high-level Double DQN TD targets
         using target-network termination probabilities
         and option-utility blending
         \hfill\Comment{\ref{appendix:solutionalgorithms}}

  \State \textbf{Update} encoder $\phi$, high-level critic, and
         intra-option heads via a single joint backward pass;
         clip gradient norm to $1.0$
         \hfill\Comment{\ref{appendix:solutionalgorithms}}

  \State \textbf{Update} termination heads via a separate
         backward pass; clip gradient norm to $0.5$
         \hfill\Comment{\ref{appendix:solutionalgorithms}}

  \State Update replay priorities from TD errors;\;
         soft-update target network
         $\theta^{-}{\leftarrow}
         \tau\theta{+}(1{-}\tau)\theta^{-}$

\EndFor
\end{algorithmic}
\end{algorithm}

\section{Computational Results}
\label{sec:ComputationalResults}

In this  section, we   %
first  present our data  and summarize the data preprocessing results. Next, we describe the results related to  model training  convergence. Consequently,  we provide the results comparing the degree of agreement between the model-recommended policies and historical clinician recommendations.  
Next we provide a few representative patient treatment trajectories, examine the patient preference compliance to find the effectivness of action masking,
BMI-specific performance, off-policy policy-value estimates, and
action-distribution analysis.
\subsection{Data  and   Preprocessing}
\label{sec:dataset}
The study cohort was drawn from EHR at five hospitals in the Southeast United States, collected between 2014 and 2025, comprising 48,015 outpatients with comorbid T2DM and HTN. Each
patient had at least two encounters within one year, aggregated
into 3-month intervals consistent with standard T2DM and HTN
follow-up schedules \citep{xu2024optimizing}. Patients with
cancer diagnoses ($n=717$) were excluded due to treatment-related
weight loss confounding~\citep{APM22915}, yielding a final cohort
of 47,298 patients with 594,667 encounters.
To prevent data leakage,  
33,108
patients (447,988~encounters), 7,095 patients
(97,326~encounters), and 7,095 patients (96,651~encounters) were randomly selected and set aside for training, validation, and testing, respectively.  %
All imputation parameters are fitted on the training split only and applied to all splits at inference time. Continuous features are standardized using parameters fitted on the training split only, preventing data leakage.
Missing values are imputed with a constrained iterative imputer whose bounds are derived from
clinically valid ranges. 
Cooperation labels are inferred from imputed BMI values after the imputer is fitted on training data, so no cooperation-related statistics from validation or test patients 
contaminate the training process.
After converting sequential encounters into state-action-reward-next-state transition tuples, the datasets yielded 414,880, 90,231, and 89,556 training, validation, and testing transitions.

The patient preference modeling for cooperation in BMI reduction described in Algorithm \ref{alg:cooperation} %
identified
58.1\,\% of training patients as cooperative and 41.9\,\% as non-cooperative,
with comparable proportions in the validation (59.6\,\% / 40.4\,\%) and
testing (57.5\,\% / 42.5\,\%) sets.
Table \ref{tab:cohort_stats} presents the distribution of the data after imputation before entering the RL data arrangement pipeline: 
\begin{table}%
\centering
\footnotesize
\caption{Cohort and Transition Dataset Statistics}
\label{tab:cohort_stats}
\setlength{\tabcolsep}{5pt}
\begin{tabular}{lrrr}
\hline
\textbf{Characteristic} & \textbf{Train} & \textbf{Validation} & \textbf{Test} \\
\hline
\multicolumn{4}{l}{\textit{Dataset Partition}} \\
Patients            & 33,108  & 7,095   & 7,095   \\
Transitions         & 414,880 & 90,231  & 89,556  \\
Transitions/patient (mean $\pm$ std) & $12.5 \pm 9.6$ & $12.7 \pm 9.5$ & $12.6 \pm 9.6$ \\
\hline
\multicolumn{4}{l}{\textit{BMI Cooperation Status}} \\
Cooperative         & 264,587 (63.8\%) & 58,567 (64.9\%) & 56,534 (63.1\%) \\
Non-cooperative     & 150,293 (36.2\%) & 31,664 (35.1\%) & 33,022 (36.9\%) \\
\hline
\multicolumn{4}{l}{\textit{BMI Category}} \\
Normal              &  79,888 (19.3\%) & 17,245 (19.1\%) & 17,944 (20.0\%) \\
Overweight          & 222,014 (53.5\%) & 49,136 (54.5\%) & 47,736 (53.3\%) \\
Obese               & 112,978 (27.2\%) & 23,850 (26.4\%) & 23,876 (26.7\%) \\
\hline
\multicolumn{4}{l}{\textit{Demographics}} \\
Female              & 56.9\% & 57.0\% & 57.0\% \\
Male                & 43.1\% & 43.0\% & 43.0\% \\
Black or African American & 48.7\% & 48.4\% & 49.2\% \\
White               & 51.3\% & 51.6\% & 50.8\% \\
Hispanic or Latino  &  0.4\% &  0.2\% &  0.4\% \\
Not Hispanic or Latino & 97.9\% & 98.0\% & 98.1\% \\
\hline
\multicolumn{4}{l}{\textit{Clinical State Variables (mean $\pm$ std; unscaled)}} \\
SBP (mmHg)          & $134.2 \pm 17.5$ & $134.0 \pm 17.7$ & $134.2 \pm 17.5$ \\
A1C (\%)          & $7.23 \pm 1.64$  & $7.25 \pm 1.67$  & $7.26 \pm 1.67$  \\
BMI (kg/m$^2$)      & $31.5 \pm 8.3$   & $31.5 \pm 8.2$   & $31.4 \pm 8.3$   \\
eGFR (mL/min)       & $69.2 \pm 26.0$  & $69.4 \pm 25.6$  & $69.2 \pm 25.9$  \\
Age (years)         & $63.9 \pm 12.7$  & $63.8 \pm 12.6$  & $64.3 \pm 12.7$  \\
T2DM intensity      & $0.61 \pm 0.76$  & $0.62 \pm 0.77$  & $0.62 \pm 0.76$  \\
HTN intensity       & $0.87 \pm 0.90$  & $0.88 \pm 0.90$  & $0.88 \pm 0.90$  \\
\hline
\multicolumn{4}{l}{\textit{Treatment Intensity Distribution}} \\
T2DM: None (0)      & 233,044 (56.2\%) & 50,215 (55.7\%) & 49,828 (55.6\%) \\
T2DM: Single-class (1) & 110,513 (26.6\%) & 24,061 (26.7\%) & 24,360 (27.2\%) \\
T2DM: Multi-class (2)  &  71,323 (17.2\%) & 15,955 (17.7\%) & 15,368 (17.2\%) \\
HTN: None (0)       & 197,870 (47.7\%) & 42,521 (47.1\%) & 42,105 (47.0\%) \\
HTN: Single-class (1)  &  72,544 (17.5\%) & 16,022 (17.8\%) & 15,787 (17.6\%) \\
HTN: Multi-class (2)   & 144,466 (34.8\%) & 31,688 (35.1\%) & 31,664 (35.4\%) \\
\hline
\multicolumn{4}{l}{\textit{Option (Treatment Strategy)}} \\
Single-Target ($\omega_0$) & 278,569 (67.1\%) & 60,355 (66.9\%) & 59,611 (66.6\%) \\
Multi-Target ($\omega_1$)  & 136,311 (32.9\%) & 29,876 (33.1\%) & 29,945 (33.4\%) \\
\hline
\multicolumn{4}{l}{\textit{Reward Statistics}} \\
Mean reward         & $-0.0145$ & $-0.0133$ & $-0.0149$ \\
Std.\ deviation     & $0.3883$  & $0.3858$  & $0.3841$  \\
Positive rewards    & 85,369 (20.6\%) & 18,615 (20.6\%) & 18,345 (20.5\%) \\
\hline
\end{tabular}
\end{table}
\subsection{Training Convergence}
\label{sec:convergence}

The HRL agent was trained for 150 epochs on a stratified prioritized
replay buffer of 500,000 transitions, with separate learning rates
for the encoder, high-level, low-level, and termination heads to
stabilize joint optimization across hierarchy levels; full
hyperparameter specifications are provided in
Table \ref{tab:hyperparameters} in~\ref{appendix:training_curves}. 
Training converged across all loss components, with validation action accuracy rising from 66.9\% at epoch~1 to a peak of 75.3\% at epoch~60 and stabilizing near 74\% at convergence.
The average termination probability of 0.257 is below the anchor
target $\bar\beta=0.30$. The fact that $\hat{\beta}=0.257 < \bar\beta=0.30$ indicates a consistent advantage-driven preference for persistence, as the anchor would otherwise push $\beta$ toward 0.30.
Learning curves are shown in~\ref{appendix:training_curves}.
\subsection{Clinician Action Agreement} \label{sec:bc}
To measure alignment between the learned policy and established clinical practice, we quantify the proportion of transitions in which the agent's greedy action matches the clinician's observed decision, defined as clinician action agreement.   
Note that this is an evaluation metric only; the policy is trained to
maximize the QALY-based reward, not to imitate the clinician, and the
CQL penalty merely prevents overestimation of unsupported actions.

\begin{table}%
\centering
\caption{Clinician Action Agreement: HRL Policy.}
\label{tab:bc_accuracy}
\small
\setlength{\tabcolsep}{5pt}
\begin{tabular}{@{}lccr@{}}
\toprule
\textbf{Metric}
    & \textbf{Test} & \textbf{Random (\%)} \\
\midrule
Overall action accuracy               & 69.8\,\% &  5.6 \\
T2DM component accuracy               & 82.3\,\% & 33.3 \\
HTN component accuracy                & 80.7\,\% & 33.3 \\
BMI $F_1$ (eligible patients)\tnote{\dag}
                                      & 95.9\,\% & 50.0 \\
Option accuracy (Single- vs.\ Multi-Target)
                                      & 53.8\,\% & 50.0 \\
Mean termination prob.\ $\bar{\beta}$ & 0.201    & ---  \\
\bottomrule
\end{tabular}
\begin{tablenotes}
  \footnotesize
  \item[\dag] Computed only among cooperative, overweight/obese patients. Precision $=92.2\,\%$; The random baseline is given by the inverse of the number of available actions. For example, with 18 possible actions, a uniformly random policy has a probability of \( \tfrac{1}{18} \) of selecting the same action as the clinician. mean $Q$-value margin between BMI-reduction and no-reduction actions $=6.86$. 
\end{tablenotes}

\end{table}

Table \ref{tab:bc_accuracy} reports results on the held-out test set alongside random-guessing baselines. The overall action accuracy of 69.8\,\% substantially exceeds the random baseline of 5.6\,\% ($=1/18$).  The factored action framework reveals that the agent is highly aligned with clinicians on the disease dimensions separately: T2DM accuracy of 82.3\,\% and HTN accuracy of 80.7\,\% exceed their respective random baselines by factors of $\approx$2.5$\times$.  The BMI component, restricted to eligible patients, achieves a near-perfect recall of 99.98\,\%, with an $F_1$ score of 95.9\,\%, and the mean Q-value margin of 6.86 confirms the agent's high confidence in its BMI decisions.  Option selection
accuracy of 53.8\,\% modestly exceeds the 50\,\% two-option chance
level, consistent with the inherent difficulty of inferring latent
clinical intent (single- vs.\ multi-target focus) from observational
data.
\subsection{Off-Policy Evaluation}
\label{sec:ope}
We employ three complementary off-policy evaluation (OPE) estimators
with 95\% bootstrap confidence intervals (1{,}000 resamples):
\begin{enumerate}[leftmargin=*, itemsep=0pt, parsep=0pt]
  \item \textbf{FQE} \citep{le2019batch}: A separate Q-network trained
    to satisfy the Bellman operator under the learned policy.
  \item \textbf{WIS} \citep{PrecupSuttonSingh2000}: Weights each
    episode's return by the likelihood ratio of the learned vs.\
    clinician action sequence, self-normalized to reduce variance.
  \item \textbf{DR} \citep{jiang2016doubly}: Combines a direct
    Q-function estimate with importance-ratio-weighted TD corrections,
    converging to the true policy value if either component is
    correctly specified, a useful guarantee given EHR sparsity and clinician heterogeneity.
\end{enumerate}

\begin{table}
  \centering
  \caption{Off-Policy Evaluation: Estimated Policy Value
           ($\gamma = 0.97$, $B = 1{,}000$ bootstrap resamples)}
  \label{tab:ope}
  \begin{threeparttable}
  \small
  \begin{tabular}{@{}lcc@{}}
    \toprule
    & \multicolumn{2}{c}{\textbf{Test Set} ($n = 89{,}556$)} \\
    \cmidrule(lr){2-3}
    \textbf{Estimator} & \textbf{Estimate} & \textbf{95\,\% CI} \\
    \midrule
    Clinician (baseline) & $-0.133$ & --- \\[2pt]
    FQE\tnote{\ddag}                 & $3.373$  & $[3.342,\ 3.404]$ \\
    WIS                              & $-0.485$ & $[-0.972,\ -0.102]$ \\
    DR                               & $0.669$  & $[0.638,\ 0.697]$ \\
    \bottomrule
  \end{tabular}
  \begin{tablenotes}
    \footnotesize
    
    \item[\ddag] Exceeds $[-1,1]$ due to Q-value extrapolation beyond the single-step reward scale.
  \end{tablenotes}
  \end{threeparttable}
\end{table}

All three estimators place the learned policy above the clinician baseline (Mean observed discounted return over all test transitions). The negative clinician baseline ($-0.133$) reflects suboptimal long-term biomarker trajectories inherent in observational chronic-disease data. The DR estimator, the most robust to model misspecification,  yields a policy value of 0.669 (95\,\% CI: $[0.638, 0.697]$), substantially above the baseline, with a narrow, non-overlapping interval confirming statistical reliability. The FQE estimate of 3.373 reflects accumulated discounted future value rather than a single-step reward; its tight CI ($\pm 0.031$) confirms stable Q-function learning. The wider WIS interval ($[-0.972, -0.102]$) reflects importance-sampling variance when a deterministic agent policy diverges from the stochastic clinician distribution; the mean clipped IS weight of 1.127 indicates partial overlap but makes WIS less reliable as a standalone estimator here. The consistent agreement
between FQE and DR confirms that the learned policy achieves higher expected clinical value than observed clinician behavior.
\subsection{Illustrative Analysis of Optimal Policy Behavior}
\label{sec:behavior}
Here we examine the behavior of the optimal policy using example patients from four clinically distinct subgroups as depicted in Figure \ref{fig:trajectories}:

\begin{enumerate}[leftmargin=*, itemsep=2pt, parsep=0pt]

\item \textbf{Cooperative / Single-Target (Patient~I).}
A BMI reduction cooperative patient with three encounters assigned to single-class option.
Both the agent and clinician chose maintain intensity throughout; the agent recommends BMI reduction one encounter earlier than the
clinician, acting immediately on cooperative eligibility. Near zero to negative rewards reflect unchanged A1C and SBP over the window mainly due to short encounter sequence for clinician and agent to asses patient's response to current medication intensity.

\item \textbf{Cooperative / Multi-Target (Patient~II).}
With 20+ encounters, and both uncontrolled biomarkers this patient falls under option $\omega_1$ (Multi-Target).
The clinician oscillates between Intensify and De-intensify as
biomarkers response were not in target range, producing mainly negative to near zero
rewards.
The agent instead holds a stable maintain strategy across
consecutive encounters and recommends BMI reduction from encounter~2
onward, consistent with the patient's cooperative status.
The DR estimator for cohort confirms that avoiding titration oscillations
yields better long-run QALY outcomes.

\item \textbf{Non-Cooperative / Single-Target (Patient~III).}
For this non-cooperative patients with biomarkers near target, the clinician again
alternates reactively between Intensify and De-intensify. Every intensify follows with deintensify in clinician recommendation confirming patient's better response to maintain current intensity.
The agent favors stability, aligning on the clinician's medication intensify
adjustment and issuing no BMI reduction recommendations, confirming structural enforcement of the cooperation mask.

\item \textbf{Non-Cooperative / Single-Target (Patient~IV).}
Like Patient~III, the agent issues no BMI-reduction recommendations.
Unlike Patient~III, this patient's initial encounters begins with negative to near zero rewards: the
clinician remains maintain intensity until encounter 8, then abruptly selects
intensify and deintensify T2DM in the next visit.
The agent maintains steadily and issues a single De-intensify once
improvement is estimated aligned with clinician; rewards shift from negative to strongly
positive in later encounters, consistent with the higher DR value
for the sicker (uncontrolled) patients stratum ($0.889\;[0.853,\,0.925]$) vs.\
controlled ($0.511\;[0.473,\,0.548]$).

\end{enumerate}

\begin{figure}%
  \centering
  \includegraphics[width=\textwidth]{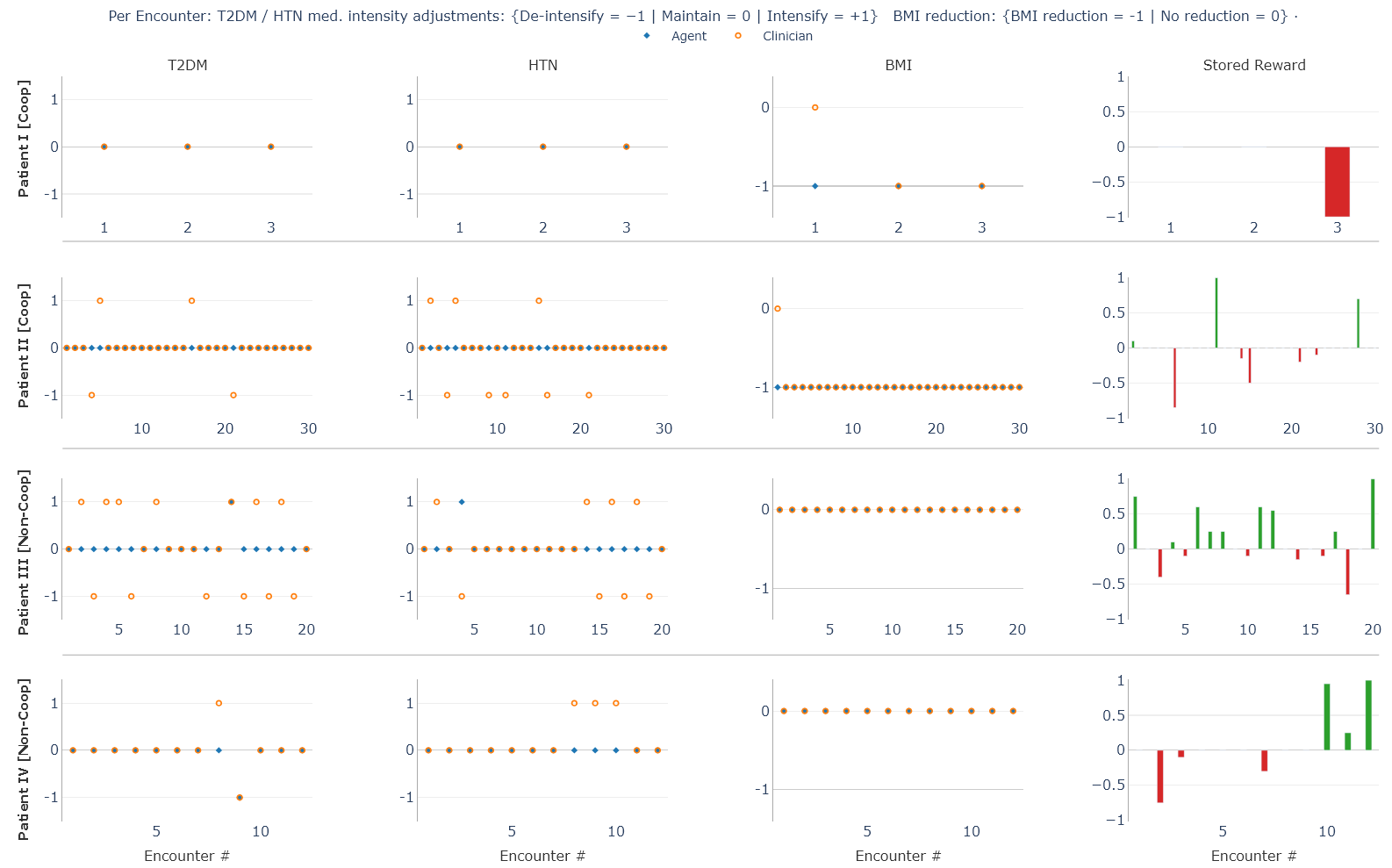}
  \caption{Per-patient treatment trajectories (4 patients, test set).
    Columns: T2DM, HTN, and BMI recommendations and per-encounter
    obsered reward (green\,$>0$, red\,$<0$).
    Rows: Cooperative Patient~I (Single-Target),
    Cooperative Patient~II (Multi-Target),
    Non-Cooperative Patients~III and~IV (Single-Target).}
  \label{fig:trajectories}
\end{figure}

The HRL policy has learned two complementary behaviors: a
QALY-guided treat-to-target strategy for T2DM and HTN that avoids
titration oscillations (DR\,=\,0.669 vs.\ clinician mean reward
$-0.0133$), and a cooperation-gated BMI recommendation pattern
($F_1=95.9\%$, zero safety violations) despite BMI carrying no
direct term in $r_t$, demonstrating that the hierarchical
architecture and action masking successfully transfer the
cooperation structure and BMI expected improvement for patient QALY improvement.%
\subsection{Patient Preference Masking Compliance}
\label{sec:safety}
Recall that to respect patient preferences, a BMI cooperation mask prevents the agent from recommending BMI-reduction actions to non-cooperative patients.
This hard constraint is enforced by assigning masked actions a Q-value of $-\infty$ during training, ensuring that the $\operatornamewithlimits{argmax}$ operator never selects  a prohibited action. The trained policy achieves 100\% safety compliance during testing, with zero mask violations in 89,556 testing set transitions. %
This result confirms that the action-masking mechanism operates as intended and the agent never assigns a BMI-reduction recommendation to an ineligible patient under any observed state. Among eligible patients, BMI precision was 92.2\%, recall was 99.98\%, and patient-level consistency, the proportion of patients for whom BMI decisions were internally consistent, was 99.9\%.

\subsection{Guideline Concordance Analysis}
\label{sec:concordance}

To evaluate treatment alignment with clinical guidelines, we employ
the $\Delta$-guideline concordance surface, sweeping over a grid of
SBP and A1C intensification thresholds \citep{payani2026guideline};
see~\ref{appendix:deltaGuidelineConordanceFunction} for the formal definition.

Figure~\ref{fig:concordance} shows the resulting surfaces on the
held-out test set. The agent achieves positive $\Delta$ in most
of the threshold grid ($\Delta\in[-0.03,+0.14]$), with the clinician holding a slight advantage only in the sub-guideline region (SBP$<$130, A1C$<$7.0), where conservative maintenance aligns naturally with guidelines. Above both reference thresholds the agent 
is more concordant, reaching $\Delta=+0.14$ %
. The right panel confirms this advantage persists
across all SBP thresholds above 130 at the standard A1C target of
7.0. Near-identical surfaces on the validation set (maximum deviation
$<0.01$) confirm that the higher guideline concordance generalizes
robustly to unseen patient cohorts.

\begin{figure}
  \centering
  \includegraphics[width=\textwidth, trim={0cm} {0cm} {0cm} {1.4cm},
    clip]{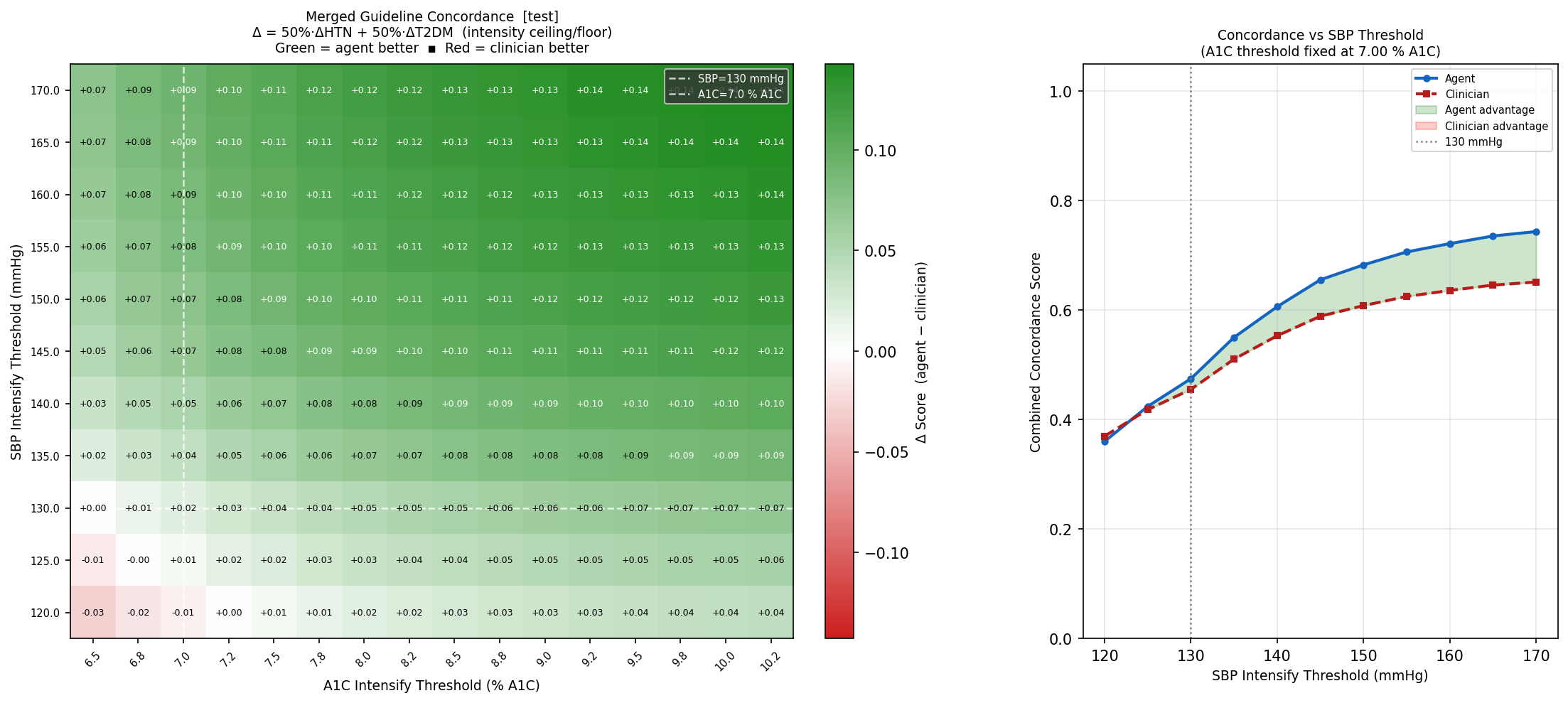}
  \caption{\footnotesize
    \textit{Left:} Concordance advantage surface ($\Delta =
    \text{agent} - \text{clinician}$) over SBP (y-axis) and A1C
    (x-axis) intensification thresholds. Green ($\Delta>0$) and red
    ($\Delta<0$) denote agent and clinician advantage, respectively.
    Dashed white lines mark the ACC/AHA~2017 SBP threshold (130) and
    ADA~2025 A1C target (7.0); above both, the agent is uniformly
    more concordant ($\Delta\in[{+}0.03,{+}0.14]$), with the
    clinician advantage confined to the sub-guideline region.
    \textit{Right:} Concordance vs.\ SBP threshold at fixed
    A1C$=$7.0; the agent (blue) surpasses the clinician (red) at
    SBP$\geq$130 with a widening advantage through 170.}
  \label{fig:concordance}
\end{figure}
\subsection{Stratified Analysis: Uncontrolled vs.\ Controlled Trajectories}
\label{sec:stratified}

To assess whether the HRL policy adds greater value for patients who
stand to benefit most from treatment optimization, we stratified the
test set into uncontrolled and controlled trajectories.
A patient episode was classified as uncontrolled if the patient's
first-visit scaled SBP or scaled A1C exceeded the training-set
mean (i.e., scaled value $>0$), indicating above-average disease
burden at trajectory onset; all remaining episodes were classified as
controlled.  This stratification yielded 4,692 uncontrolled patients
(57,710 transitions; 64.4\,\% of the test set) and 2,403 controlled
patients (31,846 transitions; 35.6\,\%). Table \ref{tab:stratified} reports behavioral accuracy, clinician reward statistics, and DR-estimated policy value separately for each stratum. The HRL agent outperforms the clinician baseline on both strata (DR value $> $ clinician mean reward, with non-overlapping CIs), with the advantage substantially larger in uncontrolled trajectories (DR\,=\,0.889 vs.\ clinician $\approx$\,0.000) than in controlled trajectories (DR\,=\,0.511 vs.\ clinician $-0.042$). 
\begin{table}%
  \centering
  \caption{Stratified Evaluation on the Test Set:
    Uncontrolled vs.\ Controlled Trajectories}
  \label{tab:stratified}
  \begin{threeparttable}
  \small
  \setlength{\tabcolsep}{4pt}
  \renewcommand{\arraystretch}{0.82}
  \begin{tabular}{@{}p{0.52\linewidth}cc@{}}
    \toprule
    \textbf{Metric}
        & \textbf{Uncontrolled}
        & \textbf{Controlled} \\
    \midrule
    Patients ($n$)          & 4,692       & 2,403       \\
    Transitions ($n$)       & 57,710      & 31,846      \\
    \midrule
    \multicolumn{3}{@{}l}{\textit{Accuracy}} \\
    \hspace{1em}Overall action    & 68.1\,\%    & 72.8\,\%    \\
    \hspace{1em}T2DM component    & 80.7\,\%    & 85.2\,\%    \\
    \hspace{1em}HTN component     & 80.0\,\%    & 81.9\,\%    \\
    \hspace{1em}BMI Non-Cooperative & 69.9\,\%    & 75.1\,\%    \\
    \hspace{1em}BMI Cooperative   & 67.0\,\%    & 71.5\,\%    \\
    \midrule
    \multicolumn{3}{@{}l}{\textit{Clinician Outcomes}} \\
    \hspace{1em}Mean reward  & $\approx 0.000$ & $-0.042$ \\
    \hspace{1em}\% positive-reward %
                                 & 23.5\,\%    & 15.1\,\%    \\
    \midrule
    \multicolumn{3}{@{}l}{\textit{Off-Policy Evaluation (DR)}} \\
    \hspace{1em}DR policy value   & $0.889$ & $0.511$ \\ %
    \hspace{1em}95\,\% CI         & $[0.853,\ 0.925]$ & $[0.473,\ 0.548]$ \\
    \hspace{1em}Safety compliance & 100\%     & 100\%     \\
    \bottomrule
  \end{tabular}
  \begin{tablenotes}
    \footnotesize
    \item Stratification is based on each patient's first-visit scaled
      SBP and A1C relative to the training-set mean. A patient is
      classified as uncontrolled if either value exceeds zero (above
      training mean). DR\,=\,Doubly Robust off-policy estimator;
      95\,\% bootstrap CI from 1,000 resamples.
  \end{tablenotes}
  \end{threeparttable}
\end{table}
 This pattern is clinically meaningful: uncontrolled patients have more room for improvement through proactive treatment adjustment, and the agent
appears to exploit this by recommending higher-value actions in states
where the clinician's conservative maintain strategy yields near-zero reward.  The proportion of positive-reward transitions is higher in the uncontrolled stratum (23.5\,\% vs.\ 15.1\,\%), further corroborating that these episodes carry greater optimization potential. Accuracy is slightly lower in the uncontrolled stratum (68.1\,\% vs.\ 72.8\,\%), suggesting the agent diverges more
from observed clinician decisions in the high-stakes cases where the learned policy may be capturing treatment patterns not
reflected in average clinician behavior. The
100\% safety compliance in both strata confirms the robustness of the action-masking mechanism in the subpopulations of patients.
\subsection{Comparison with Baseline DDQN and Clinician Action Agreement}
\label{sec:baselines}

We compare the proposed HRL policy against a non-hierarchical DDQN baseline that shares the same state representation, action space, reward function, and training data, but lacks any hierarchical structure.  The DDQN maps states directly to
Q-values over the full 18-dimensional joint action space ($3\times3\times2$: T2DM $\times$ HTN $\times$ BMI), with no option layer and no termination function at the architecture level.  This
baseline isolates the contribution of the hierarchical option-critic structure to policy quality. Table \ref{tab:baseline_comparison} reports action agreement with clinician decisions for both models on their respective evaluation sets. The HRL policy outperforms the baseline DDQN. %
The largest gains are in HTN accuracy ($+1.6$\%) and BMI discriminative performance ($+1.9$\%),
suggest that the hierarchical option structure helps the agent make more targeted per-disease decisions by explicitly separating single-target from multi-target treatment episodes.
\begin{table}[htbp]
  \centering
  \caption{Clinician Action Agreement: HRL Policy vs.\ DDQN Baseline.}
  \label{tab:baseline_comparison}
  \begin{threeparttable}
  \small
  \begin{tabular}{@{}lccc@{}}
    \toprule
    \textbf{Metric}
        & \textbf{HRL (Test)}
        & \textbf{baseline DDQN (Val)}
        & \textbf{$\Delta$ (HRL $-$ DDQN)} \\
    \midrule
    Overall action accuracy   & 69.8\,\% & 69.2\,\% & $+$0.6\% \\
    T2DM component accuracy   & 82.3\,\% & 81.4\,\% & $+$0.9\% \\
    HTN component accuracy    & 80.7\,\% & 79.1\,\% & $+$1.6\% \\
    BMI accuracy (all)        & 96.6\,\% & 96.3\,\% & $+$0.3\% \\
    BMI discriminative\tnote{\dag}
                              & 95.9\,\% & 94.0\,\% & $+$1.9\% \\
    Option accuracy           & 53.8\,\% & N/A       & ---         \\
    Avg.\ Q-value             & 2.247    & 2.057     & $+$0.190    \\
    \bottomrule
  \end{tabular}
  \begin{tablenotes}
    \footnotesize
    \item[\dag]
    Evaluated on cooperative, overweight/obese patients only
      ($n=12{,}885$); HRL reports $F_1$, Baseline DDQN reports accuracy on
      the same eligible subset. HRL and flat DDQN are evaluated on test
      and validation sets respectively; $\Delta$ is indicative.
  \end{tablenotes}
  \end{threeparttable}
\end{table}

The modest overall gain ($+0.6$\%) reflects the harder task the HRL policy solves where it must jointly optimize T2DM, HTN, and BMI actions, select the correct option, and decide when to terminate it all
from the shared reward signal alone. Beyond accuracy, the HRL policy offers three structural advantages
unavailable to the flat DDQN 1) Interpretable clinical intent. With options, single-target, multi-target in this study, correspond to recognizable clinical strategies. Option accuracy of 53.8\,\% above the 50\,\% chance level demonstrates that the agent learns a meaningful latent decomposition of clinician behavior. 2) The termination function (mean $\bar{\beta}=0.201$) allows the agent to commit to a treatment strategy across multiple encounters rather than reacting greedily at each step, better reflecting chronic-disease management in practice. This advantage is further beneficial when strategies cover future morbidities risk as treatment target such as cardiovascular and kidney diseases. 3) While both models share the same 18-action space, the flat DDQN maps states to a single monolithic Q-value over all joint actions, conflating disease-specific signals into one undifferentiated output. The HRL policy instead maintains separate Q-heads for T2DM, HTN, and BMI, composing the joint Q-value additively as $0.4\,Q_{\mathrm{T2DM}}+0.4\,Q_{\mathrm{HTN}}+0.2\,Q_{\mathrm{BMI}}$. This decomposition allows independent per-disease credit assignment and produces consistent component-level agreement gains over the flat DDQN (T2DM $+0.9$\,pp, HTN $+1.6$\,pp, BMI $+1.9$\,pp).
\section{Discussion}
\label{sec:discussion}

This study was motivated from a fundamental observation that a recommendation that a patient will not follow is not a treatment plan and is unlikely to  improve patient outcomes. 
Converting that observation into an operational decision-support policy prompted three distinct questions,  leading to our methodological contributions. First, is it possible to fully embrace patient preferences, %
while preserving quality of care for any patient? Second, can policy design %
 align with, and consequently augment,   clinical decision making, where treatment strategy is followed with detailed regimen adjustments in patients with multimorbidity?  %
 And finally, can retrospective data  show that the resulting policy exceeds observed practice? 
 Addressing these questions together is challenging as in the absence of careful modeling,  pharmacological interventions generally outperform slower-acting interventions such as lifestyle modifications preferred by patients, leading to `superior' solutions that are ultimately not adhered to. Hence, this study is timely and  produces   
 meaningful contributions that can affect patient outcomes. %

\noindent \textbf{Preference as a constraint that binds.}
In this study, we place patient preferences at the core of our approach, supported by  structural and theoretical contributions as well as  empirical validation using  retrospective clinical data. %
Theorem~\ref{thm:coopdominance} demonstrates that cooperation strictly increases value whenever BMI-beneficial actions are reachable. Analysis of available EHR revealed that approximately one third of observed transitions meet BMI-beneficial conditions (\ref{appendix:strict_superiority_cooporative_patients}), which means the guarantee is not set of assumptions irrelative to real patients conditions. The specific architecture equipped with action masking in pre-processing and training as feasibility constraints makes violation of patient preferences impossible%
. Across all 89{,}556 held out transitions, no BMI modification was recommended to non-cooperative patients (Section~\ref{sec:safety}). An interesting result is that the constraint did not create monolithic BMI intervention regiment and even though the reward includes no BMI term, the policy learned when reduction should be recommended to eligible patients ($F_1=95.9\%$, Table~\ref{tab:bc_accuracy}). The same architecture recommends reduction at the second encounter for a cooperative patient, yet issued no reduction recommendation in twenty encounters for a non-cooperative patient. When preference is represented as feasibility, it is enforced with zero measured cost to those it protects.

\noindent \textbf{Structure without sacrifice.}
Our goal is for the  resulting policy to align with, and consequently
augment, clinical decision making, where treatment strategy is followed with detailed regimen
adjustments in patients with multimorbidity. Hence, we introduce  HRL and factorization and demonstrate that they ultimately do not reduce the policy quality. Section~\ref{subsec:struct_factorization} shows that the reward-level factorization error vanishes exactly, with $\varepsilon_r$=0. Furthermore, the architectural term~\eqref{eq:thm3} in Theorem~\ref{thm:3} is proved to be small when treatment effects between diseases are mutually weekly coupled, a feature numerically confirmed in~\ref{appendix:numerical_bound}. 

The empirical comparison demonstrate the same measure of quality. Comparing our proposed framework with a standard DDQN with the same state, action space, reward, and training data, demonstrate no loss of QALY. Indeed, we observe  accuracy advantage where the structural design predicts, specifically per-disease, with HTN +1.66\%  and BMI for cooperative patients +1.9\% in Table~\ref{tab:baseline_comparison}. 

The hierarchy also operates as designed. Options pre-assigned using documented clinical thresholds~\cite{ADA2024} avoid the option-collapse pathology that accompanies unsupervised discovery on sparse offline data. Option selection identifies latent clinical intent above chance at 53.8\%. The termination probability learned with$\bar{\beta}\approx0.2$, produces multi-encounter strategy persistence expected in chronic-disease management rather than visit-by-visit reactivity.

\looseness-1 The architecture additionally needed to address a difference in timescales. Medication modifications affect A1C and SBP in one or two encounters, causing them to dominate reward, while BMI reduction impact develops over numerous encounters and has no direct contribution on $r_t$ at all. A monolithic Q-head therefore has every reason to disregard it. Three design elements preserve the slow impact intervention. First, the factored decomposition in Equation~\eqref{eq:factored_q} assigns BMI its own head and fixes its wight at $\lambda_{\mathrm{BMI}}=0.2$. It consequently receives an independent gradient that the T2DM and HTN heads cannot eliminate. Second, the age-stratified QALY reward together with $\gamma=0.97$ evaluates the longer-term trajectory instead of changes at the next visit. Third, the option layer maintains a strategy in more than one encounter, which is the horizon over which lifestyle modification becomes visible. In Figure~\ref{fig:trajectories}, the policy recommends reduction one encounter sooner than the clinician for Patient~I, and it recommends reduction beginning with the second encounter for Patient~II. The standard DDQN does not enforce staying on same strategy in more than one encounter which is exactly where BMI recommendation reached lower accuracy, by 1.9\% in Table~\ref{tab:baseline_comparison} than HRL. Thus structure incurred no cost while producing policies with more interpretable strategies.

\noindent\textbf{Converging offline evidence of improvement.}
Finally, we leverage retrospective data to show that our 
resulting policy could exceed observed practice. No single retrospective estimator can consolidate superiority of learned policy versus observed practice. Accordingly, we require agreement among three tools that have distinct failure modes. The DR estimate value of the policy at 0.669 relative to a clinician baseline of -0.133 in Section~\ref{sec:ope}. Because the DR estimator penalizes departure from the behavior distribution, this value represents a conservative lower bound rather than an extrapolation result. The guideline concordance surface, Section~\ref{sec:concordance} and Figure~\ref{fig:concordance}, provides an external assessment unrelated to our reward. At and above the ACC/AHA 2017 and ADA 2025 reference thresholds, the agent consistently exceeds clinician guideline concordance score. Therefore, when the policy differs from observed practice from whom it learned, it moves toward established standards rather than away from them. The stratified analysis from Table~\ref{tab:stratified} then localized largest gain where clinical care anticipates it. The advantage nearly doubles among with uncontrolled disease, with DR of 0.889 compared to 0.511 for patients with controlled disease. Agreement between an out of distribution-penalized estimator, a reward independent external benchmark, and a clinically predicted heterogeneity patten evaluates the result to be consistently credible. Nevertheless, as we acknowledge in Section~\ref{sec:conclusion}, it supports prospective validation rather than serving as a substitute.

\section{Managerial Insights} \label{sec:managerial}
Health system managers, practitioners, and payers can draw four practical implications and managerial insights from this work in addition to its methodological findings.

Preference-aware treatment planning utilizes wasted capacity for usable care when planning directly considers adherence. Because non-adherence reaches \(42.6\%\) across multimorbid patients \citep{foley2021prevalence}, clinicians and care infrastructure expend considerable effort generating recommendations and interventions that patients ultimately ignore in practice. Rather than viewing this friction as a challenge of persuasion, framing patient preference as a feasibility constraint turns it into an issue of resource allocation. By using standard EHR data without administering surveys, questionnaires, or extra data capture, our framework detects individuals open and capable of lifestyle modification (\(F1 = 95.9\%\)). This capability allows managers to direct limited lifestyle-intervention resources, such as dietitian hours and weight-management seats, toward patients demonstrably able to benefit, while planning pharmacological alternatives upfront for the remainder instead of realizing non-engagement after spending resources.

Concentrated policy return on uncontrolled patients stratum informs sequence of deployment. Compared to controlled patients (\(\text{DR} = 0.511\)), those facing an above-average disease condition demonstrate an estimated policy advantage that is nearly double (\(\text{DR} = 0.889\)). Because population-health initiatives is bounded by budget limits, this distribution reveals where phased deployment provide peak returns. Organizations should therefore activate CDS tools and direct care-management outreach toward the less controlled disease groups first.

Mandatory constraints require architectural enforcement instead of incentive schemes. Across all \(89{,}556\) held-out transitions, the preference mask produces zero violations because unfeasible actions cannot be represented structurally. By contrast, designs relying on reward penalties merely render non-compliance statistically less likely. This distinction provides concrete guidance for leaders overseeing AI governance, regulatory review, and liability exposure. They should implement preference and safety requirements as auditable, rigid architectural constraints while reserving reward signals for objectives where trade-offs remain acceptable.

Offline evaluation serves as a economical stage gate. This study assesses its proposed framework fully on retrospective data with its evaluation pipeline incorporating off-policy value estimation, agreement with clinician practice records, and guideline concordance surface. It quantifies expected clinical value at \(0.669\) against a clinician baseline of \(-0.133\) before exposing any patient to the system. This framework helps with lowering CDS investment and decisions risks by conditioning pilot commitments and procurement on offline benchmark thresholds, saving prospective validation for candidate frameworks that pass.
\section{Conclusion}
\label{sec:conclusion}
This study introduces  FAHOC, a hierarchical RL architecture for multimorbidity management with heterogeneous interventions, with patient preference incorporated as a structural feasibility constraint rather than a soft reward  signal. While evaluated on this specific comorbidity pair, the architecture generalizes naturally to broader multimorbidity settings.

The patient preference-aware constrained framework, 
automatically infers patient willingness to cooperate in lifestyle modification 
from longitudinal BMI trajectories and enforces this preference at both the data pre-processing and training levels. The design guarantees zero 
patient preference violations and is supported by a formal dominance 
theorem showing that cooperative patients achieve weakly higher optimal 
expected health outcomes than non-cooperative patients. The clinically-informed option structure, embeds established therapeutic decision thresholds directly into the high-level policy, producing temporally coherent treatment regimes that correspond to recognizable clinical 
strategies without requiring unsupervised option discovery on a sparse offline 
dataset. The unified theoretical framework comprising the cooperation value dominance and the global factorization error bound, which formally connect each architectural design choice to guarantees on preference-consistency and Q-function approximation quality. 

Three limitations warrant acknowledgment. The BMI-based cooperation label is sensitive to trajectory length, imputation quality, and involuntary weight change not fully captured by the malignancy exclusion criterion; replacing the binary label with a learned trajectory model would improve robustness. The two-option hierarchy constrains high-level policy expressivity, and expanding the option set through clinician-elicited strategy initialization coupled with further advancing offline option learning mechanism while preventing action leakage in the model is expected to broaden the range of learnable therapeutic strategies. Finally, all evaluation in this manuscript is offline and cannot substitute for prospective validation in a live clinical setting, which remains the most important direction for future work.

The framework is best positioned as a recommendation layer within a clinical decision support system, with preference-consistent treatment options for clinician review rather than an autonomous prescriber. With a richer option and intervention set defined collaboratively with clinicians, FAHOC offers a modular and interpretable foundation for EHR-integrated %
in the policy.

 \textbf{Data and Code Availability Statement:}
The dataset analyzed is this study consists of de-identified EHR data and is not publicly available. Access to the data can be provided subject to institutional review board  approval. The full codebase, will be released upon acceptance.

\bibliographystyle{IEEEtranN}%
\begingroup
\bibliography{bib}
\endgroup
\clearpage
\appendix
\renewcommand{\thesection}{Appendix~\Alph{section}}

\phantomsection
\section*{Supplemental Material}
\vspace*{1\baselineskip}
\phantomsection
\section{Reward Function}
\label{appendix:rewardfunction}

% Inlined from appendix/reward_function.tex
The components of the reward function are as follows%

\begin{itemize}
\item[(1)] QALY gain ($\Delta Q_t$):
This first component measures the change in health utility, a standard measure of quality-adjusted life expectancy,
between consecutive encounters, capturing whether the patient's
biomarkers moved closer to or further from clinically healthy
ranges, weighted by age-group-specific quality-of-life impact.
Let $Q^{\mathrm{A1C}}(\cdot,g)$ and $Q^{\mathrm{SBP}}(\cdot,g)$ denote  QALY utilities corresponding to A1C and SBP, respectively, for age group $g$  (Table \ref{tab:qaly}).  QALY in HTN and T2DM comorbidity can be defined as a piecewise linear function mapping A1C to $Q^{\mathrm{A1C}}(x,g)=0$ if $x>7.9$, $\bar{Q}^{\mathrm{A1C}}(g)$ if
$x\le7.0$, and
$\bar{Q}^{\mathrm{A1C}}(g)\bigl(1-\tfrac{x-7.0}{0.9}\bigr)$ otherwise;
SBP follows an analogous linear interpolation between thresholds $142$
and $154\,$ \citep{aron2007summary, payani2026guideline}. Therefore, we have
{%
\begin{equation}
  \Delta Q_t
  := \bigl[Q^{\mathrm{A1C}}(A1C_{t+1},g)+Q^{\mathrm{SBP}}(SBP_{t+1},g)\bigr]  
   - \bigl[Q^{\mathrm{A1C}}(A1C_t,g)+Q^{\mathrm{SBP}}(SBP_t,g)\bigr].
  \label{eq:delta_qaly}
\end{equation}
}%
\begin{table}[H] %
\centering
\caption{Maximum QALY utilities by age group.}
\label{tab:qaly}
\small
\setlength{\tabcolsep}{4pt}
\begin{tabular}{lcccccc}
\toprule
\textbf{Age} & 25--34 & 35--44 & 45--54 & 55--64 & 65--74 & $\ge$75\\
\midrule
A1C & 0.65 & 0.46 & 0.25 & 0.13 & 0.05 & 0.01\\
SBP & 0.69 & 0.62 & 0.53 & 0.44 & 0.36 & 0.23\\
\bottomrule
\end{tabular}
\end{table}
\item[(2)] PBRS ($\Psi_t$):
We add PBRS to increase short term signals to the reward beyond QALY that can be sparse and delayed. For A1C, define $\varphi^{\mathrm{A1C}}_t = 1$ if $6.0 \le A1C_t \le 7.0$,
else $0$; and for SBP, define $\varphi^{\mathrm{SBP}}_t = 1$ if
$100 \le SBP_t \le 142$, else $0$.
The shaping term is
$\Psi_t :=
\kappa_{A1C}\bigl(\gamma\,\varphi^{\mathrm{A1C}}_{t+1}-\varphi^{\mathrm{A1C}}_t\bigr)
+\kappa_{SBP}\bigl(\gamma\,\varphi^{\mathrm{SBP}}_{t+1}-\varphi^{\mathrm{SBP}}_t\bigr)$,
with $\kappa_{A1C}=0.50$ and $\kappa_{SBP}=0.55$.
\item[(3)] Improvement bonus and worsening penalty ($p'_t - p_t$) further strengthen short term safety signal for the agent:
The bonus rewards proportional improvement in A1C or SBP only when
the biomarker is above its clinical target at the current visit (t) and decreases in the next visit (t+1): \label{eq:bonus}$p'_t := 0.20\,(A1C_t - A1C_{t+1})$ if $A1C_t > 7.0$ and $A1C_{t+1} < A1C_t$, else $0$; plus $0.01\,(SBP_t - SBP_{t+1})$
if $SBP_t > 130$ and $SBP_{t+1} < SBP_t$, else $0$.
The penalty discourages sharp worsening between consecutive encounters:
$p_t := 0.30$ if $A1C_{t+1} - A1C_t > 0.5$, else $0$; plus $0.30$
if $SBP_{t+1} - SBP_t > 10$, else $0$.

\end{itemize}

\phantomsection

\clearpage
\section{Data processing medications encoded for HTN and T2DM intensity}
\label{appendix:dataprep}

% Inlined from appendix/dataprep_short.tex

\renewcommand{\arraystretch}{1.05}
\begin{table}[H]
\small
\centering
\caption{Medications for HTN and T2DM by class encoded in the model.}
\label{tab:HTN_T2DM_drugs}
\setlength{\tabcolsep}{5pt}

\begin{tabular}{p{\dimexpr0.5\linewidth-2\tabcolsep\relax} p{\dimexpr0.5\linewidth-2\tabcolsep\relax}}
\hline
\textbf{HTN Medications} & \textbf{T2DM Medications} \\
\hline

\textbf{ACE inhibitors}: benazepril, lisinopril, enalapril, captopril, fosinopril, perindopril, quinapril, ramipril, trandolapril, moexipril
&
\textbf{Biguanides}: metformin \\[4pt]

\textbf{ARB combinations}: sacubitril
&
\textbf{Sulfonylureas}: chlorpropamide, diabinese, glipizide, glucotrol, glyburide, diabeta, glimepiride, glibenclamide \\[4pt]

\textbf{ARBs}: valsartan, losartan, candesartan, azilsartan, eprosartan, irbesartan, olmesartan, telmisartan
&
\textbf{DPP-4 inhibitors}: sitagliptin, vildagliptin, linagliptin, saxagliptin, alogliptin \\[4pt]

\textbf{Calcium channel blockers}: amlodipine, felodipine, diltiazem, isradipine, nicardipine, nifedipine, nisoldipine, verapamil
&
\textbf{SGLT-2 inhibitors}: ertugliflozin, canagliflozin, empagliflozin, dapagliflozin, bexagliflozin \\[4pt]

\textbf{Centrally acting}: clonidine, methyldopa, guanfacine, reserpine
&
\textbf{Thiazolidinediones}: pioglitazone, rosiglitazone \\[4pt]

\textbf{Alpha blockers}: doxazosin, prazosin
&
\textbf{GLP-1 receptor agonists}: liraglutide, albiglutide, semaglutide, exenatide, dulaglutide, lixisenatide \\[4pt]

\textbf{Vasodilators}: hydralazine, minoxidil
&
\textbf{Meglitinides}: repaglinide, prandin, starlix, nateglinide \\[4pt]

\textbf{Beta blockers}: acebutolol, atenolol, carvedilol, propranolol, metoprolol, bisoprolol, nebivolol, labetalol, nadolol, timolol, pindolol, betaxolol, penbutolol
&
\textbf{Alpha-glucosidase inhibitors}: acarbose, precose, miglitol, glyset, voglibose \\[4pt]

\textbf{Diuretics}: bumetanide, chlorthalidone, furosemide, hydrochlorothiazide, thiazide, spironolactone, torsemide, indapamide, eplerenone, amiloride, triamterene, chlorothiazide, polythiazide, metolazone
&
\textbf{Other}: pramlintide; tirzepatide; colesevelam; bromocriptine \\[4pt]

\textbf{BPH drugs}: alfuzosin, tamsulosin, terazosin
&
\textbf{Insulin}: glargine, detemir, lispro, aspart, regular, degludec, nph, glusine, inhaled insulin, lispro-aabc, u-500, degludec/liraglutide, glargine/lixisenatide \\[4pt]

\textbf{Other cardiac}: ranolazine; \textbf{Cardiac vasodilators}: isosorbide mononitrate, isosorbide dinitrate
& \\

\hline
\end{tabular}
\end{table}

\phantomsection

\label{Training Loss Equations}
\section{Training Loss and Targets}
\label{appendix:traininglossequations}

% Inlined from appendix/traininglossequations.tex
\textbf{Low-level TD target.}
The low-level target uses Double DQN action selection with the online
network and target-network evaluation, incorporating option-utility:
{
\begin{equation}
    \hat{U}(\omega,s')
    := (1{-}\beta^{-}_\omega(s'))\,Q^{-}(s',\omega,\hat{a}^*) 
    + \beta^{-}_\omega(s')\,V^{-}(s'),
  \label{eq:td_utility}
\end{equation}
\begin{equation}
  y^{\mathrm{lo}}
  := \operatorname{clip}_{[-10,\,10]}\!\bigl(
        r + \gamma(1{-}d)\,\hat{U}(\omega,s')\bigr),
  \label{eq:td_low}
\end{equation}
}
where $d$ is the episode-termination (done) indicator,
$\hat{a}^{*}=\arg\max_{a'\in\mathcal{A}^{c}(s')}
Q_{\mathrm{online}}(s',\omega,a')$ is the masked greedy action,
$Q^{-}$ and $\beta^{-}$ are target-network quantities, and
$V^{-}(s')=\max_{\omega}Q^{-}_\Omega(s',\omega)$.
TD targets are clipped to $[-10,10]$; since rewards are bounded
in $[-1,1]$, the theoretical return bound is
$1/(1{-}\gamma)\approx 33$ for $\gamma{=}0.97$, so the clip is
conservative, guards against early-training instability.

\textbf{Low-level loss.}
\begin{equation}
  \mathcal{L}_{\mathrm{low}}
  := \frac{1}{B}\sum_{i=1}^{B}
       w_i\,(\delta^{\mathrm{lo}}_i)^2 
   + \lambda_{\mathrm{CQL}}
     \Bigl(
       \log\!\sum_{a'}\exp Q(s_i,\omega_i,a') 
       - Q(s_i,\omega_i,a_i)
     \Bigr),
  \label{eq:loss_low}
\end{equation}

where $\delta^{\mathrm{lo}}_i := y^{\mathrm{lo}}_i -
Q(s_i,\omega_i,a_i;\theta)$, $w_i$ are the PER
importance-sampling weights that correct for the non-uniform
sampling distribution~\citep{schaul2015prioritized},
and $\lambda_{\mathrm{CQL}}{=}0.05$.

\textbf{High-level TD target.}
Unlike the low-level target, which bootstraps from next state $s'$,
the high-level target weights the intra-option value at the current
state $s$ against the option-value at $s'$ via the termination
probability $\beta^{-}_\omega(s)$:
\begin{equation}
  y^{\mathrm{hi}}
  := (1{-}\beta^{-}_\omega(s))\,Q^{-}(s,\omega,a)
   + \beta^{-}_\omega(s)\,V^{-}(s').
  \label{eq:td_high}
\end{equation}

\textbf{High-level loss.}
\begin{equation}
  \mathcal{L}_{\mathrm{high}}
  := \frac{1}{B}\sum_{i=1}^{B}
       w_i\,(\delta^{\mathrm{hi}}_i)^2,
  \label{eq:loss_high}
\end{equation}
where $\delta^{\mathrm{hi}}_i := y^{\mathrm{hi}}_i -
Q_\Omega(s_i,\omega_i;\theta)$.

\textbf{Combined loss and optimizers.}
The encoder is updated exactly once per step via a combined
backward pass over the low- and high-level critics only:
\begin{equation}
  \mathcal{L}_{\mathrm{comb}}
  := \mathcal{L}_{\mathrm{low}} + \mathcal{L}_{\mathrm{high}}.
  \label{eq:loss_combined}
\end{equation}
The termination loss $\mathcal{L}_{\mathrm{term}}$ is optimized
in a separate backward pass that updates only the $\beta$-heads,
leaving the encoder frozen; this prevents termination gradients
from corrupting the shared state representation.
Separate Adam optimizers are used for the encoder
($\phi$, lr$_\phi$), high-level head ($Q_\Omega$, lr$_\Omega$),
low-level heads ($\{Q^\omega_i\}$, lr$_{\mathrm{low}}$),
and termination heads
($\{\beta_\omega\}$, $0.1\times$lr$_{\mathrm{low}}$).

\textbf{Termination loss.}
Following the termination gradient theorem of
\citet{bacon2017option}, the core objective encourages an option
to terminate when it underperforms the option-value baseline.
Three offline-motivated regularizers are added: a linear
termination bias to counteract option persistence in static data,
an entropy penalty to promote decisive termination, and a
quadratic anchor to prevent $\beta_\omega$ from collapsing under
distributional shift:

\begin{equation}
  \begin{aligned}
    \mathcal{L}_{\mathrm{term}}
    :=\;&\underbrace{-\mathbb{E}[\hat{\beta}_i\,A_i]}_{\text{option-critic}}
       + \underbrace{0.25\,\mathbb{E}[\hat{\beta}_i]}_{\text{term.\ bias}}\\
    \
    &\quad - \underbrace{0.01\,\mathbb{E}[\mathcal{H}_i]}_{\text{entropy}}
       + \underbrace{0.50\,\mathbb{E}[(\hat{\beta}_i-\bar\beta)^{2}]}_{\text{anchor}},
  \end{aligned}
  \label{eq:loss_term}
\end{equation}

where $\hat{\beta}_i=\beta_{\omega_i}(s_i;\theta)$ is the online
network termination probability,
$A_i=\mathrm{clip}(\tilde{V}^{-}_{\Omega,i}-Q^{-}_{\mathrm{in},i},\,-1,1)$
is the clipped advantage at current state $s_i$,
$\tilde{V}^{-}_{\Omega,i}=\max_{\omega'}Q^{-}_\Omega(s_i,\omega')$,
$Q^{-}_{\mathrm{in},i}=Q^{-}(s_i,\omega_i,a_i)$,
$\mathcal{H}_i={-}\hat{\beta}_i\log\hat{\beta}_i
-(1{-}\hat{\beta}_i)\log(1{-}\hat{\beta}_i)$ is the binary entropy,
and $\bar\beta{=}0.30$ is the anchor target.

\phantomsection

\section{Cooperation (state feature) and BMI recommendation (action) inference algorithm}
\label{appendix:cooperation}

% Inlined from appendix/cooperation.tex
\begin{algorithm}[H]
\footnotesize
\caption{BMI Cooperation Inference ($c$)}
\label{alg:cooperation}
\begin{algorithmic}[1]
\Require Patient BMI trajectory
  $\mathbf{b}^{(p)} = (b_1,\ldots,b_{T_p})$
  from IterativeImputer (fitted on train only),
  thresholds $\bar{b}_{\mathrm{norm}} = 25.0$,
  $\bar{b}_{\mathrm{ow}} = 30.0$,
  drift tolerance $\delta = 1.0$
\Ensure Per-patient cooperation label $c^{(p)} \in \{0,1\}$
  \Comment{$1$ = cooperative, $0$ = non-cooperative}

\For{each patient $p$}
  \State $\mu \gets \mathrm{mean}(\mathbf{b}^{(p)})$;
    \; $b_{\mathrm{first}} \gets b_1$;
    \; $b_{\mathrm{last}} \gets b_{T_p}$

  \If{$\mu \leq \bar{b}_{\mathrm{norm}}$}
    \Comment{Patient is in normal BMI range on average}
    \State $c^{(p)} \gets
      \mathbf{1}\!\left[b_{\mathrm{last}} - b_{\mathrm{first}}
        \leq \delta\right]$
    \Comment{Cooperative if BMI did not rise by more than $\delta$}

  \Else
    \Comment{Patient is overweight or obese on average}
    \If{$T_p \geq 3$}
      \State Fit linear trend:
        $\hat{\beta} \gets
          \mathrm{polyfit}(1{:}T_p,\,\mathbf{b}^{(p)},\,\deg{=}1)[0]$
      \State $c^{(p)} \gets \mathbf{1}[\hat{\beta} < 0]$
        \Comment{Cooperative if BMI trend is downward}
    \ElsIf{$T_p = 2$}
      \State $c^{(p)} \gets \mathbf{1}[b_{\mathrm{last}} < b_{\mathrm{first}}]$
    \Else
      \State $c^{(p)} \gets 0$
        \Comment{Single visit: assume non-cooperative}
    \EndIf
  \EndIf
\EndFor
\State \Return $\{c^{(p)}\}$
\end{algorithmic}

\end{algorithm}

\begin{algorithm}[H]
\footnotesize
\caption{BMI Action Assignment ($a_{\mathrm{BMI}}$)}
\label{alg:bmi_action}
\begin{algorithmic}[1]
\Require Transition $(s_t, s_{t+1})$,
  visit index $j$,
  patient cooperation label $c^{(p)} \in \{0,1\}$,
  BMI category at visit $t$:
  $b(s_t) \in \{0\text{ (normal)},\,1\text{ (overweight)},
  \,2\text{ (obese)}\}$
\Ensure $a_{\mathrm{BMI}} \in \{0,1\}$

\If{$j = 0$}
  \Comment{First visit: no prior BMI to compare}
  \State $a_{\mathrm{BMI}} \gets 0$
\Else
  \If{$c^{(p)} = 0$ \textbf{ and } $b(s_t) > 0$}
    \Comment{Cooperative patient with overweight or obese BMI:
      clinician attempted BMI reduction}
    \State $a_{\mathrm{BMI}} \gets 1$
  \Else
    \Comment{Non-cooperative or BMI-normal:
      no BMI intervention indicated}
    \State $a_{\mathrm{BMI}} \gets 0$
  \EndIf
\EndIf
\State \Return $a_{\mathrm{BMI}}$
\end{algorithmic}
\end{algorithm}

\phantomsection

\section{Solution Algorithms}
\label{appendix:solutionalgorithms}

% Inlined from appendix/solution_algorithms_1to4.tex
Algorithms \ref{alg:factored}--\ref{alg:training} provide the detailed pseudo codes of our proposed solution procedure: Algorithm \ref{alg:factored} defines the factored
Q-value composition; Algorithm \ref{alg:mask} describes the
clinical action masking; Algorithm \ref{alg:buffer} details
stratified buffer initialization; and Algorithm \ref{alg:training}
gives the main training loop.

\begin{algorithm}[H]
\caption{Factored Action Encoding and Q-Value Composition}
\label{alg:factored}
\fontsize{9pt}{11pt}\selectfont
\begin{algorithmic}[1]

\Require Sub-action indices $a^{\mathrm{T2DM}}, a^{\mathrm{HTN}} \in \{0,1,2\}$,
         $a^{\mathrm{BMI}} \in \{0,1\}$
         \hfill\Comment{mapped from $\{-1,0,+1\}$: decrease, maintain, intensify}
         
         shared encoder $\phi(\cdot;\theta)$,
         per-option heads
         $Q^{\omega}_{\mathrm{T2DM}}, Q^{\omega}_{\mathrm{HTN}},
          Q^{\omega}_{\mathrm{BMI}}$

\Statex \textbf{// Encoding: sub-actions $\to$ flat index}
\Function{Encode}{$a^{\mathrm{T2DM}}, a^{\mathrm{HTN}}, a^{\mathrm{BMI}}$}
  \State \Return $6\,a^{\mathrm{T2DM}} + 2\,a^{\mathrm{HTN}} + a^{\mathrm{BMI}}$
\EndFunction

\Statex \textbf{// Decoding: flat index $\to$ sub-actions}
\Function{Decode}{$a$}
  \State $a^{\mathrm{T2DM}} \leftarrow \lfloor a / 6 \rfloor$
  \State $a^{\mathrm{HTN}}  \leftarrow \lfloor (a \bmod 6) / 2 \rfloor$
  \State $a^{\mathrm{BMI}}  \leftarrow a \bmod 2$
  \State \Return $(a^{\mathrm{T2DM}},\, a^{\mathrm{HTN}},\, a^{\mathrm{BMI}})$
\EndFunction

\Statex \textbf{// Scalar Q-value for a single action (used in TD targets)}
\Function{FactoredQ}{$s,\, \omega,\, a;\, \theta$}
  \State $h \leftarrow \phi(s;\,\theta)$
  \State $(a^{\mathrm{T2DM}}, a^{\mathrm{HTN}}, a^{\mathrm{BMI}})
         \leftarrow \textsc{Decode}(a)$
  \State \Return $0.4\cdot Q^{\omega}_{\mathrm{T2DM}}(h,\,a^{\mathrm{T2DM}})
                + 0.4\cdot Q^{\omega}_{\mathrm{HTN}}(h,\,a^{\mathrm{HTN}})
                + 0.2\cdot Q^{\omega}_{\mathrm{BMI}}(h,\,a^{\mathrm{BMI}})$
\EndFunction

\Statex \textbf{// Full Q-vector over all $|\mathcal{A}|{=}18$ actions
         (used in $\arg\max$ and CQL)}
\Function{AllQ}{$s,\, \omega;\, \theta$}
  \State $h \leftarrow \phi(s;\,\theta)$
  \For{$a = 0$ \textbf{to} $|\mathcal{A}|-1$}
    \State $q_a \leftarrow \textsc{FactoredQ}(s,\,\omega,\,a;\,\theta)$
  \EndFor
  \State \Return $\mathbf{q} \in \mathbb{R}^{|\mathcal{A}|}$
\EndFunction

\end{algorithmic}
\end{algorithm}

\begin{algorithm}[H]
\caption{Clinical Action Mask Construction}
\label{alg:mask}
\fontsize{9pt}{11pt}\selectfont
\begin{algorithmic}[1]

\Require Scaled cooperation type $c_i \in \{-1, 0\}$
         \hfill\Comment{$-1$: non-cooperative (post-scaling), $0$: cooperative}
\Require Scaled BMI feature $\tilde{b}_i \in \mathbb{R}$
         \hfill\Comment{$\tilde{b}_i < 0$ indicates normal BMI post-scaling}
\Require Q-vector $\mathbf{q}_i \in \mathbb{R}^{|\mathcal{A}|}$
         from \textsc{AllQ} (Algorithm~\ref{alg:factored})
\Require $\mathcal{A}_{\mathrm{BMI}}^{\mathrm{red}}
         = \{a \in \mathcal{A} : a^{\mathrm{BMI}} = 1\}$
         \hfill\Comment{precomputed via \textsc{Decode}, Algorithm~\ref{alg:factored}}
\Ensure  Masked Q-vector $\tilde{\mathbf{q}}_i$

\State $\mathcal{M} \leftarrow \mathbf{1}^{B \times |\mathcal{A}|}$
       \hfill\Comment{all actions allowed by default}
\For{each sample $i$ in batch}
  \If{$c_i = -1\ \textbf{or}\ \tilde{b}_i < 0$}
    \hfill\Comment{non-cooperative or normal BMI}
    \State $\mathcal{M}_{i,a} \leftarrow 0
           \quad \forall\, a \in \mathcal{A}_{\mathrm{BMI}}^{\mathrm{red}}$
  \EndIf
\EndFor
\State $\tilde{q}_{i,a} \leftarrow
       \begin{cases}
         q_{i,a}   & \text{if } \mathcal{M}_{i,a} = 1 \\
         -\infty   & \text{if } \mathcal{M}_{i,a} = 0
       \end{cases}$
\State \Return $\tilde{\mathbf{q}}$

\end{algorithmic}
\end{algorithm}
\begin{algorithm}[H]
\caption{Stratified Buffer Initialization}
\label{alg:buffer}
\fontsize{9pt}{11pt}\selectfont
\begin{algorithmic}[1]

\Require Offline dataset $\mathcal{D}$,
         PER exponent $\alpha$
\Ensure  Prioritized replay buffer $\mathcal{B}$

\State Initialize $\mathcal{B}$ (SumTree, capacity $|\mathcal{D}|$)
\For{each transition $t \in \mathcal{D}$}
  \State $(a^{\mathrm{T2DM}}, a^{\mathrm{HTN}}, a^{\mathrm{BMI}})
         \leftarrow \textsc{Decode}(t.a)$
         \hfill\Comment{Algorithm~\ref{alg:factored}}
  \If{$a^{\mathrm{T2DM}} = \mathrm{maintain}$
      \textbf{and} $a^{\mathrm{HTN}} = \mathrm{maintain}$}
    \State $\mathcal{B}.\mathrm{add}(t,\ p_0 = 0.3)$
           \hfill\Comment{down-weight majority ``maintain'' class}
  \Else
    \State $\mathcal{B}.\mathrm{add}(t,\ p_0 = 1.5)$
           \hfill\Comment{up-weight active treatment changes}
  \EndIf
\EndFor
\State \Return $\mathcal{B}$

\end{algorithmic}
\end{algorithm}
\begin{algorithm}[H]
\caption{Hierarchical DDQN with Option-Critic Training}
\label{alg:training}
\fontsize{9pt}{11pt}\selectfont
\begin{algorithmic}[1]

\Require Buffer $\mathcal{B}$ from Algorithm~\ref{alg:buffer};
         discount $\gamma$, soft-update rate $\tau$,
         CQL coefficient $\lambda_{\mathrm{CQL}}{=}0.05$,
         termination target $\beta_{\mathrm{target}}{=}0.3$,
         IS schedule $\beta_{\mathrm{IS}}: 0.4 \to 1.0$

\State \textbf{Initialize} online network $\theta$
       (encoder $\phi$, $Q_\Omega$,
       $\{Q^{\omega}_{\mathrm{T2DM}}, Q^{\omega}_{\mathrm{HTN}},
         Q^{\omega}_{\mathrm{BMI}}\}_{\omega}$, $\{\beta_\omega\}$);
       \quad $\theta^{-} \leftarrow \theta$
       
\State \textbf{Initialize} optimizers
       $\mathcal{O}_\phi$, $\mathcal{O}_\Omega$,
       $\mathcal{O}_{\mathrm{low}}$,
       $\mathcal{O}_{\mathrm{term}}$
       (lr$_{\mathrm{term}}$ $= 0.1 \times$ lr$_{\mathrm{low}}$):

\For{each training step}

  \State Sample $\{(s_i,\omega_i,a_i,r_i,s'_i,d_i),\,w_i\}_{i=1}^{B}$
         from $\mathcal{B}$ with IS weight $\beta_{\mathrm{IS}}$

  \Statex \hspace{\algorithmicindent}
         \textbf{// Masked Q-vectors via
         Algorithms.~\ref{alg:factored}--\ref{alg:mask}}
  \State $\tilde{\mathbf{q}}_i \leftarrow
         \textsc{MaskedAllQ}(s_i, \omega_i, c_i, \tilde{b}_i;\,\theta)$
  \State $\tilde{\mathbf{q}}'_i \leftarrow
         \textsc{MaskedAllQ}(s'_i, \omega_i, c_i, \tilde{b}_i;\,\theta)$

  \Statex \hspace{\algorithmicindent}
         \textbf{// Low-level DDQN target}
  \State $\beta^{-}_i \leftarrow \theta^{-}.\beta_{\omega_i}(s'_i)$
         \hfill\Comment{termination from target net}
  \State $a^*_i \leftarrow \arg\max_a\, \tilde{q}'_{i,a}$
         \hfill\Comment{online net selects; $-\infty$ entries excluded}
  \State $Q^{-}_{\mathrm{lo},i} \leftarrow
         \textsc{FactoredQ}(s'_i,\omega_i,a^*_i;\,\theta^{-})$
         \hfill\Comment{target net evaluates}
  \State $V^{-}_{\Omega,i} \leftarrow
         \max_\omega\, Q_\Omega(s'_i;\,\theta^{-})$
         \hfill\Comment{option value at current state, for termination}
  \State $U_i \leftarrow
         (1{-}\beta^{-}_i)\,Q^{-}_{\mathrm{lo},i}
         + \beta^{-}_i\,V^{-}_{\Omega,i}$
  \State $y^{\mathrm{lo}}_i \leftarrow
         \mathrm{clip}\!\left(
           r_i + \gamma(1{-}d_i)\,U_i,\;-10,\,10
         \right)$
         \hfill\Comment{Eq.~\eqref{eq:td_low}}
  \State $\delta^{\mathrm{lo}}_i \leftarrow
         y^{\mathrm{lo}}_i
         - \textsc{FactoredQ}(s_i,\omega_i,a_i;\,\theta)$

  \Statex \hspace{\algorithmicindent}
         \textbf{// Low-level loss + CQL penalty}
  \State $\mathcal{L}_{\mathrm{lo}} \leftarrow
         \dfrac{1}{B}\displaystyle\sum_i
           w_i\,(\delta^{\mathrm{lo}}_i)^2
         + \lambda_{\mathrm{CQL}}\!
           \left(
             \mathrm{lse}_a\,\tilde{q}_{i,a}
             - \textsc{FactoredQ}(s_i,\omega_i,a_i;\,\theta)
           \right)$
         \hfill\Comment{Eq.~\eqref{eq:loss_low}}

  \Statex \hspace{\algorithmicindent}
         \textbf{// High-level $Q_\Omega$ target}
  \State $\beta^{-}_{\mathrm{cur},i} \leftarrow
         \theta^{-}.\beta_{\omega_i}(s_i)$
  \State $Q^{-}_{\mathrm{in},i} \leftarrow
         \textsc{FactoredQ}(s_i,\omega_i,a_i;\,\theta^{-})$
  \State $y^{\mathrm{hi}}_i \leftarrow
         (1{-}\beta^{-}_{\mathrm{cur},i})\,Q^{-}_{\mathrm{in},i}
         + \beta^{-}_{\mathrm{cur},i}\,V^{-}_{\Omega,i}$
         \hfill\Comment{Eq.~\eqref{eq:td_high}}
  \State $\delta^{\mathrm{hi}}_i \leftarrow
         y^{\mathrm{hi}}_i - Q_\Omega(s_i,\omega_i;\,\theta)$
  \State $\mathcal{L}_{\mathrm{hi}} \leftarrow
         \dfrac{1}{B}\displaystyle\sum_i
           w_i\,(\delta^{\mathrm{hi}}_i)^2$

  \Statex \hspace{\algorithmicindent}
         \textbf{// Joint backward --- encoder updated once}
  \State $\mathcal{L}_{\mathrm{comb}} \leftarrow
         \mathcal{L}_{\mathrm{lo}} + \mathcal{L}_{\mathrm{hi}}$
         \hfill\Comment{Eq.~\eqref{eq:loss_combined}}
  \State Zero-grad $\mathcal{O}_\phi,\mathcal{O}_\Omega,
         \mathcal{O}_{\mathrm{low}}$;\;
         backprop $\mathcal{L}_{\mathrm{comb}}$;\;
         clip $\|\nabla\|_2 \leq 1.0$;\;
         step $\mathcal{O}_\phi,\mathcal{O}_\Omega,
         \mathcal{O}_{\mathrm{low}}$

  \Statex \hspace{\algorithmicindent}
         \textbf{// Termination heads --- separate backward}
  \State $\hat{\beta}_i \leftarrow
         \theta.\beta_{\omega_i}(s_i)$
  \State $A_i \leftarrow
         \mathrm{clip}(\tilde{V}^{-}_{\Omega,i} - Q^{-}_{\mathrm{in},i},\,-1,1)$
         \hfill\Comment{termination advantage is computed at $s_i$}
  \State $\mathcal{H}_i \leftarrow
         {-}\hat{\beta}_i\log\hat{\beta}_i
         - (1{-}\hat{\beta}_i)\log(1{-}\hat{\beta}_i)$
  \State $\mathcal{L}_{\mathrm{term}} \leftarrow
         {-}\dfrac{1}{B}\displaystyle\sum_i\hat{\beta}_i A_i
         + 0.25\,\bar{\hat{\beta}}
         - 0.01\,\bar{\mathcal{H}}
         + 0.5\,\overline{(\hat{\beta}_i - \beta_{\mathrm{target}})^2}$
         \hfill\Comment{Eq.~\eqref{eq:loss_term}}
  \State Zero-grad $\mathcal{O}_{\mathrm{term}}$;\;
         backprop $\mathcal{L}_{\mathrm{term}}$;\;
         clip $\|\nabla\|_2 \leq 0.5$;\;
         step $\mathcal{O}_{\mathrm{term}}$

  \Statex \hspace{\algorithmicindent}
         \textbf{// Priority and target updates}
  \State $p_i \leftarrow
         |\delta^{\mathrm{lo}}_i|
         + 0.5\,|\delta^{\mathrm{hi}}_i| + \varepsilon$
  \State $\theta^{-} \leftarrow
         \tau\,\theta + (1{-}\tau)\,\theta^{-}$

\EndFor

\end{algorithmic}
\end{algorithm}

\phantomsection
\section{Delta Guideline Concordance Function}
\label{appendix:deltaGuidelineConordanceFunction}

% Inlined from appendix/deltaGuidelineConordanceFunction.tex
Following \cite{payani2026guideline}, for each threshold pair $(\tau_\text{SBP},\, \tau_\text{A1C})$,
the intensity-aware guideline action for condition $c \in \{\text{HTN},\text{T2DM}\}$
is defined as:
\[
a_c^* =
\begin{cases}
\mathrm{Intensify}
& \text{if } x_c \ge \tau_c \text{ and } \ell_c < \ell_{\max}, \\[4pt]

\mathrm{De\mbox{-}intensify}
& \text{if } x_c < \tau_c^{\mathrm{floor}} \text{ and } \ell_c > 0, \\[4pt]

\mathrm{Maintain}
& \text{otherwise.}
\end{cases}
\]
where $x_c$ is the patient's current biomarker value,
$\ell_c \in \{0,1,2\}$ is the current medication intensity
(none / single-class / multi-class), $\ell_{\max}=2$ is the
ceiling, and $\tau^{\text{floor}}_c$ is a safety de-escalation
threshold ($90$ for HTN; $6.0$ for T2DM).
The combined concordance advantage is:
\[
  \begin{aligned}
    \Delta(\tau_\text{SBP},\tau_\text{A1C})
      &= \tfrac{1}{2}\,\Delta_\text{HTN}
         + \tfrac{1}{2}\,\Delta_\text{T2DM},\\[4pt]
    \Delta_c
      &= \hat{f}^{\,\pi}_c - \hat{f}^{\,\mu}_c .
  \end{aligned}
\]

where $\hat{f}^{\,\pi}_c$ and $\hat{f}^{\,\mu}_c$ denote the fraction
of transitions in which the agent policy $\pi$ and the clinician
behavior policy $\mu$, respectively, select $a^*_c$.

\phantomsection
\section{Training Curves}
\label{appendix:training_curves}

% Inlined from appendix/training_curves.tex
The HRL agent was trained for 150 epochs with a batch size of 256,
using a discount factor of $\gamma = 0.97$. Separate learning rates
were used for each component: $5\times10^{-5}$ for the encoder and
low-level heads, $2\times10^{-5}$ for the high-level head, and
$5\times10^{-6}$ for the termination heads. The network contains
237,588 trainable parameters. A stratified prioritized replay buffer assigned initial priorities of 0.3 to ``maintain'' transitions and 1.5 to ``active-change'' transitions to counteract the natural class
imbalance in clinical data ($\sim$75\,\% of clinician decisions are maintain actions). Gradient norms are clipped to 1.0 for the encoder and Q-heads and to 0.5 for the termination heads to prevent early-exit collapse. The target network is updated by soft Polyak averaging after every step, and PER importance-sampling weights are annealed to $1.0$ over $200{,}000$ steps. Table~\ref{tab:hyperparameters} lists the values.
\begin{table}
\centering
\caption{Hyperparameters.}
\label{tab:hyperparameters}
\footnotesize
\setlength{\tabcolsep}{5pt}
\renewcommand{\arraystretch}{0.95}
\begin{tabular}{ll@{\hspace{0.75em}\vline\hspace{0.75em}}ll}
\toprule
\textbf{Parameter} & \textbf{Value}
& \textbf{Parameter} & \textbf{Value}\\
\midrule
Discount $\gamma$         & 0.97
  & Hidden dimension          & 256\\
Soft-update $\tau$        & 0.001
  & PER $\alpha$              & 0.6\\
Buffer capacity           & $500{,}000$
  & PER $\beta_{\mathrm{IS}}$ & $0.4\to1.0$\\
Batch size                & 256
  & Target $\bar\beta$        & 0.30\\
Encoder lr                & $5\times10^{-5}$
  & TD clamp                  & $[-10,10]$\\
High-level lr             & $2\times10^{-5}$
  & Grad clip (low/high)      & 1.0\\
Low-level lr              & $5\times10^{-5}$
  & Grad clip (term)          & 0.5\\
Termination lr            & $5\times10^{-6}$
  & Training epochs           & 150\\
\bottomrule
\end{tabular}
\end{table}

\noindent Figure~\ref{fig:training_curves} presents the learning dynamics. All three loss components decreased overall: the low-level intra-option Q-loss
fell from 0.1226 to 0.0587; the high-level option-value loss from $\approx\!0.0013$ to $0.0006$; and the termination loss from 0.1279 to $0.0004$.  Validation action accuracy improved from 66.9\,\% at epoch~1 to a peak of 75.3\,\% at
epoch~70, stabilizing near 74\,\% at convergence.  T2DM and HTN
component accuracies converged to approximately 83\,\% and 82\,\%,
respectively.  The average termination probability $\bar{\beta}$ remained below 0.3.%

\begin{figure}
  \centering
  \includegraphics[width=0.9\textwidth]{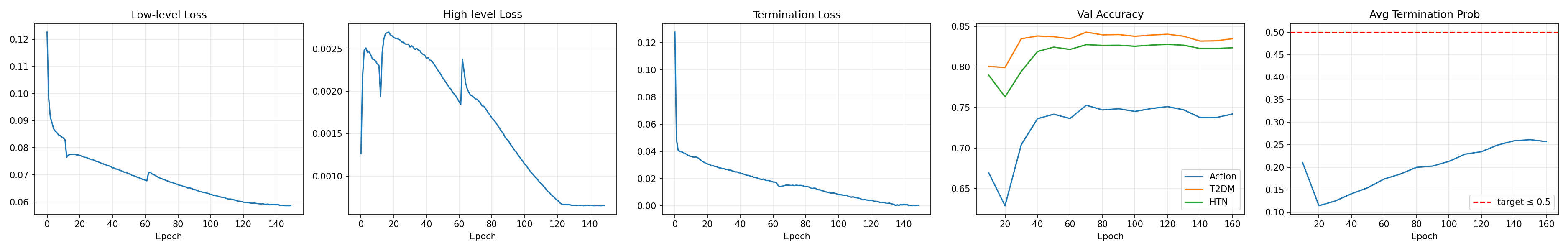}
  \caption{Training dynamics over 150 epochs.  From left to right:
    low-level intra-option Q-loss; high-level option-value loss;
    termination loss; validation accuracy by disease component (Action,
    T2DM, HTN); and average termination probability $\bar{\beta}$
    (dashed red line = target $\leq 0.5$).}
  \label{fig:training_curves}
\end{figure}

\phantomsection
\section{Factored Action Hierarchical Option-Critic (FAHOC) with Action Masking Network Architecture}
\label{appendix:FactoredOption-CriticNetwork}

% Inlined from appendix/FactoredOption-CriticNetwork.tex
\begin{figure}
    \centering
    \includegraphics[width=1\linewidth]{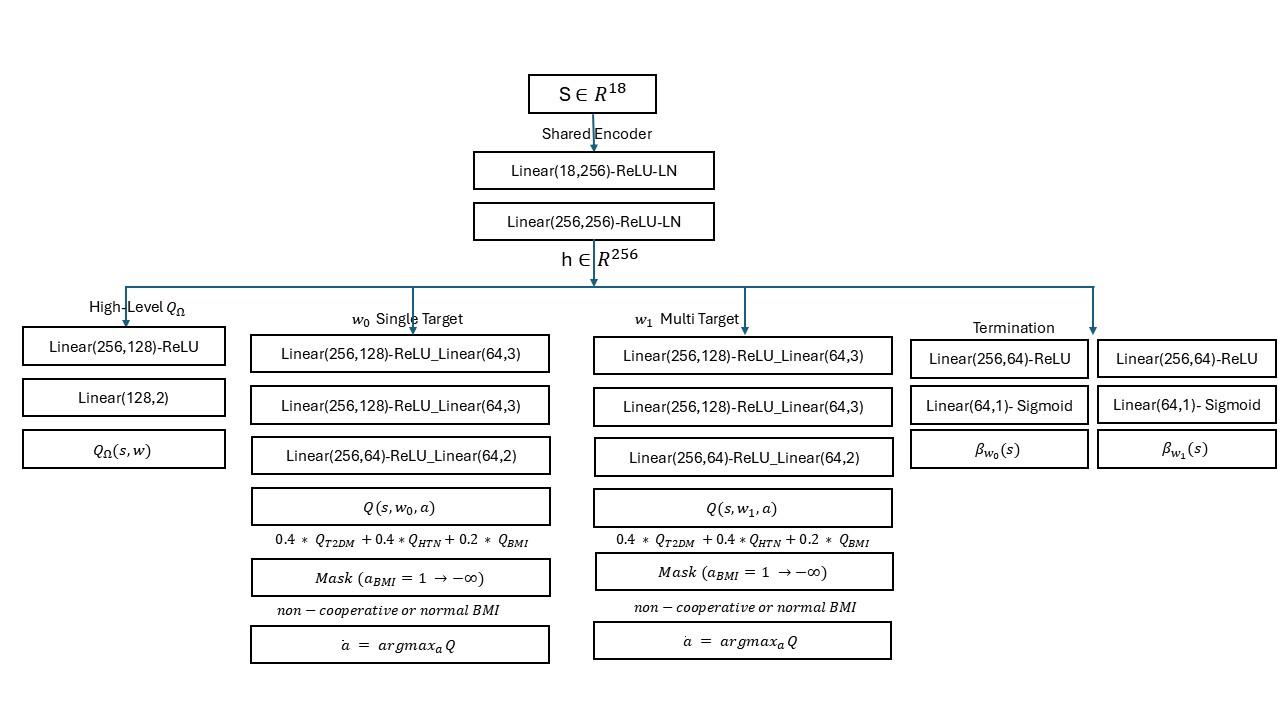}
    \caption{Factored action hierarchical option-critic (FAHOC) framework network architecture.
  The shared encoder maps state $s\in\mathbb{R}^{18}$ to a
  256-dimensional embedding $\mathbf{h}$.
  Four parameter groups branch from $\mathbf{h}$:
  the high-level head outputs $Q_\Omega(s,\omega)$ over the two options
  ($\omega_0$: Single-Target, $\omega_1$: Multi-Target);
  per-option factored heads decompose the intra-option Q-value as
  $Q(s,\omega,a)=0.4\,Q_{\mathrm{t2dm}}+0.4\,Q_{\mathrm{htn}}+0.2\,Q_{\mathrm{bmi}}$
  (Eq.~\ref{eq:factored_q});
  an action mask sets $Q(s,\omega,a){=}{-}\infty$ for $a_{\mathrm{bmi}}{=}1$
  when the patient is non-cooperative or has normal BMI, preventing
  infeasible treatment selections;
  and per-option termination heads output
  $\beta_\omega(s)\in(0,1)$ via sigmoid.
  LN\,=\,Layer Normalization; $\sigma$\,=\,Sigmoid.}
\label{fig:network}
    \label{fig:placeholder}
\end{figure}
The main four components of the network architecture are as follows. All Linear layers are initialized with PyTorch default Kaiming uniform (weights and biases):

\begin{enumerate}[leftmargin=*, itemsep=0pt, parsep=0pt]
  \item Shared encoder,  $f_\theta$: two blocks of
    Linear($d_s$,\,256)--ReLU--LayerNorm, producing
    $\mathbf{h}=f_\theta(s)\in\mathbb{R}^{256}$. The shared representation ensures that the high-level option selection and low-level treatment decisions are grounded in a common patient state embedding, reduce parameter count and enforce representational consistency in all heads. Layers use default scale $\gamma{=}1$ and shift $\delta{=}0$
  \item High-level Q-head, critic $Q_\Omega$:
    Linear(256,\,128)--ReLU--Linear(128,\,2), output
    $Q_\Omega(s,\omega)$ for $\omega\in\{0,1\}$. %
    Formally, $Q_\Omega(s,\omega)$ is the expected discounted return of committing to option $\omega$ at state $s$ and following $\phi_\omega$ until $\beta_\omega$ triggers termination.\\ $Q_\Omega(s,\omega) := \mathbb{E}\!\left[\sum_{t=0}^{\infty} \gamma^t r_t \mid s_0=s,\,\omega_0=\omega,\,\pi_\omega,\,\beta_\omega\right]$.\\ The greedy option is $\omega^*=\arg\max\omega Q_\Omega(s, \omega)$
    
  \item Factored intra-option, Q-heads
    $\{Q^{\omega}_{\mathrm{T2DM}},Q^{\omega}_{\mathrm{HTN}},
    Q^{\omega}_{\mathrm{BMI}}\}_{\omega=0}^{1}$:
    three independent heads Linear(256,\,64)--ReLU--Linear(64,\,$k$)
    with $k\in\{3,3,2\}$ per option, giving

{%
\begin{equation}
  \begin{aligned}
    Q(s,\omega,a)&:=\sum_i\lambda_i\,Q_i^{\omega}(s,a_i)\\
    &=0.4\,Q^{\omega}_{\mathrm{T2DM}}(s,a_{\mathrm{T2DM}})+0.4\,Q^{\omega}_{\mathrm{HTN}}(s,a_{\mathrm{HTN}})+0.2\,Q^{\omega}_{\mathrm{BMI}}(s,a_{\mathrm{BMI}}).
  \end{aligned}
  \label{eq:factored_q_appendix}
\end{equation}
}%

    The weights $\lambda_i \in \{0.4, 0.4, 0.2\}$ reflect the relative
    clinical priority of glycaemic and blood-pressure control over BMI management, and ensure the BMI head receives an independent gradient signal rather than being dominated by stronger immediate reward from medication escalation. Algorithm \ref{alg:factored} \ref{appendix:solutionalgorithms} provides pseudocode of factored action encoding and Q-Value composition. %
\ref{appendix:theorem_1_2_3_4} establishes that the approximation error induced by our structural design is bounded. %
  \item Termination heads,
    $\{\beta_\omega\}_{\omega=0}^{1}$:
    Linear(256,\,64)--ReLU--Linear(64,\,1)--Sigmoid,
    $\beta_\omega(s)\in(0,1)$, initialized near $0.5$. At each state, $\beta_\omega(s)$ governs whether the current clinical strategy should persist or be re-evaluated. A high termination probability triggers option switching via the high-level critic which resembles the clinician's judgment that a treatment strategy has run its course or is under-performing relative to the option-value baseline $V^-_\Omega(s)$
\end{enumerate}

\phantomsection
\section{Proofs}
\label{appendix:theorem_1_2_3_4}

% Inlined from appendix/theorem_1_2_3_4.tex
\subsection{Structural Properties: FAHOC Error Bound }
In this appendix, we present proof of Theorems \ref{thm:1}--\ref{thm:coopdominance}, lemmas, corollaries, and propositions based on assumptions presented in the manuscript \ref{sec:StructuralProperty}.
Throughout, $\mathbb{E}_{s'}[\cdot]$ abbreviates
$\mathbb{E}_{s'\sim P(\cdot\mid s,a)}[\cdot]$, and
$V^{*}(s'):=\max_{a'}Q^{*}(s',a')$,
$V^{*}_{\mathrm{fac}}(s'):=\max_{a'}Q^{*}_{\mathrm{fac}}(s',a')$.

\noindent\textbf{Bounded optimal Q-Function.}
The proof of Lemma \ref{lem:1} is follows:
\begin{proof}
The bound $\|Q^{*}\|_{\infty}\leq R_{\max}/(1-\gamma)$
follows directly from the definition of the Q-function
as a discounted sum of future rewards.
For any policy $\pi$ and any $(s,a)$:

{
\begin{equation}
  \begin{aligned}
    &|Q^{\pi}(s,a)|\\
    &=
    \Bigl|
      \mathbb{E}_{\pi}\Bigl[
        \textstyle\sum_{t=0}^{\infty}\gamma^{t}r(s_{t},a_{t})
        \,\Big|\, s_{0}=s,\,a_{0}=a
      \Bigr]
    \Bigr|\\
    &\;\leq\;
    \sum_{t=0}^{\infty}\gamma^{t}\,
    \underbrace{|r(s_{t},a_{t})|}_{\leq\,R_{\max}}\\
    &\;\leq\;
    R_{\max}\sum_{t=0}^{\infty}\gamma^{t}
    \;=\;
    \frac{R_{\max}}{1-\gamma},
  \end{aligned}
  \label{rem:Vmax:chain}
\end{equation}
}

where the last equality uses the geometric series identity:
for $\gamma\in[0,1)$,
\begin{equation}
  \sum_{t=0}^{\infty}\gamma^{t}
  \;=\;
  \lim_{T\to\infty}\frac{1-\gamma^{T+1}}{1-\gamma}
  \;=\;
  \frac{1}{1-\gamma},
  \label{rem:Vmax:geom}
\end{equation}
since $\gamma^{T+1}\to 0$ as $T\to\infty$.
Taking the supremum over all policies $\pi$ and all
$(s,a)$ in~\eqref{rem:Vmax:chain} yields
$\|Q^{*}\|_{\infty}\leq V^{\max}$.
\end{proof}

\textbf{Global Factorization Error Bound}

The Proof of Theorem \ref{thm:1} is as follows:

\begin{proof}
\noindent
Since $Q^{*}=TQ^{*}$ and
$Q^{*}_{\mathrm{fac}}=T_{\mathrm{fac}}Q^{*}_{\mathrm{fac}}$,
for any $(s,a)$:

{
\begin{equation}
  \begin{aligned}
    &Q^{*}(s,a)-Q^{*}_{\mathrm{fac}}(s,a)\\
    &=
    \underbrace{
      (TQ^{*})(s,a)-(T_{\mathrm{fac}}Q^{*})(s,a)
    }_{\text{Term I: reward mismatch}}\\
    &\quad+
    \underbrace{
      (T_{\mathrm{fac}}Q^{*})(s,a)
      -(T_{\mathrm{fac}}Q^{*}_{\mathrm{fac}})(s,a)
    }_{\text{Term II: Bellman contraction}}.
  \end{aligned}
  \label{pf1:split}
\end{equation}
}

\textit{Term I.}
By \eqref{eq:T_true}--\eqref{eq:T_fac}, the
$\gamma$-weighted future values are identical and cancel:

\begin{equation}
  \begin{aligned}
    (TQ^{*})(s,a)-(T_{\mathrm{fac}}Q^{*})(s,a)
    &\;=\;
    r(s,a)-r_{\mathrm{add}}(s,a)
    \;=\;
    \eta(s,a),\\
    &|\text{Term I}|\leq\varepsilon_{r}.
  \end{aligned}
\end{equation}

\textit{Term II.}
$T_{\mathrm{fac}}$ uses the same $r_{\mathrm{add}}$ for both
arguments, so reward terms cancel:

\begin{equation}
  |\text{Term II}|\leq\gamma\,\mathbb{E}_{s'}\bigl[\,\bigl|V^{*}(s')-V^{*}_{\mathrm{fac}}(s')\bigr|\,\bigr]
  \leq\gamma\,\bigl\|Q^{*}-Q^{*}_{\mathrm{fac}}\bigr\|_{\infty}.
\end{equation}

using $|\max_{x}f(x)-\max_{x}g(x)|\leq\|f-g\|_{\infty}$.
\medskip
\noindent\textit{Combining I \& II.}
Taking absolute values in \eqref{pf1:split} and applying
both bounds:
{%
\begin{equation}
  \bigl|Q^{*}(s,a)-Q^{*}_{\mathrm{fac}}(s,a)\bigr|
  \;\leq\;
  \varepsilon_{r}
  +\gamma\,\left\|Q^{*}-Q^{*}_{\mathrm{fac}}\right\|_{\infty}.
\end{equation}
}%

Taking the supremum over all $(s,a)\in\mathcal{S}\times\mathcal{A}$
and rearranging (using $1-\gamma>0$):

{%
\begin{equation}
  \begin{aligned}
    &\left\|Q^{*}-Q^{*}_{\mathrm{fac}}\right\|_{\infty}
    \;\leq\;
    \varepsilon_{r}
    +\gamma\,\left\|Q^{*}-Q^{*}_{\mathrm{fac}}\right\|_{\infty}\\
    &\Longrightarrow\quad
    \left\|Q^{*}-Q^{*}_{\mathrm{fac}}\right\|_{\infty}
    \;\leq\;
    \frac{\varepsilon_{r}}{1-\gamma}. \Halmos
  \end{aligned}
\end{equation}
}%
\end{proof}

\noindent \textbf{Corollary \ref{cor:fac:option} for option-restricted bound proof is follows:}
\begin{proof}
\noindent Restrict the proof of Theorem \ref{thm:1} to the subdomain
$\mathcal{I}_{\omega}\times\mathcal{A}_{\omega}$ and replace
$\varepsilon_{r}$ with the tighter quantity
$\varepsilon_{r}^{\omega}$.
All algebraic steps carry through verbatim within this subdomain. \Halmos
\end{proof}
\textbf{Theorem \ref{thm:2} proof under Assumptions \ref{ass:1}--\ref{ass:3}, is provided as follows:}

\begin{proof}
\noindent
Fix $(s,a)\in\mathcal{I}_{\omega}\times\mathcal{A}_{\omega}$
and let $\delta^{k}:=a^{k}-\bar{a}^{k}_{\omega}$.
Apply the multivariate integral Taylor remainder
(\cite{Apostol1974}%
) to
$r(s,\cdot)$ around $\bar{a}_{\omega}$:

\begin{equation}
\begin{aligned}
&r(s,a)
 = r(s,\bar a_\omega)
 + \sum_k \frac{\partial r}{\partial a^k}\Big|_{\bar a}\delta^k\\
&\quad
 + \frac12\sum_k \frac{\partial^2 r}{\partial(a^k)^2}\Big|_{\bar a}(\delta^k)^2\\
&\quad
 + 2\!\int_0^1\!(1-t)
   \sum_{k<\ell}
   \frac{\partial^2 r}{\partial a^k\partial a^\ell}\Big|_{\bar a+t\delta}
   \delta^k\delta^\ell\,dt.
\end{aligned}
\label{pf2:taylor}
\end{equation}

\noindent The additive reward $r_{\mathrm{add}}(s,a)
=\sum_{k}r_{k}(s,a^{k})$ 
reproduces the constant, all linear, and all
per-domain quadratic terms in \eqref{pf2:taylor};
these cancel in $\eta=r-r_{\mathrm{add}}$.
Only the cross-domain integral remains:
\begin{equation}
  \eta(s,a)
  \;=\;
  2\int_{0}^{1}\!(1-t)
  \sum_{k<\ell}
  \frac{\partial^{2}r}{\partial a^{k}\partial a^{\ell}}
  \Big|_{\bar{a}+t\delta}
  \delta^{k}\delta^{\ell}\,\mathrm{d}t.
  \label{pf2:eta}
\end{equation}
Taking absolute values, applying Assumption \ref{ass:3}
to bound the Hessian, using
$|\delta^{k}||\delta^{\ell}|\leq\Delta_{\omega}^{2}$,
and evaluating $2\int_{0}^{1}(1-t)\,\mathrm{d}t=1$:
  $|\eta(s,a)|
  \;\leq\;
  \sum_{k<\ell}\Gamma_{k\ell}\cdot\Delta_{\omega}^{2}
  \;=\;
  \Gamma_{\mathrm{all}}\cdot\Delta_{\omega}^{2}$.
Taking the sup over
$\mathcal{I}_{\omega}\times\mathcal{A}_{\omega}$
gives \eqref{eq:thm2}; substituting into
Theorem \ref{thm:1} gives \eqref{eq:thm2_Q}. \Halmos
\end{proof}
\noindent \textbf{Proof of Theorem \ref{thm:3} is provided as follows:}
\begin{proof}
\noindent
Both $Q^{*,\omega}_{\mathrm{fac}}$ and $\hat{Q}_{F}^{\,\omega}$
are fixed points of operators using $r_{\mathrm{add}}$
(equations~\eqref{eq:T_fac} and~\eqref{eq:T_arch} respectively),
so the reward terms cancel exactly in their Bellman difference.
For any $(s,a)\in\mathcal{I}_{\omega}\times\mathcal{A}_{\omega}$,
applying the fixed-point identities
$Q^{*,\omega}_{\mathrm{fac}}=T_{\mathrm{fac}}Q^{*,\omega}_{\mathrm{fac}}$
and $\hat{Q}_{F}^{\,\omega}\approx T_{\mathcal{F}}\hat{Q}_{F}^{\,\omega}$
and adding and subtracting
$\gamma\,\mathbb{E}_{s'\sim P(\cdot|s,a)}[\hat{V}_{F}^{\,\omega}(s')]$:

{
\begin{equation}
  \begin{aligned}
    &Q^{*,\omega}_{\mathrm{fac}}(s,a)-\hat{Q}_{F}^{\,\omega}(s,a)\\
    &=
    \underbrace{
      \gamma\,\mathbb{E}_{s'}
      \Bigl[
        V^{*,\omega}_{\mathrm{fac}}(s')-\hat{V}_{F}^{\,\omega}(s')
      \Bigr]
    }_{\text{Term I: propagation of value error}}\\
    &\quad+
    \underbrace{
      \xi(s,a)
    }_{\text{Term II: architecture residual}},
  \end{aligned}
  \label{pf3:split}
\end{equation}
}

where $V^{*,\omega}_{\mathrm{fac}}(s'):=\max_{a'}Q^{*,\omega}_{\mathrm{fac}}(s',a')$,
$\hat{V}_{F}^{\,\omega}(s'):=\max_{a'}\hat{Q}_{F}^{\,\omega}(s',a')$,
and the \emph{architecture residual}
\begin{equation}
  \xi(s,a)
  \;:=\;
  \gamma\,\mathbb{E}_{s'\sim P(\cdot|s,a)}\!
  \bigl[\hat{V}_{F}^{\,\omega}(s')\bigr]
  -\hat{Q}_{F}^{\,\omega}(s,a)
  \label{eq:xi_def}
\end{equation}
measures how far $\hat{Q}_{F}^{\,\omega}$ deviates from being a
fixed point of the \emph{unconstrained} operator $T_{\mathrm{fac}}$,
arising solely because the additive architecture
$\mathcal{F}=\{\sum_{k}w_{k}f_{k}(s,a^{k})\}$
cannot represent cross-action interactions in the value function
induced by the transition dynamics $P(\cdot|s,a)$.
\medskip
\noindent\textit{Step 1: Taylor expansion of $\xi(s,a)$.}\\
Let $\delta^{k}:=a^{k}-\bar{a}^{k}_{\omega}$ and
$g(s'):=\hat{V}_{F}^{\,\omega}(s')$.
By Assumption~\ref{ass:4}(ii), applying the multivariate
integral Taylor remainder \citep{Apostol1974} to
$a\mapsto\mathbb{E}_{s'\sim\tilde{P}(\cdot|s,a)}[g(s')]$
around $\bar{a}_{\omega}$ and noting that the additive
architecture $\hat{Q}_{F}^{\,\omega}(s,a)=\sum_{k}w_{k}f_{k}(s,a^{k})$
exactly reproduces all constant, per-domain linear, and
per-domain quadratic terms in the expansion (each head
$f_{k}$ depends only on $a^{k}$, so no $\delta^{k}\delta^{\ell}$
cross-term with $k\neq\ell$ appears), these terms cancel
in $\xi(s,a)$, leaving only the cross-domain remainder. %
Writing
$H_{k\ell}(a):=\frac{\partial^{2}}{\partial a^{k}\partial a^{\ell}}
\,\mathbb{E}_{s'\sim\tilde{P}(\cdot|s,a)}[g(s')]$ for the
cross-derivative,:

\begin{equation}
\begin{aligned}
&\xi(s,a)
\;=\;
2\gamma\int_{0}^{1}\!(1-t)\,S_{\delta}(t)\,\mathrm{d}t,\\
&S_{\delta}(t)
\;:=\;
\sum_{k<\ell}H_{k\ell}(\bar{a}+t\delta)\,\delta^{k}\delta^{\ell}.
\end{aligned}
\label{pf3:xi}
\end{equation}

\medskip
\noindent\textit{Step 2: Bound $|\xi(s,a)|$.}\\
Normalize $g$ by its sup-norm: define
$\tilde{g}:=g/\|g\|_{\infty}$, so $\tilde{g}:\mathcal{S}\to[-1,1]$.
Since $g(s')=\hat{V}_{F}^{\,\omega}(s')=\max_{a'}\hat{Q}_{F}^{\,\omega}(s',a')$
and $\|\hat{Q}_{F}^{\,\omega}\|_{\infty}\leq V^{\max}$
(Assumption \ref{ass:1}), we have $\|g\|_{\infty}\leq V^{\max}$.
Rewriting \eqref{pf3:xi} in terms of $\tilde{g}$ and applying
the definition of $\Phi_{k\ell}$ (equation~\eqref{eq:Phi_kl})
from Assumption \ref{ass:4}:

Rewriting \eqref{pf3:xi} in terms of $\tilde{g}$ (denote the
corresponding sum by $\tilde{S}_{\delta}(t)$) and applying
the definition of $\Phi_{k\ell}$ (equation~\eqref{eq:Phi_kl})
from Assumption \ref{ass:4} together with
$|\delta^{k}||\delta^{\ell}|\leq\Delta_{\omega}^{2}$, we have
$|\tilde{S}_{\delta}(t)|\leq\sum_{k<\ell}\Phi_{k\ell}\Delta_{\omega}^{2}
=\Phi_{\mathrm{all}}\Delta_{\omega}^{2}$; hence
{

\begin{equation}
  \begin{aligned}
    &|\xi(s,a)|
    \leq
    2\gamma\,\|g\|_{\infty}
    \int_{0}^{1}\!(1-t)\,\bigl|\tilde{S}_{\delta}(t)\bigr|\,\mathrm{d}t\\
    &\quad\leq
    2\gamma\, V^{\max}\,\Phi_{\mathrm{all}}\,\Delta_{\omega}^{2}
    \underbrace{
      \int_{0}^{1}\!(1-t)\,\mathrm{d}t
    }_{=\,1/2}\\
    &\quad=
    \gamma\, V^{\max}\,\Phi_{\mathrm{all}}\,\Delta_{\omega}^{2}.
  \end{aligned}
  \label{pf3:xi_bound}
\end{equation}
}

\medskip
\noindent\textit{Step 3: Banach fixed-point contraction.}\\
Return to \eqref{pf3:split} and bound Term I.
Applying the standard max-inequality
$|\max_{a'}f(s',a')-\max_{a'}g(s',a')|\leq\|f-g\|_{\infty}$
pointwise in $s'$:
\begin{equation}
\begin{aligned}
   |\text{Term I}|
  &\;\leq\;
  \gamma\,\mathbb{E}_{s'\sim P(\cdot|s,a)}\!
  \Bigl[
    \bigl\|Q^{*,\omega}_{\mathrm{fac}}-\hat{Q}_{F}^{\,\omega}\bigr\|_{\infty}
  \Bigr]\\
  &\;=\;
  \gamma\,\bigl\|Q^{*,\omega}_{\mathrm{fac}}-\hat{Q}_{F}^{\,\omega}
  \bigr\|_{\infty}.
 \end{aligned}
  \label{pf3:termI}
\end{equation}
Combining \eqref{pf3:split}, \eqref{pf3:termI},
and \eqref{pf3:xi_bound}, then taking the supremum
over all $(s,a)\in\mathcal{I}_{\omega}\times\mathcal{A}_{\omega}$:
\begin{equation}
\begin{aligned}
  \bigl\|Q^{*,\omega}_{\mathrm{fac}}-\hat{Q}_{F}^{\,\omega}\bigr\|_{\infty}
  &\;\leq\;
  \gamma\,\bigl\|Q^{*,\omega}_{\mathrm{fac}}-\hat{Q}_{F}^{\,\omega}\bigr\|_{\infty}\\
  &\quad+\;
  \gamma\cdot V^{\max}\cdot\Phi_{\mathrm{all}}\cdot\Delta_{\omega}^{2}.
\end{aligned}
  \label{pf3:contraction}
\end{equation}
Since $\gamma\in[0,1)$, rearranging \eqref{pf3:contraction}
with $1-\gamma>0$:
\begin{equation}
\begin{aligned}
  (1-\gamma)\,\bigl\|Q^{*,\omega}_{\mathrm{fac}}-\hat{Q}_{F}^{\,\omega}
  \bigr\|_{\infty}
  \;\leq\;
  \gamma\cdot V^{\max}\cdot\Phi_{\mathrm{all}}\cdot\Delta_{\omega}^{2}
  \;\;\Longrightarrow\;\;\\
  \bigl\|Q^{*,\omega}_{\mathrm{fac}}-\hat{Q}_{F}^{\,\omega}\bigr\|_{\infty}
  \;\leq\;
  \frac{\gamma\cdot V^{\max}\cdot\Phi_{\mathrm{all}}
        \cdot\Delta_{\omega}^{2}}{2(1-\gamma)},
\end{aligned}
  \label{pf3:final}
\end{equation}
where the factor $\tfrac{1}{2}$ on the right-hand side of \eqref{pf3:final} originates from the integral
$\int_{0}^{1}(1-t)\,\mathrm{d}t=\tfrac{1}{2}$ evaluated
in Step~2 \eqref{pf3:xi_bound}, which reduces the
cross-domain coefficient by half.
This establishes \eqref{eq:thm3}.
\end{proof} \Halmos \medskip

\textbf{Total error bound.}
Proof of Corollary \ref{cor:thm3_total} is follows:

\begin{proof}
\noindent\textit{Combine bound \eqref{eq:thm3_total}.}
By the triangle inequality applied to
\eqref{eq:error_decomp}:
{
\begin{equation}
  \begin{aligned}
    &\bigl\|Q^{*,\omega}-\hat{Q}_{F}^{\,\omega}\bigr\|_{\infty}\\
    &\;\leq\;
    \underbrace{
      \bigl\|Q^{*,\omega}-Q^{*,\omega}_{\mathrm{fac}}\bigr\|_{\infty}
    }_{\text{by \eqref{eq:thm2_Q}}}
    \;+\;
    \underbrace{
      \bigl\|Q^{*,\omega}_{\mathrm{fac}}-\hat{Q}_{F}^{\,\omega}\bigr\|_{\infty}
    }_{\text{by \eqref{eq:thm3}}}\\
    &\;\leq\;
    \frac{\Gamma_{\mathrm{all}}\cdot\Delta_{\omega}^{2}}{1-\gamma}
    \;+\;
    \frac{\gamma\cdot V^{\max}\cdot\Phi_{\mathrm{all}}
          \cdot\Delta_{\omega}^{2}}{2(1-\gamma)},
  \end{aligned}
  \label{pf3:combined}
\end{equation}
}
which is \eqref{eq:thm3_total}. \Halmos
\end{proof}
\subsection{Cooperation Value Dominance}
In this section we present proof of Theorems \ref{thm:coopdominance} and Proposition\ref{prop:coop:strict} based on assumptions presented in the manuscript. \label{sec:structural}
\noindent \textbf{The proof of Theorem \ref{thm:coopdominance} is provided as follows:}
\begin{proof}
\noindent Let $\Pi^{c} := \{\pi : \pi(s')\in\mathcal{A}_{c}(s')\;\forall
s'\in\mathcal{S}\}$ be the set of policies feasible under
cooperation status $c$.
We have: $\mathcal{A}_{0}(s')\subseteq\mathcal{A}_{1}(s')$ for every $s'$,
so any policy in $\Pi^{0}$ is also in $\Pi^{1}$, i.e.\
$\Pi^{0}\subseteq\Pi^{1}$.
Therefore, taking the supremum in \eqref{app:coop:Vstar}
over a larger set:

{%
\begin{equation}
\begin{aligned}
  V^{*}(s;\,1)
  &= \sup_{\pi\in\Pi^{1}} V^{\pi}(s)
  \\
  &\geq \sup_{\pi\in\Pi^{0}} V^{\pi}(s)
  = V^{*}(s;\,0). \Halmos
\end{aligned}
\end{equation}
}%

\end{proof}

\subsubsection{Conditions for Strict Superiority}
\noindent \textbf{Proposition \ref{prop:coop:strict} proof:}
\begin{proof}
\noindent Let $\pi^{0,*}\in\Pi^{0}$
be optimal under $c=0$ and let
$p^{*}>0$ be the probability of reaching $s^{*}$ at
step $t\geq 0$ under $\pi^{0,*}$.
Define a modified policy $\tilde\pi\in\Pi^{1}$ that agrees with
$\pi^{0,*}$ everywhere except at $s^{*}$, where it selects
$a^{+}$ from Definition \ref{def:coop:bmi_val}.
Then, applying the performance-difference identity:

\begin{equation}
  \begin{aligned}
    V^{\tilde\pi}(s_{0})
    &\;=\;
    V^{\pi^{0,*}}(s_{0})
    + p^{*}\,\gamma^{t}\,\Xi(s^{*})\\
    &\quad+ O(\gamma^{t+1}),\\
    \Xi(s^{*})
    &\;:=\;
    \mathbb{E}_{s'\sim P(\cdot|s^{*},a^{+})}
      \bigl[V^{*}(s';\,1)\bigr]\\
    &\quad-
    \mathbb{E}_{s'\sim P(\cdot|s^{*},a^{-})}
      \bigl[V^{*}(s';\,1)\bigr].
  \end{aligned}
\end{equation}
Since $p^{*}>0$, $\gamma^{t}>0$, and condition
\eqref{app:coop:bmi_val_cond} yields $\Xi(s^{*})>0$,
we have $V^{\tilde\pi}(s_{0})>V^{*}(s_{0};\,0)$.

As $\tilde\pi\in\Pi^{1}$:
  $V^{*}(s_{0};\,1)
  \;\geq\;
  V^{\tilde\pi}(s_{0})
  \;>\;
  V^{*}(s_{0};\,0)$. \Halmos
\end{proof}

\phantomsection
\section{Numerical Verification of the Global Factorization Bound}
\label{appendix:numerical_bound}

% Inlined from appendix/numerical_bound.tex
\subsection{Estimating the Transition Cross-Interaction Coefficient}
\label{app:phi_estimation}

The factored Bellman operator decomposes actions into three domains:
$a^{\mathrm{T2DM}}\!\in\!\{-1,0,1\}$, $a^{\mathrm{HTN}}\!\in\!\{-1,0,1\}$,
and $a^{\mathrm{BMI}}\!\in\!\{0,1\}$.
We quantify $\Phi_{\mathrm{all}}$ via the second-order mixed finite
difference applied to the empirical transition
operator~\citep{LeVeque2007,tang2022leveraging}.
For each domain pair $(k,\ell)$ with $k<\ell$ and base action
$a\in\mathcal{A}$, the estimator is:
\begin{equation}
\hat{\Phi}_{k\ell}(a,d)
\;=\;
\frac{|\,(\hat{E}_{k\ell}-\hat{E}_{k0}-\hat{E}_{0\ell}+\hat{E}_{00})_{d}\,|}
     {h_{k}\,h_{\ell}},
\label{eq:fd_estimator}
\end{equation}
where $\hat{E}_{00},\hat{E}_{k0},\hat{E}_{0\ell},\hat{E}_{k\ell}$
are empirical mean next states under actions
$a,\,a{+}e_{k},\,a{+}e_{\ell},\,a{+}e_{k}{+}e_{\ell}$ respectively,
$d$ ranges over clinical dimensions
$\mathcal{D}_{s}=\{\mathrm{SBP},\mathrm{A1C},\mathrm{BMI},\mathrm{eGFR}\}$,
and step sizes are $h_{\mathrm{T2DM}}=h_{\mathrm{HTN}}=1$,
$h_{\mathrm{BMI}}=0.5$ (increments clamped to valid ranges).
Setting $\hat{\Phi}_{k\ell}=\sup_{a,d}|\hat{\Phi}_{k\ell}(a,d)|$
and summing over pairs gives
$\hat{\Phi}_{\mathrm{all}}=\sum_{k<\ell}\hat{\Phi}_{k\ell}$.
Since $\mathcal{A}$ is a finite discrete grid, all four action
vertices in~\eqref{eq:fd_estimator} are observed; the estimator
is an exact finite difference with error $O_{p}(n^{-1/2})$,
and no discretization bias arises beyond empirical mean
estimation~(Proposition~\ref{prop:fd_consistent}).
The global estimator marginalises over all 414,880 training
transitions; a state-binned variant repeats the procedure within
$K=30$ $k$-means clusters (cells with fewer than 20 transitions
skipped) and reports the bin-level supremum.
Uncertainty is quantified by $B=500$ bootstrap resamples
drawing 80\% of transitions with replacement, yielding
percentile 95\% CIs for each pair.

\begin{proposition}[Finite-difference consistency]
\label{prop:fd_consistent}
Under Assumption~\ref{ass:4}, for fixed $s,a$ and bounded
measurable $f:\mathcal{S}\to[-1,1]$, let
$h(a):=\mathbb{E}_{s'\sim\tilde{P}(\cdot|s,a)}[f(s')]$.
Then $\hat{\Phi}_{k\ell}(s,a)=
\partial^{2}h/\partial a^{k}\partial a^{\ell}|_{a}
+O(h_{k}+h_{\ell})+O_{p}(n^{-1/2})$.
\end{proposition}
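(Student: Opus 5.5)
The plan is to split the estimation error of $\hat{\Phi}_{k\ell}(s,a)$ into a deterministic discretization bias and a stochastic sampling error, and to control each separately. Fix $s$, $a$, the pair $k\neq\ell$, and the test function $f$ (for the clinical estimator, $f$ is the normalized coordinate projection onto dimension $d$, clipped into $[-1,1]$). Write $h(\cdot)=\mathbb{E}_{s'\sim\tilde P(\cdot|s,\cdot)}[f(s')]$. Define the population mixed difference
\[
D_{k\ell}:=h(a+h_ke_k+h_\ell e_\ell)-h(a+h_ke_k)-h(a+h_\ell e_\ell)+h(a).
\]
By Assumption~\ref{ass:4}(i), $h$ agrees with the true kernel's expectation at the four grid vertices, so these four values are exactly what the empirical means $\hat E_{00},\hat E_{k0},\hat E_{0\ell},\hat E_{k\ell}$ estimate. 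We then have
\[
\hat\Phi_{k\ell}-\partial^2_{k\ell}h(a)
=\Bigl[\tfrac{\hat D_{k\ell}}{h_kh_\ell}-\tfrac{D_{k\ell}}{h_kh_\ell}\Bigr]
+\Bigl[\tfrac{D_{k\ell}}{h_kh_\ell}-\partial^2_{k\ell}h(a)\Bigr],
\]
and I would handle the absolute value at the end, using the fact that $|\cdot|$ is $1$-Lipschitz.

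\textbf{Bias term.} By the fundamental theorem of calculus applied twice, which is legitimate on $\mathcal U$ because of Assumption~\ref{ass:4}(ii),
\[
D_{k\ell}=\int_0^{h_k}\!\!\int_0^{h_\ell}\partial^2_{k\ell}h(a+ue_k+ve_\ell)\,dv\,du .
\]
Hence $D_{k\ell}/(h_kh_\ell)$ is an average of the cross-derivative over the rectangle $[0,h_k]\times[0,h_\ell]$, and it differs from $\partial^2_{k\ell}h(a)$ by at most the oscillation of $\partial^2_{k\ell}h$ over that rectangle. To obtain the stated rate $O(h_k+h_\ell)$, I need the cross-derivative to be Lipschitz in $a$, uniformly in $s$. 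This follows if the $C^2$ extension is upgraded to $C^3$, or to $C^{2,1}$ on a compact neighbourhood inside $\mathcal U$. I would state this explicitly as the reading of Assumption~\ref{ass:4} under which the rate holds. Under bare $C^2$ the same argument still gives an $o(1)$ bias through the modulus of continuity.

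\textbf{Sampling term.} Each empirical mean is an average of $n_v$ i.i.d.\ draws of $f(s')\in[-1,1]$ at its vertex. This holds conditionally on the state bin, or marginally in the global variant. Hoeffding's inequality, or simply Chebyshev with variance at most $1$, gives each mean an error of $O_p(n_v^{-1/2})$. The signed combination $\hat D_{k\ell}-D_{k\ell}$ of four such errors is then $O_p(\min_v n_v^{-1/2})=O_p(n^{-1/2})$, provided every vertex receives a positive fraction of the $n$ samples. Dividing by the fixed constant $h_kh_\ell$ preserves the rate.

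\textbf{Main obstacle.} I expect the bias step to be the main difficulty. The $O(h_k+h_\ell)$ rate requires more smoothness than Assumption~\ref{ass:4} literally provides. Moreover, the steps $h_{\mathrm{T2DM}}=h_{\mathrm{HTN}}=1$ and $h_{\mathrm{BMI}}=0.5$ are fixed by the action grid rather than sent to zero, so the bias is a genuine constant and not a vanishing term. I would therefore present the result as an exact finite difference of the interpolant plus an explicit remainder bounded by $\tfrac12 L_3(h_k+h_\ell)$, where $L_3$ is the Lipschitz constant of the cross-derivative. I would also remark that taking the supremum over $a$ and $d$ preserves both error orders, because the grid and the set of dimensions are finite.
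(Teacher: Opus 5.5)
You follow the paper's route. Both arguments split the error into a deterministic mixed-difference bias plus an $O_p(n^{-1/2})$ term from replacing the four expectations by empirical means. The difference is in how you control the bias, and your handling is the sound one.

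The paper applies Taylor's theorem twice and asserts a remainder $O(h_k^2h_\ell+h_kh_\ell^2)$. That is a third-order term: under $C^3$ it gives the leading bias $\tfrac12(h_k\partial^3_{kk\ell}h+h_\ell\partial^3_{k\ell\ell}h)$ of the quotient. It does not follow from the $C^2$ hypothesis of Assumption~\ref{ass:4}. Take $g(x,y)=\phi(x+y)-\phi(x)-\phi(y)$ with $\phi(u)=\tfrac{4}{15}|u|^{5/2}$; after scaling, this is of the form $\mathbb{E}[f(s')]$ on a two-point state space. Then $\partial^2_{xy}g(0)=0$, while the quotient with both steps equal to $\epsilon$ is $\tfrac{4}{15}(2^{5/2}-2)\,\epsilon^{1/2}$, so the $O(\epsilon)$ rate fails.

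Your identity $D_{k\ell}=\int_0^{h_k}\!\int_0^{h_\ell}\partial^2_{k\ell}h(a+ue_k+ve_\ell)\,dv\,du$ involves only the cross derivative. So Lipschitz continuity of $\partial^2_{k\ell}h$ alone suffices, it gives the explicit remainder $\tfrac12L_3(h_k+h_\ell)$, and it degrades gracefully to $o(1)$ under bare $C^2$. Your remark that the grid fixes the steps also correctly flags a nonvanishing bias, which the appendix's ``no discretization bias'' claim glosses over.

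Two smaller corrections. First, pooled global or binned means are not draws from $\tilde P(\cdot\mid s,v)$ at the fixed $s$, because the state mix differs across the four action vertices (actions are chosen from states). So the $O_p(n^{-1/2})$ step, yours and the paper's, rests on a vertex-wise i.i.d.\ model at fixed $s$ that should be stated as an idealization. Its error is governed by $(\min_v n_v)^{-1/2}$, not $\min_v n_v^{-1/2}$. Second, identifying the vertex values through Assumption~\ref{ass:4}(i) needs $a+h_ke_k\in\mathcal{A}$, which fails for $h_{\mathrm{BMI}}=0.5$ on $\{0,1\}$; the paper's estimator actually samples at $a+e_k$ and divides by $0.5$.
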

\begin{proof}
Since $h$ is $C^{2}$ on $\mathcal{U}$ (Assumption~\ref{ass:4}(ii)),
applying the univariate Taylor theorem twice---first in direction
$e_{k}$ at $a$ and at $a+h_{\ell}e_{\ell}$, then subtracting and
expanding the resulting difference of first derivatives in direction
$e_{\ell}$---gives
$h(a{+}h_{k}e_{k}{+}h_{\ell}e_{\ell})-h(a{+}h_{k}e_{k})
 -h(a{+}h_{\ell}e_{\ell})+h(a)
 =h_{k}h_{\ell}\,\partial^{2}h/\partial a^{k}\partial a^{\ell}|_{a}
 +O(h_{k}^{2}h_{\ell}+h_{k}h_{\ell}^{2})$.
Dividing by $h_{k}h_{\ell}$ and replacing true expectations with
empirical means adds the $O_{p}(n^{-1/2})$ CLT term. \Halmos
\end{proof}

\paragraph{Results.}
Table~\ref{tab:phi_results} summarises the estimates.
The worst-case dimension is BMI for all three pairs, and the
worst-case base actions are
$(a_{\mathrm{T2DM}},a_{\mathrm{HTN}},a_{\mathrm{BMI}})
\in\{(1,1,1),(0,2,0),(2,1,0)\}$.
Bootstrap CIs are stable (coefficient of variation below 12\%);
the large global-to-binned gap reflects finite-sample variance
within bins rather than genuine state-dependence.

\begin{table}[h]
\footnotesize\centering
\caption{Empirical $\hat{\Phi}_{k\ell}$ from 414,880 training
  transitions ($B=500$ bootstrap resamples, 80\% subsample).}
\label{tab:phi_results}
\begin{tabular}{lcccc}
\toprule
Domain Pair & Global & Binned & Bootstrap mean & 95\% CI \\
\midrule
T2DM $\times$ HTN & 0.2629 & 0.6948 & 0.2758 & [0.246, 0.311] \\
T2DM $\times$ BMI & 0.3351 & 1.2150 & 0.3739 & [0.304, 0.477] \\
HTN  $\times$ BMI & 0.4275 & 1.0531 & 0.4380 & [0.361, 0.510] \\
\midrule
$\hat{\Phi}_{\mathrm{all}}$ & \textbf{1.0255} & 2.9630 & 1.0877 & [0.911, 1.298] \\
\bottomrule
\end{tabular}
\end{table}

\subsection{Computing the Bound}
\label{app:bound_computation}

With $\hat{\Phi}_{\mathrm{all}}=1.0255$, $\gamma=0.97$, and
$V_{\max}=R_{\max}/(1-\gamma)=33.33$, Theorem~\ref{thm:3} gives:
\begin{equation}
\bigl\|Q^{*,\omega}-\hat{Q}_{F}^{\,\omega}\bigr\|_{\infty}
\;\leq\;
\frac{0.97\times 33.33\times 1.025}{2\times 0.03}
\;\approx\; 552.6,
\label{eq:phi_bound}
\end{equation}
with bootstrap 95\% CI $[490.3,\,700.5]$.
This is a worst-case bound operating at the supremum over all
state--action pairs; the reward-level bound
$\varepsilon_{r}/(1-\gamma)$ from Theorem~\ref{thm:1} is
complementary and tighter whenever $\varepsilon_{r}$ is small,
since in our setting $\Gamma_{\mathrm{all}}=0$ and the
dominant error is architectural rather than reward-level.

\phantomsection
\section{Strict superiority criteria level count}
\label{appendix:strict_superiority_cooporative_patients}

% Inlined from appendix/strict_superiority_cooporative_patients.tex
BMI-beneficial states satisfy three criteria simultaneously: cooperative
($c=1$), overweight/obese ($b(s^{*})>0$, BMI$\geq25$\,kg/m$^{2}$), and
uncontrolled (A1C$>7.0$\% or SBP$>130$\,mmHg), so that $p'_{t}$
\eqref{eq:bonus} and $\Delta Q_{t}$ \eqref{eq:delta_qaly} jointly
satisfy condition~\eqref{app:coop:bmi_val_cond}.

Table~\ref{tab:bmi_beneficial} reports counts verified on all three
splits from unscaled clinical values recovered via the fitted
preprocessor. The full BMI-beneficial set C1$\cap$C2$\cap$C3 covers
190,904 transitions (32.1\%) across 18,761 patients (39.7\%), stable
at 32.1\%, 32.7\%, and 31.5\% across splits, confirming a
structurally stable subpopulation. The mean reward within
C1$\cap$C2$\cap$C3 (0.028--0.033) is strictly positive and well above
the unconditional mean ($-$0.015), with positive-reward transitions
rising from 20.5\%--20.6\% overall to 28.3\%--28.6\% in the
beneficial subset, confirming condition~\eqref{app:coop:bmi_val_cond}
empirically and establishing that
Proposition~\ref{prop:coop:strict} holds across nearly 40\% of
patients and one-third of all encounters.

\begin{table}[ht]
\centering
\caption{BMI-beneficial state counts. C1: cooperative ($c=1$);
         C2: overweight/obese (BMI$\geq\!25$\,kg/m$^2$);
         C3: uncontrolled (A1C$>\!7.0$\% or SBP$>\!130$\,mmHg).}
\label{tab:bmi_beneficial}
\setlength{\tabcolsep}{4pt}
\renewcommand{\arraystretch}{0.82}
\footnotesize
\begin{tabular}{lrrrr}
\toprule
 & \textbf{Train} & \textbf{Val} & \textbf{Test} & \textbf{All} \\
\midrule
Transitions / Patients
  & 414,880 / 33,108 & 90,231 / 7,095 & 89,556 / 7,095 & 594,667 / 47,298 \\
\midrule
C1 cooperative      & 264,587\,(63.8\%) & 58,567\,(64.9\%) & 56,534\,(63.1\%) & 379,688\,(63.9\%) \\
C2 overweight/obese & 334,992\,(80.7\%) & 72,986\,(80.9\%) & 71,612\,(80.0\%) & 479,590\,(80.7\%) \\
C3 uncontrolled     & 298,369\,(71.9\%) & 64,674\,(71.7\%) & 64,726\,(72.3\%) & 427,769\,(71.9\%) \\
\midrule
C1$\cap$C2          & 184,765\,(44.5\%) & 41,368\,(45.8\%) & 38,628\,(43.1\%) & 264,761\,(44.5\%) \\
C1$\cap$C2$\cap$C3  & 133,139\,(32.1\%) & 29,520\,(32.7\%) & 28,245\,(31.5\%) & 190,904\,(32.1\%) \\
\midrule
Patients C1$\cap$C2         & 14,212\,(42.9\%) & 3,140\,(44.3\%) & 3,012\,(42.5\%) & 20,364\,(43.1\%) \\
Patients C1$\cap$C2$\cap$C3 & 13,076\,(39.5\%) & 2,898\,(40.8\%) & 2,787\,(39.3\%) & 18,761\,(39.7\%) \\
\midrule
Mean reward $\mid$ C1$\cap$C2$\cap$C3      & 0.0302 & 0.0327 & 0.0287 & --- \\
\% positive reward $\mid$ C1$\cap$C2$\cap$C3 & 28.4\% & 28.6\% & 28.3\% & --- \\
\bottomrule
\end{tabular}
\end{table}

\end{document}